\def\eqref#1{equation~\ref{#1}}
\def\1{\bm{1}}
\DeclareMathAlphabet{\mathsfit}{\encodingdefault}{\sfdefault}{m}{sl}
\SetMathAlphabet{\mathsfit}{bold}{\encodingdefault}{\sfdefault}{bx}{n}
\DeclareMathOperator*{\argmin}{arg\,min}
\theoremstyle{plain}
\newtheorem{theorem}{Theorem}[section]
\newtheorem{proposition}[theorem]{Proposition}
\newtheorem{lemma}[theorem]{Lemma}
\newtheorem{corollary}[theorem]{Corollary}
\theoremstyle{definition}
\newtheorem{definition}[theorem]{Definition}
\newtheorem{assumption}[theorem]{Assumption}
\theoremstyle{remark}
\title{Continuum Transformers Perform In-Context Learning by Operator Gradient Descent}
\author{%
  Abhiti Mishra$^{*}$ \\
  Department of Statistics\\
  University of Michigan\\
  Ann Arbor, MI 48104 \\
  \texttt{abhiti@umich.edu} \\
  \And
  Yash Patel\thanks{Equal contributions} \\
  Department of Statistics\\
  University of Michigan\\
  Ann Arbor, MI 48104 \\
  \texttt{yppatel@umich.edu} \\
  \And
  Ambuj Tewari \\
  Department of Statistics\\
  University of Michigan\\
  Ann Arbor, MI 48104 \\
  \texttt{tewaria@umich.edu} \\
}
\begin{document}

\maketitle

\begin{abstract}
    Transformers robustly exhibit the ability to perform in-context learning, whereby their predictive accuracy on a task can increase not by parameter updates but merely with the placement of training samples in their context windows. Recent works have shown that transformers achieve this by implementing gradient descent in their forward passes. Such results, however, are restricted to standard transformer architectures, which handle finite-dimensional inputs. In the space of PDE surrogate modeling, a generalization of transformers to handle infinite-dimensional function inputs, known as ``continuum transformers,'' has been proposed and similarly observed to exhibit in-context learning. Despite impressive empirical performance, such in-context learning has yet to be theoretically characterized. We herein demonstrate that continuum transformers perform in-context operator learning by performing gradient descent in an operator RKHS. We demonstrate this using novel proof strategies that leverage a generalized representer theorem for Hilbert spaces and gradient flows over the space of functionals on a Hilbert space. We further show the operator learned in context is the Bayes Optimal Predictor in the infinite depth limit of the transformer. We then provide empirical validations of this result and demonstrate that the parameters under which such gradient descent is performed are recovered through pre-training.
\end{abstract}

\section{Introduction}\label{section:intro}
LLMs, and transformers more broadly, have demonstrated a remarkable ability to perform in-context learning, in which predictive performance improves without the need for parameter updates, merely by providing training samples in the LLM context window \cite{minaee2024large,dong2022survey}. With workflows increasingly relying on such fine-tuning in place of traditional weight updates due to its computational efficiency, much interest has gone into providing a theoretical explanation of this phenomenon \cite{akyurek2022learning,garg2022can,dai2022can}. Such works have demonstrated that, with particular choices of transformer parameters, an inference pass through such models is equivalent to taking steps of gradient descent for the in-context learning task. 

While LLMs were the first setting in which in-context learning abilities of transformers were exploited, interest is increasing in its broader application to other sequence prediction tasks. In particular, a seemingly orthogonal subfield of machine learning is that of accelerating the solution of partial differential equations (PDEs) \cite{li2020fourier,du2023neural,liu2023spfno,jafarzadeh2024peridynamic,oommen2024rethinking,you2022learning}. In this setting, the sequence is no longer of finite-dimensional vectors, but instead of infinite-dimensional functions. Nonetheless, transformers have been adapted to this setting, with an architecture known as ``continuum transformers'' \cite{calvello2024continuum}. More surprising still is that such transformers continue to exhibit in-context learning, whereby related PDEs can be efficiently solved with the similar placement of solution pairs in the continuum transformer context window \cite{cao2024vicon,yang2023context,meng2025context}.

Unlike in the finite-dimensional setting, this generalized, functional in-context learning yet remains to be theoretically characterized. We, therefore, herein extend this line of theoretical inquiry to characterize the continuum transformers that have been leveraged for in-context operator learning \cite{calvello2024continuum}. Our contributions, therefore, are twofold: the insights afforded by such theoretical analysis and the development of a mathematical framework for doing such analysis, as we highlight in greater detail in the main text. In particular, our contributions are as follows:
\begin{itemize}
    \item Proving continuum transformers perform in-context operator learning by performing gradient descent in a Reproducing Kernel Hilbert Space of operators and that the resulting in-context predictor recovers the Bayes Optimal Predictor under well-specified parameter choices of the transformer. Such proofs required the modeling of a generalized continuum attention mechanism and subsequent novel usages of a generalized representer theorem for Hilbert spaces and Gaussian measures over Hilbert spaces.
    \item Proving that such parameters under which continuum transformers implement operator gradient descent are minimizers of the training process of such transformers, leveraging a novel gradient flow analysis over the space of functionals on a Hilbert space.
    \item Empirically validating that continuum transformers perform in-context operator gradient descent upon inference with the exhibited parameters and that such parameters are recovered with transformer training across a diverse selection of operator RKHSs.
\end{itemize}



\section{Background}
\subsection{Neural Operators}\label{section:bg_operators}
Neural operator methods, while more broadly applicable, most often seek to amortize the solution of a spatiotemporal PDE. Such PDEs are formally described by spatial fields that evolve over time, namely some $u : \Omega\times[0,\infty)\rightarrow\mathbb{R}$, where $x\in\Omega\subset\mathbb{R}^{d}$ are spatial coordinates and $t\in[0,\infty)$ is a time coordinate. While the more abstract operator learning framework can be formulated as seeking to learn a map $\widehat{\mathcal{G}} : \mathcal{A}\rightarrow\mathcal{U}$ between two function spaces $\mathcal{A}$ and $\mathcal{U}$, we are most often interested in learning time-rollout maps, in which the input and output function spaces are identical. In such cases, it is assumed a dataset of the form $\mathcal{D}:=\{(u^{(0)}_{i}, u^{(T)}_{i})\}$ is available, where $u^{(0)} : \Omega\rightarrow\mathbb{R}$ is the initial condition, for which there exists some true operator $\mathcal{G}$ such that, for all $i$, $u_{i}^{(T)} = \mathcal{G}(u_{i}^{(0)})$. While many different learning-based approaches have been proposed to solve this operator learning problem, they all can be abstractly framed as seeking
\begin{equation}\label{eqn:operator_obj}
    \min_{\widehat{\mathcal{G}}} ||\widehat{\mathcal{G}} - \mathcal{G}||^{2}_{\mathcal{L}^{2}(\mathcal{U},\mathcal{U})}
    = \int_{\mathcal{U}} ||\widehat{\mathcal{G}}(u_{0}) - \mathcal{G}(u_{0})||_{\mathcal{U}}^{2}\, du_{0}.
\end{equation}

Within this broad family of operator learning, the most widely employed classes are the Deep Operator Networks (DeepONets) \cite{lu2021learning,lu2019deeponet,wang2021learning,kopanivcakova2025deeponet} and Fourier Neural Operators (FNOs) \cite{li2020neural,li2020multipole,li2020fourier,bonev2023spherical}. FNOs are parameterized as a sequence of layers of linear operators, given as kernel integral transforms, with standard intermediate ReLU nonlinearities. Formally, a single layer is then given by $\sigma(\mathcal{F}^{-1}(R_{\ell}\odot \mathcal{F}(u)))$, where $\mathcal{F}$ denotes the Fourier transform and $R_{\ell}$ the learnable kernel parameter. This layer encodes a translationally invariant kernel integral transform.

\subsection{In-Context Learning and Continuum Transformers}\label{section:icl_bg}
The standard attention mechanism is parameterized by $\theta := \{W_{k}, W_{q}, W_{v}\}$, where $W_{k}\in\mathbb{R}^{d_{k}\times d}$, $W_{q}\in\mathbb{R}^{d_{q}\times d}$, and $W_{v}\in\mathbb{R}^{d_{v}\times d}$ for sequential data $X\in\mathbb{R}^{d\times n}$ \cite{vaswani2017attention}. Formally,
\begin{equation}\label{eqn:transformer}
    \mathrm{Attn}(X) := (W_{v} X) M H\left((W_{q} X),(W_{k} X)\right),
\end{equation}
where $M\in\mathbb{R}^{n\times n}$ is a masking matrix and $H : \mathbb{R}^{d_{q}\times n}\times\mathbb{R}^{d_{k}\times n}\rightarrow\mathbb{R}^{n\times n}$ is a nonlinear transform of key-query similarity measures. This resulting matrix $H(Q, K)$ is referred to as the ``attention weights'' matrix.
Most often, $d_{k} = d_{q}$ and $H := \mathrm{softmax}$, i.e. $[H(Q, K)]_{i,j} = \exp(Q^{\top}_{i} K_{j}) / \sum_{\ell} \exp(Q^{\top}_{i} K_{\ell})$ \cite{cheng2023transformers}. Transformers are repeated compositions of such attention blocks with residual connections and feedforward and normalization layers. 

We now highlight the ``continuum attention'' mechanism proposed in \cite{calvello2024continuum} that generalizes transformers to operator learning; ``continuum'' here highlights that such an architecture models data in its discretization-agnostic function form as with other neural operator methods. 
In particular, in place of $x_i\in\mathbb{R}^{d}$, $x_i\in\mathcal{X}$ for some Hilbert space $\mathcal{X}$.
The natural generalization of the attention mechanism for function inputs then replaces the $W_k, W_q$, and $W_v$ matrices with linear operators. For a sequence $x\in\mathcal{X}^{n}$, the key, query, and value operators respectively map $\mathcal{W}_k : \mathcal{X}^{n}\rightarrow\mathcal{K}^{n}$, $\mathcal{W}_q : \mathcal{X}^{n}\rightarrow\mathcal{Q}^{n}$, and $\mathcal{W}_v : \mathcal{X}^{n}\rightarrow\mathcal{V}^{n}$, where $\mathcal{K},\mathcal{Q},$ and $\mathcal{V}$ are Hilbert spaces. Notably, the original framing of continuum transformers \cite{calvello2024continuum} generalizes attention by considering an infinite index set, rather than considering infinite-dimensional tokens as we do here; we discuss how these two seemingly distinct notions are equivalent in \Cref{section:appendix_continuum_details}. In practice, such operators are implemented as kernel integral transforms, as
    $\mathcal{W}_q x_i = \mathcal{F}^{-1} (R_{q} \odot \mathcal{F} x_i),$
where $R_{q}$ is the Fourier parameterization of the query kernel, with $R_{k}$ and $R_{v}$ similarly defined for the key and value kernels. The continuum attention mechanism assumes $\mathcal{K} = \mathcal{Q}$ and defines
\begin{equation}\label{eqn:op_attn}
    \mathrm{ContAttn}(X) := (\mathcal{W}_{v} X) M\, \mathrm{softmax}\left((\mathcal{W}_{q} X),(\mathcal{W}_{k} X)\right)
\end{equation}
where the interpretation of $M$ remains the same as that in \Cref{eqn:transformer}. In particular, while the key, query, and value operators are generalized, the resulting attention weights matrix still lies $\in\mathbb{R}^{n\times n}$. 
Transformer architectures, most notably in LLMs, have been observed to exhibit an unexpected behavior known as ``in-context learning'' (ICL), in which they perform few-shot learning without any explicit parameter updates but merely by having training examples in their context windows. We follow the conventions of \cite{cheng2023transformers} in formalizing the ICL phenomenon. We suppose the dataset $\mathcal{D} := \{(X_{i}, y_{i})\}_{i=1}^{n}$ on which the transformer was trained has samples of the form
\begin{equation}\label{eqn:icl_input}
    X_{i} = \begin{bmatrix} 
        x_{i}^{(1)} & x_{i}^{(2)} & \ldots & x_{i}^{(n)} & x_{i}^{(n+1)} \\ 
        y_{i}^{(1)} & y_{i}^{(2)} & \ldots & y_{i}^{(n)} & 0
    \end{bmatrix}
    \qquad
    y_{i} = y_{i}^{(n+1)}.
\end{equation}
We then suppose $y_{i}^{(t)} = f_{i}(x_{i}^{(t)})$, where $f_{i}\neq f_j$ for $i\neq j$. This is the critical difference in the in-context learning setting: in the standard setting, $f_{i} = f$ is fixed across samples and matches the $f'$ at test time, meaning the goal for the learner is to learn such an $f$. In ICL, however, the learning algorithm must be capable of ``learning'' an unseen $f'$ at inference time without parameter updates.

\subsection{Operator Meta-Learning}\label{section:meta_learning_bg}
Significant interest has emerged in meta-operator learning for spatiotemporal PDEs, in which a single network maps from initial conditions to final states across system specifications \cite{wang2022mosaic,zhang2024modno,sun2024lemon,liu2024prose,cao2024vicon,chakraborty2022domain}. Formally, unlike the traditional setting discussed in \Cref{section:bg_operators}, here the dataset consists of sub-datasets $\mathcal{D} := \cup_i \mathcal{D}_{i}$, where $\mathcal{D}_i := \{(u_{i;0}^{(j)}, u_{i;T}^{(j)})\}^{n_i}_{j=1}$, for which the true operator can vary across sub-datasets, i.e. $\mathcal{G}_{i}$ satisfies $u_{i;T}^{(j)} = \mathcal{G}_{i}(u_{i;0}^{(j)})$ $\forall j\in[n_i]$ but $\mathcal{G}_{i}\neq \mathcal{G}_{i'}$ for $i\neq i'$. The goal is to then, given only a limited number of training samples $\widetilde{\mathcal{D}} := \{(\widetilde{u}_{0}^{(j)}, \widetilde{u}_{T}^{(j)})\}^{\widetilde{n}}_{j=1}$ with $\widetilde{n}\ll \sum_i n_i$ for a fixed, potentially unseen operator $\mathcal{G}'$, learn an approximation $\widehat{\mathcal{G}}\approx\widetilde{\mathcal{G}}$. Most often, this is done by pre-training a meta-learner $\mathcal{G}_{\mathrm{ML}}$ on $\mathcal{D}$ and then fine-tuning $\mathcal{G}_{\mathrm{ML}}$ on $\widetilde{\mathcal{D}}$ to arrive at $\widetilde{\mathcal{G}}$.


Building off of the observed ICL of transformers, the fine-tuning of these meta-learners too can be performed via explicit weight modification or via in-context learning. 
An entire offshoot of meta-learning known as in-context operator networks (ICONs) has spawned from the latter \cite{cao2024vicon,yang2023context,meng2025context,yang2024pde}. Recent works in this vein have demonstrated notable empirical performance leveraging bespoke transformer architectures built atop the continuum attention from \Cref{eqn:op_attn} \cite{cao2024vicon,alkin2024universal,cao2025spectral,cao2021choose}.
Such behavior has been observed but has yet to be theoretically characterized. 

\section{Theoretical Characterization}

\subsection{In-Context Learning Model}\label{section:setup}
For quick reference, we provide a compilation of the notation used throughout this and the remaining sections in \Cref{section:notation}. We now consider a generalization of the continuum attention, paralleling that of \Cref{eqn:transformer}, allowing for more general key-query similarity measures. 
In particular, instead of restricting $\mathcal{Q} = \mathcal{K}$ as required for \Cref{eqn:op_attn} to be well-defined, we allow $H:\mathcal{Q}^{n+1}\times\mathcal{K}^{n+1} \to (\mathcal{L}(\mathcal{V}))^{(n+1) \times (n+1)}$, where $\mathcal{L}(\mathcal{V})$ denotes the set of bounded linear operators from $\mathcal{V}$ to $\mathcal{V}$; note that the ``$n+1$'' convention is adopted as data is of the form given by \Cref{eqn:icl_input}. This generalization subsumes the softmax form assumed in \Cref{eqn:op_attn}, where for $c\in\mathbb{R}$, $c\in\mathcal{L}(V)$ is understood to be defined as $c f_{v}$ for $f_{v}\in\mathcal{V}$, from which we can similarly view $\mathrm{softmax} : \mathcal{Q}^{n+1}\times\mathcal{K}^{n+1} \to (\mathcal{L}(\mathcal{V}))^{(n+1) \times (n+1)}$. 
If $\mathcal{V} = \mathcal{X}$, an $m$-layer continuum transformer $\mathcal{T} : \mathcal{X}^{n+1}\rightarrow\mathcal{X}^{n+1}$ consisting of generalized continuum attention layers with residual connections is well-defined as $\mathcal{T} := \mathcal{T}_{m}\circ...\circ\mathcal{T}_{0}$, where
\begin{equation}\label{eqn:gen_continuum_transformer}
    X_{\ell+1} =\mathcal{T}_{\ell}(X_{\ell}) :=  X_{\ell} + \left(H(\mathcal{W}_{q,\ell}X_{\ell},\mathcal{W}_{k,\ell}X_{\ell})\mathcal{M}(\mathcal{W}_{v,\ell} X_{\ell})^T \right)^T,
\end{equation}
where we now view $\mathcal{M} : \mathcal{V}^{n+1}\rightarrow\mathcal{V}^{n+1}$ as a mask operator acting on the value functions. Note that we present this using a ``double-transpose'' notation as compared to \Cref{eqn:transformer} to follow the mathematical convention of writing an operator to the left of the function it is acting upon. 
Notably, in the setting of in-context learning for PDEs, the pairs $(f^{(j)}, u^{(j)})$ are generally time rollout pairs $(f^{(i)}, u^{(i)}) = (u^{((i-1)\Delta t)}, u^{(i\Delta t)})$ for some time increment $\Delta t$, elaborated upon more in \Cref{section:appendix_time_rollout}. We, thus, assume $f^{(i)}, u^{(i)}\in\mathcal{X}$, that is, that they lie in a single Hilbert space. Following the formalization established in \cite{cole2024provable}, we then seek to learn in-context a map $\mathcal{X}\rightarrow\mathcal{X}$, for which we construct a context window consisting of training pairs $z^{(i)}\in\mathcal{Z}= \mathcal{X}\oplus \mathcal{X}$:
\begin{equation}\label{eqn:icl_data}
Z_0 = \begin{pmatrix} f^{(1)} &f^{(2)}& \ldots &f^{(n)} & f^{(n+1)} \\ u^{(1)} &u^{(2)}& \ldots &u^{(n)} & 0
\end{pmatrix} \in \begin{pmatrix}
    \mathcal{X}^{n+1} \\
    \mathcal{X}^{n+1}
\end{pmatrix},
\end{equation}
where $\{(f^{(i)}, u^{(i)})\}_{i=1}^n$ are $n$ input-output pairs, $z^{(i)} := (f^{(i)}, u^{(i)})$, and $\mathcal{T} : \mathcal{Z}^{n+1}\rightarrow\mathcal{Z}^{n+1}$.  We predict $u^{(n+1)}$ as $-[\mathcal{T}(z)]_{2,n+1}$. As in the typical ICL formalization, the key, query, and nonlinearity operators only act upon the input functions. That is, denoting $X_0 = [f^{(1)}, f^{(2)}, ..., f^{(n)}, f^{(n+1)}] \in \mathcal{X}^{n+1}$ as the vector of input functions and $X_{\ell} \in \mathcal{X}^{n+1}$ similarly as the first row of the matrix $Z_{\ell}$,
\begin{equation} \label{eqn:layerl}
    Z_{\ell+1} = Z_{\ell} + \left(\widetilde{H}(\mathcal{W}_{q,\ell}X_{\ell},\mathcal{W}_{k,\ell}X_{\ell})\mathcal{M}(\mathcal{W}_{v,\ell}Z_{\ell})^T \right)^T,
\end{equation}
with the layer $\ell$ query and key block operators $\mathcal{W}_{q,\ell} : \mathcal{X}^{n+1}\to\mathcal{Q}^{n+1}$ and $\mathcal{W}_{k,\ell} : \mathcal{X}^{n+1}\to\mathcal{K}^{n+1}$ only acting on the restricted input space and $\mathcal{W}_{v,\ell} : \mathcal{Z}^{n+1}\to\mathcal{Z}^{n+1}$ on the full space. 
The mask block operator matrix $\mathcal{M}$ is set as $\begin{bmatrix}
    I_{n \times n} & 0\\
    0&0
\end{bmatrix}$, with $I_{n \times n}$ denoting the $n\times n$ block identity operator. We denote the \textit{negated} in-context prediction for an input $f^{(n+1)} := f$ after $\ell$ layers by
\begin{equation}
    \mathcal{T}_{\ell}(f; (\mathcal{W}_v, \mathcal{W}_q, \mathcal{W}_k) |z^{(1)}, \ldots, z^{(n)}) := [Z_{\ell}]_{2,n+1}
\end{equation}


The in-context loss is given as follows, where we critically note that the plus sign arises from the convention that the final ICL prediction carries a negative:
\begin{equation}\label{eqn:in-context-loss}
    L(\mathcal{W}_v, \mathcal{W}_q, \mathcal{W}_k) 
    = \mathbb{E}_{Z_0, u^{(n+1)}} \left( \| [Z_{m+1}]_{2,n+1} + u^{(n+1)}\|^2_{\mathcal{X}} \right)
\end{equation} 
Notably, the very modeling of the generalized continuum layer in \Cref{eqn:gen_continuum_transformer} by introducing $H:\mathcal{Q}^{n+1}\times\mathcal{K}^{n+1} \to (\mathcal{L}(\mathcal{V}))^{(n+1) \times (n+1)}$ was a non-obvious generalization of \Cref{eqn:op_attn} in introducing this \textit{operator-valued} nonlinearity. The more natural thing would have been to simply consider a \textit{scalar-valued} function ``similarity'' measure, i.e. defining $H:\mathcal{Q}^{n+1}\times\mathcal{K}^{n+1} \to \mathbb{R}^{(n+1) \times (n+1)}$. However, critically, doing so does \textit{not} lend itself to characterization in an RKHS, upon which our analysis over the remaining section is based. Thus, the mathematical framing of this problem as above is a worthwhile contribution that the broader community can use for further analysis.

\subsection{In-Context Learning Occurs via Gradient Descent}\label{section:icl}
We now show that continuum transformers \textit{can} perform in-context learning by implementing operator gradient descent. 
For in-context samples $\{(f^{(i)}, u^{(i)})\}^{n}_{i=1}$, the in-context task is to find
$O^{*} := \min_{O\in\mathcal{O}} L(O) := \min_{O\in\mathcal{O}} \sum_{i=1}^n \| u^{(i)} -Of^{(i)}\|^2_{\mathcal{X}}$.
Naturally, such an operator learning problem can be solved by iteratively updating an estimate ${O}_{\ell+1} = {O}_{\ell} - \eta_{\ell}\nabla L({O}_{\ell})$. Our result, therefore, roughly states that the prediction of $u^{(n+1)}$ by the $\ell$-th layer of a continuum transformer, with particular choices of key, query, and value parameters, is exactly ${O}_{\ell} f$ for any test function $f$. We defer the full proof of the theorem to \Cref{section:icl_proof} but here highlight novelties of the proof strategy in demonstrating this result. The strategy involves deriving an explicit form of the operator gradient descent steps and then showing, under a specific choice of $(\mathcal{W}_v,\mathcal{W}_q,\mathcal{W}_k)$, inference through a layer of the continuum transformer recovers this explicitly computed operator gradient expression (see \Cref{section:op_gradient}). While the general strategy parallels that of \cite{cheng2023transformers}, the explicit construction of the gradient descent expression over operator spaces is not as directly evident as it is over finite-dimensional vector spaces. In particular, we had to invoke a generalized form of the Representer Theorem to obtain this result, as compared to its classical form leveraged therein. 

\begin{theorem}\label{thm:op_icl}
Let $\kappa:\mathcal{X} \times \mathcal{X} \rightarrow \mathcal{L}(\mathcal{X})$ be an arbitrary operator-valued kernel and $\mathcal{O}$ be the operator RKHS induced by $\kappa$. Let $\{(f^{(i)}, u^{(i)})\}^{n}_{i=1}$ and $L(O) := \sum_{i=1}^n \| u^{(i)} -Of^{(i)}\|^2_{\mathcal{X}}$. Let $O_0 = \mathbf{0} $ and let $O_{\ell}$ denote the operator obtained from the $\ell$-th operator-valued gradient descent sequence of $L$ with respect to $\|\cdot \|_{\mathcal{O}}$ as defined in \Cref{eqn:grad_des}. Then there exist scalar step sizes $r_0', \ldots, r_m'$ such that if, for an $m$-layer continuum transformer $\mathcal{T} := \mathcal{T}_{m}\circ...\circ\mathcal{T}_{0}$, $[\widetilde{H}(U,W)]_{i,j} = \kappa(u^{(i)},w^{(j)})$, $\mathcal{W}_{v,\ell} = \begin{pmatrix} 0&0\\0&-r'_{\ell}I \end{pmatrix}$, $\mathcal{W}_{q,\ell} = I$, and $\mathcal{W}_{k,\ell} = I$ for each $\ell=0,...,m$, then for any $f\in\mathcal{X}$,
\begin{equation*}
\mathcal{T}_{\ell}(f; (\mathcal{W}_v, \mathcal{W}_q, \mathcal{W}_k) |z^{(1)}, \ldots, z^{(n)})= -O_{\ell} f.
\end{equation*}

\end{theorem}

\subsection{In-Context Learning Recovers the Best Linear Unbiased Predictor}\label{section:blup_theory}
In the finite-dimensional setting, it was shown in \cite{cheng2023transformers} that if $\{y^{(j)}\}$ are outputs from the marginal of a Gaussian process with kernel $\kappa(\cdot, \cdot)$ for inputs $\{x^{(j)}\}$, a transformer of depth $m\rightarrow\infty$ with $[H(U, W)]_{i,j} = \kappa(u^{(i)},w^{(j)})$ recovers the Bayes optimal predictor in-context for a window of $\{(x^{(j)}, y^{(j)})\}$. Roughly speaking, this result demonstrates the optimality of recovering the true $f$ in-context if the distribution on the function space from which $f$ was sampled is Gaussian. We, therefore, seek to establish a similar result for the setting of operator-valued kernels, which requires defining a Gaussian measure on the operator space from which the true $O$ is sampled. 

Notably, paralleling how the finite marginals of finite-dimensional Gaussian processes induce distributions over functions, characterizing the finite marginals of a Hilbert space-valued Gaussian process induces a distribution over operators. Doing so, however, requires the formalization of a Gaussian measure on Hilbert spaces. A Hilbert space-valued random variable $F$ is said to be Gaussian if $\langle \varphi, F \rangle_{\mathcal{X}}$ is Gaussian for any $\varphi\in\mathcal{X}$ \cite{menafoglio}. We now present the generalization of Gaussian processes to Hilbert spaces from \cite{jorgensen2024hilbert}.

\begin{definition} \label{defn:gaussian_hilbert}
    \textbf{$\kappa$ Gaussian Random Variable in Hilbert Space} (Definition 4.1 of \cite{jorgensen2024hilbert}) Given an operator-valued kernel $\kappa: \mathcal{X} \times \mathcal{X} \to \mathcal{L}(\mathcal{X})$, we say 
    $U|F \sim \mathcal{N}(\mathbf{0}, \mathbf{K}(F)),$
    where $U = [u^{(1)}, \ldots, u^{(n+1)}], F = [f^{(1)}, \ldots, f^{(n+1)}]$ and $[\mathbf{K}(F)]_{i,j} := \kappa(f^{(i)}, f^{(j)})$ if, for all $v^{(1)}, v^{(2)}\in\mathcal{X}$ and $(f^{(i)}, u^{(i)}), (f^{(j)}, u^{(j)})\in \mathcal{X}^{2}$,
    \begin{gather}
        \mathbb{E}[\langle v^{(1)}, u^{(i)}\rangle_{\mathcal{X}} \langle v^{(2)}, u^{(j)} \rangle_{\mathcal{X}}] = \langle v^{(1)}, \kappa(f^{(i)}, f^{(j)}) v^{(2)}\rangle_{\mathcal{X}} \\
        \langle v^{(1)}, u^{(i)} \rangle_{\mathcal{X}}\sim\mathcal{N}\left(0, \langle v^{(1)}, \kappa(f^{(i)}, f^{(i)}) v^{(1)} \rangle_{\mathcal{X}}\right).
    \end{gather}
\end{definition}



Such Hilbert-space valued GPs have been leveraged in Hilbert-space kriging, notably studied in \cite{menafoglio,luschgy}. 
They proved that the Best Linear Unbiased Predictor (BLUP) precisely coincides with the Bayes optimal predictor in the MSE sense, meaning the $f : \mathcal{X}^n \to \mathcal{X}$ amongst all measurable functions that minimizes $\mathbb{E}[\| X_{n+1} - f(\mathbf{X}) \|^2_{\mathcal{X}}]$, for zero-mean jointly Gaussian random variables $\mathbf{X} = (X_1, \ldots, X_n) \in \mathcal{X}^n$ and $X_{n+1} \in \mathcal{X}$, is the BLUP. Their formal statement is provided for reference in \Cref{section:blup_is_optimal}.


We now show that the in-context learned operator is the Best Linear Unbiased Predictor as $m\rightarrow\infty$ and thus, by \Cref{thm:blup_is_optimal}, Bayes optimal if the observed data are $\kappa$ Gaussian random variables and if $\widetilde{H}$ matches $\kappa$. We again highlight here the approach and main innovations in establishing this result and defer the full presentation to \Cref{section:blup_proof}. Roughly, the approach relied on establishing that the infinite composition $\mathcal{T}_{\infty}:= ...\circ \mathcal{T}_m \circ...\circ \mathcal{T}_0$ converges to a well-defined operator and that this matches the BLUP known from Hilbert space kriging literature. The careful handling of this compositional convergence and connection to Hilbert space kriging are novel technical contributions.



\begin{proposition} \label{prop:blup}
    Let $F = [f^{(1)}, \ldots, f^{(n+1)}], U = [u^{(1)}, \ldots, u^{(n+1)}]$. Let $\kappa:\mathcal{X} \times \mathcal{X} \to \mathcal{L}(\mathcal{X})$ be an operator-valued kernel. Assume that $U|F$ is a $\kappa$ Gaussian random variable per \Cref{defn:gaussian_hilbert}. Let the activation function $\widetilde{H}$ of the attention layer be defined as $[\widetilde{H}(U,W)]_{i,j} := \kappa(u^{(i)}, w^{(j)})$. Consider the operator gradient descent in Theorem \ref{thm:op_icl}. Then as the number of layers $m \to \infty$, the continuum transformer's prediction at layer $m$ approaches the Best Linear Unbiased Predictor that minimizes the in-context loss in \Cref{eqn:in-context-loss}.
\end{proposition}

\subsection{Pre-Training Can Converge to Gradient Descent Parameters}\label{section:optimization_theory}
We now demonstrate that the aforementioned parameters that result in operator gradient descent from \Cref{thm:op_icl} are in fact minimizers of the continuum transformer training objective. This, in turn, suggests that under such training, the continuum transformer parameters will converge to those exhibited in the previous section. Similar results have been proven for the standard Transformer architecture, such as in \cite{cheng2023transformers}. Demonstrating our results, however, involved highly non-trivial changes to their proof strategy. In particular, defining gradient flows over the space of functionals of a Hilbert space requires the notion of Frechet-differentiability, which is more general than taking derivatives with respect to a matrix. Additionally, rewriting the training objective with an equivalent expectation expression (see \Cref{eqn:loss-as-trace}) required careful manipulation of the covariance operators of the data distributions as described in the symmetry discussion that follows.

The formal statement of this result is given in \Cref{thm:cheng_opt_1}. This result characterizes a stationary point of the optimization problem in \Cref{eqn:constrained_opt} when the value operators $W_{v,\ell}$ have the form $\begin{bmatrix}
    0&0\\0&r_{\ell} I
\end{bmatrix}$ and the $W_{q,\ell}$ and $W_{k,\ell}$ operators have a form relating to the symmetry of the data distribution, as fully described below. If the symmetry is characterized by a self-adjoint invertible operator $\Sigma$, we establish that there exists a fixed point of the form $W_{q,\ell} = b_{\ell}\Sigma^{-1/2}$ and $W_{k,\ell} = c_{\ell}\Sigma^{-1/2}$ for some constants $\{b_{\ell}\}$ and $\{c_{\ell}\}$. This fixed point relates to \Cref{section:icl} and \Cref{section:blup_theory}, since if $\Sigma = I$, we recover the parameter configuration under which functional gradient descent is performed. 

This proof relies on some technical assumptions on the $F$ and $U|F$ distributions and transformer nonlinearity; such assumptions are typical of such optimization analyses, as seen in related works such as \cite{cheng2023transformers,ahn2023transformers,dutta2024memory}. As mentioned, the proof proceeds by studying gradient flow dynamics of the $\mathcal{W}_{k,\ell}$ and $\mathcal{W}_{q,\ell}$ operators over the $\mathcal{P}(F, U)$ distribution. Direct analysis of such gradient flows, however, becomes analytically unwieldy under unstructured distributions $\mathcal{P}(F, U)$. We, therefore, perform the analysis in a frame of reference rotated by $\Sigma^{-1/2}$. By \Cref{assump3}, this rotated frame preserves the $\mathcal{P}(F, U)$ distribution and by \Cref{assump4} also the attention weights computed by the continuum transformer, allowing us to make conclusions on the original setting after performing the analysis in this rotated coordinate frame. We demonstrate in \Cref{section:optimization_experiments} that common kernels satisfy \Cref{assump4}.
We defer the full proof to \Cref{section:optimization_theory_proof}.



\begin{assumption} \label{assump3-main}
    (Rotational symmetry) Let $P_F$ be the distribution of $F = [f^{(1)}, ..., f^{(n+1)}]$ and  $\mathbb{K}(F) =\mathbb{E}_{U|F}[U \otimes U]$. We assume that there exists a self-adjoint, invertible operator $\Sigma: \mathcal{X} \to \mathcal{X}$ such that for any unitary operator $\mathcal{M}$, $\Sigma^{1/2}\mathcal{M}\Sigma^{-1/2} F\stackrel{d}{=} F$ and $\mathbb{K}(F) = \mathbb{K}(\Sigma^{1/2}\mathcal{M}\Sigma^{-1/2}F)$.
\end{assumption}

\begin{assumption} \label{assump4-main}
    For any $F_1, F_2 \in \mathcal{X}^{n+1}$ and any operator $S: \mathcal{X} \to \mathcal{X}$ with an inverse $S^{-1}$, the function $\widetilde{H}$ satisfies $\widetilde{H}(F_1, F_2) = \widetilde{H}(S^* F_1, S^{-1}F_2)$.
\end{assumption}

\begin{theorem} \label{thm:cheng_opt_1}
Suppose \Cref{assump3-main} and \Cref{assump4-main} hold. Let $f(r,\mathcal{W}_{q},\mathcal{W}_{k}) := L \left( \mathcal{W}_{v,\ell} = \left\{\begin{bmatrix}
    0 &0\\ 0&r_{\ell} I
\end{bmatrix}\right\}_{\ell=0,\ldots,m}, \mathcal{W}_{q,\ell}, \mathcal{W}_{k,\ell}\right)$, where $L$ is as defined in  \Cref{eqn:in-context-loss}. Let $\mathcal{S} \subset \mathcal{O}^{m+1} \times \mathcal{O}^{m+1}$ denote the set of $(\mathcal{W}_q, \mathcal{W}_k)$ operators with the property that $(\mathcal{W}_q, \mathcal{W}_v) \in \mathcal{S}$ if and only if for all $\ell \in \{0, \ldots, m\}$, there exist scalars $b_{\ell}, c_{\ell} \in \mathbb{R}$ such that $\mathcal{W}_{q,\ell} = b_{\ell}\Sigma^{-1/2}$ and $\mathcal{W}_{k,\ell} = c_{\ell}\Sigma^{-1/2}$. Then
\begin{equation} \label{eqn:constrained_opt}
    \inf_{(r, (\mathcal{W}_q, \mathcal{W}_k))\in \mathbb{R}^{m+1} \times \mathcal{S}} \sum_{\ell=0}^m \left[(\partial_{r_\ell} f)^2 + \|\nabla_{\mathcal{W}_{q,\ell}} f\|_{\mathrm{HS}}^2 + \|\nabla_{\mathcal{W}_{k,\ell}} f\|_{\mathrm{HS}}^2\right] = 0.
\end{equation}
Here $\nabla_{\mathcal{W}_{q,\ell}}$ and $\nabla_{\mathcal{W}_{k,\ell}}$ denote derivatives with respect to the Hilbert-Schmidt norm $\|\cdot\|_{\mathrm{HS}}$. 
\end{theorem}

\section{Related Works}\label{section:related_works}
We were interested herein in providing a theoretical characterization of the in-context learning exhibited by continuum attention-based ICONs, paralleling that done for finite-dimensional transformers \cite{akyurek2022learning,garg2022can,dai2022can}. While previous works have yet to formally characterize ICL for continuum transformers, a recent work \cite{cole2024provable} began a line of inquiry in this direction. Their work, however, studied a fundamentally distinct aspect of functional ICL, characterizing the sample complexity and resulting generalization for linear elliptic PDEs.

Most relevant in the line of works characterizing finite-dimensional ICL is \cite{cheng2023transformers}. Loosely speaking, they demonstrated that, if a kernel $\kappa(\cdot,\cdot)$ is defined with $\mathbb{H}$ denoting its associated RKHS and a transformer is then defined with a specific $W_k, W_q, W_v$ and $[H(Q, K)]_{i,j} = \kappa(Q_{i}, K_{j})$ from \Cref{eqn:transformer}, an inference pass through $m$ layers of such a transformer predicts $y^{(n+1)}_{i}$ from \Cref{eqn:icl_input} equivalently to $\widehat{f}(x^{(n+1)}_i)$, where $\widehat{f}$ is the model resulting from $m$ steps of functional gradient descent in $\mathbb{H}$. They then demonstrated that the predictor learned by such ICL gradient descent is Bayes optimal under particular circumstances. These results are provided in \Cref{section:appendix_icl_bg_full}. 

Notably, these results required non-trivial changes to be generalized to operator RKHSs as we did herein; as discussed, such mathematical tooling is more general than its application to the setting considered herein. In particular, our approach to performing the analysis in the infinite-dimensional function space directly without requiring discretization required formalizing several notions to rigorously justify being able to lift the proofs from finite to infinite dimensions. We view this as a significant contribution to the community. Now that we have gone through this formalism, the broader community can directly use these analysis strategies to further study in-context learning or unrelated inquiries. This is highly valuable, since it suggests that theoreticians can often follow their finite-dimensional intuitions and defer to our infinite-dimensional results to rigorously justify their results. In other words, our proof strategies suggest to other researchers working on similar problems that they need not be bogged down in the details of the error convergence minutiae that often arise in approaches relying on finite projections and can instead work with these clean abstractions. 

Similarly, this work opens up the space for who can contribute to further the theoretical study of operator ICL. Much of the classical optimization community, for instance, may not be intimately familiar with the mathematical formalisms required around operator spaces. However, with our formalism, they can provide insights with little change from how they would approach finite-dimensional analyses. We, therefore, believe this framework, consisting of the generalized transformer and characterization of its optimization, is a worthwhile contribution to the theory community.

\section{Experiments}
We now wish to empirically verify the theoretically demonstrated claims. We provide setup details in \Cref{section:experimental_setup} and then study the claims of \Cref{section:blup_theory} in \Cref{section:blup_experiments} and those of \Cref{section:optimization_theory} in \Cref{section:optimization_experiments}. 
Code for this verification will be made public upon acceptance.


\subsection{Experimental Setup}\label{section:experimental_setup}
In the experiments that follow, we wish to draw $\kappa$ Gaussian Random Variables, as per \Cref{defn:gaussian_hilbert}. To begin, we first define an operator-valued covariance kernel $\kappa$. In particular, 
we consider one commonly encountered in the Bayesian functional data analysis literature \cite{kadri2016operator}, namely the Hilbert-Schmidt Integral Operator. Suppose $k_{x} : \mathcal{X}\times\mathcal{X}\rightarrow\mathbb{R}$ and $k_{y} : \Omega\times\Omega\rightarrow\mathbb{R}$ are both positive definite kernels, where $k_y$ is Hermitian. Then, the following is a well-defined kernel:
\begin{equation}\label{eqn:int_op_kernel}
    [\kappa(f^{(1)}, f^{(2)})u](y) := k_{x}(f^{(1)}, f^{(2)}) \int k_{y}(y', y) u(y') dy'.
\end{equation}
Notably, similar to how functions in a scalar RKHS can be sampled as $f = \sum_{s=1}^{S} \alpha^{(s)} k(x^{(s)},\cdot)$ for $\alpha^{(s)}\sim\mathcal{N}(0, \sigma^{2})$ over a randomly sampled collection $\{x^{(s)}\}_{s=1}^{S}$, we can sample operators by sampling a collection $\{(\varphi^{(s)}, \psi^{(s)})\}_{s=1}^{S}$ and defining
$O = \sum_{i=s}^{S} \alpha^{(s)} \left( k_{x}(\varphi^{(s)}, \cdot) \int k_{y}(y', y) \psi^{(s)}(y') dy' \right)$ with $\alpha^{(s)} \sim\mathcal{N}(0,\sigma^{2})$.
We focus on $\mathcal{X} =  \mathcal{L}^{2}(\mathbb{T}^{2})$, for which the functions $\varphi^{(s)},\psi^{(s)}$ can be sampled from a Gaussian with mean function 0 and covariance operator $\alpha(-\Delta +\beta I)^{-\gamma}$, where $\alpha,\beta,\gamma\in\mathbb{R}$ are parameters that control the smoothness of the sampled functions and $\Delta$ is the Laplacian operator. Such a distribution is typical of the neural operators literature, as seen in \cite{subedi2025operator,kovachki2021neural,bhattacharya2021model}, and is sampled as
\begin{equation}\label{eqn:sample_grf}
    \varphi^{(s)} := \sum_{|\nu|_{\infty}\le N/2} \left(Z^{(s)}_{\nu} \alpha^{1/2} (4\pi^2 || \nu ||^2_{2} + \beta)^{-\gamma / 2}\right) e^{i\nu\cdot x}
    \quad\text{ where }Z^{(s)}_{\nu}\sim\mathcal{N}(0, 1),
\end{equation}
where $N/2$ is the Nyquist frequency assuming a discretized spatial resolution of $N\times N$. Such sampling is similarly repeated for $\psi^{(s)}$. We consider standard scalar kernels $k_x$ and $k_y$ to define Hilbert-Schmidt kernels. For $k_x$, we consider the Linear, Laplacian, Gradient RBF, and Energy kernels and for $k_y$, the Laplace and Gaussian kernels. The definitions of such kernels is deferred to \Cref{section:appendix_kernel_def}.
To finally construct the in-context windows, we similarly sample functions $f^{(j)}$ per \Cref{eqn:sample_grf} and evaluate $u^{(j)} = O f^{(j)}$ using the sampled operator. 
To summarize, a \textit{single} ICL context window $i$ of the form \Cref{eqn:icl_data} is constructed by defining an operator $O_i$ with a random sample $\{(\alpha_{i}^{(s)}, \varphi_{i}^{(s)}, \psi_{i}^{(s)})\}_{s=1}^{S}$, sampling input functions $\{f_i^{(j)}\}$, and evaluating $u_i^{(j)} = O_i f_i^{(j)}$. 


\subsection{Best Linear Unbiased Predictor Experiments}\label{section:blup_experiments}

\begin{figure}
  \centering 
  \includegraphics[width=0.75\textwidth]{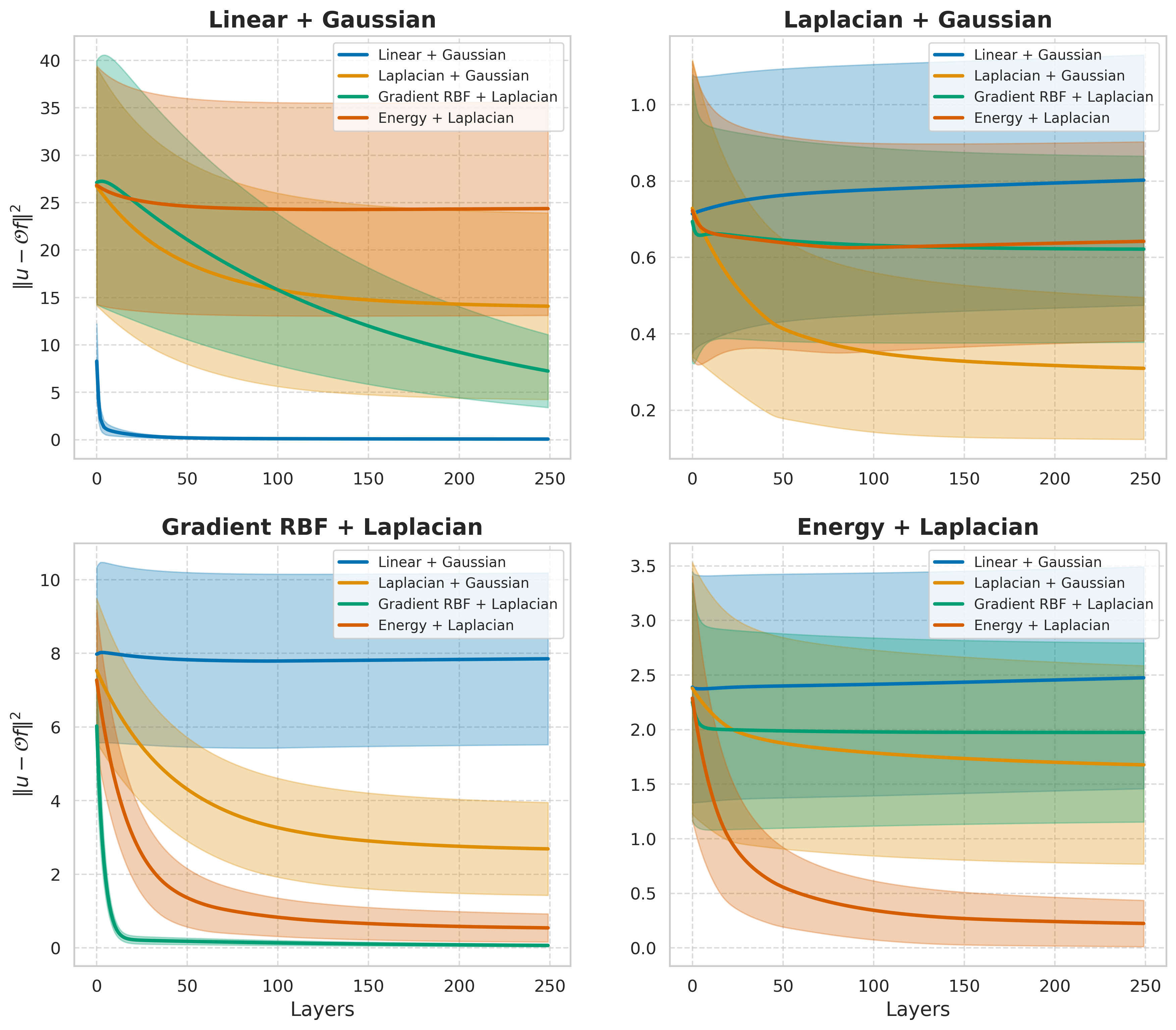}
  \caption{In-context learning loss curves over the number of layers in the continuum transformer. The kernels of the data-generating processes are given in the titles of the sub-figures. Curves show the mean $\pm 1/2$ standard deviation from 50 i.i.d. draws of the operator.}
  \label{fig:icl_matching_kernel}
\end{figure}

We now empirically demonstrate the claim of \Cref{prop:blup}, specifically using the Hilbert-Schmidt operator-valued kernels described in the previous section. In particular, we demonstrate the optimality of the continuum transformer in-context update steps if the nonlinearity is made to match the kernel of the data-generating process. We consider four pairs of the aforementioned $(k_x, k_y)$ to define the operator-valued kernels and then fix the model parameters to be those given by \Cref{thm:op_icl}. Notably, as $\mathcal{W}_{k,\ell}$ and $\mathcal{W}_{q,\ell}$ are implemented as FNO kernel integral layers, we do so by fixing $R_{k,\ell} = R_{q,\ell} = \mathbbm{1}_{(N/2)\times (N/2)}$ $\forall\ell$. The results are shown in \Cref{fig:icl_matching_kernel}. As expected, we find the in-context loss curves to decrease over increasing layers when the kernel matches the nonlinearity, as each additional layer then corresponds to an additional step of operator gradient descent per \Cref{thm:op_icl}. We additionally see the optimality of matching the kernel and nonlinearity across the different choices of kernels in the limit of $\ell\rightarrow\infty$. We see that this result is robust over samplings of the operator, as the results in \Cref{fig:icl_matching_kernel} are reported over 50 independent trials. We visualize the in-context learned predictions for each setup combination in \Cref{section:addn_exp_results}, which reveals that, when $k_{x}$ matches the data-generating kernel, the resulting field predictions structurally match the true $\mathcal{O} f$.


\subsection{Optimization Experiments}\label{section:optimization_experiments}
\begin{figure}
  \centering 
  \includegraphics[width=0.75\textwidth]{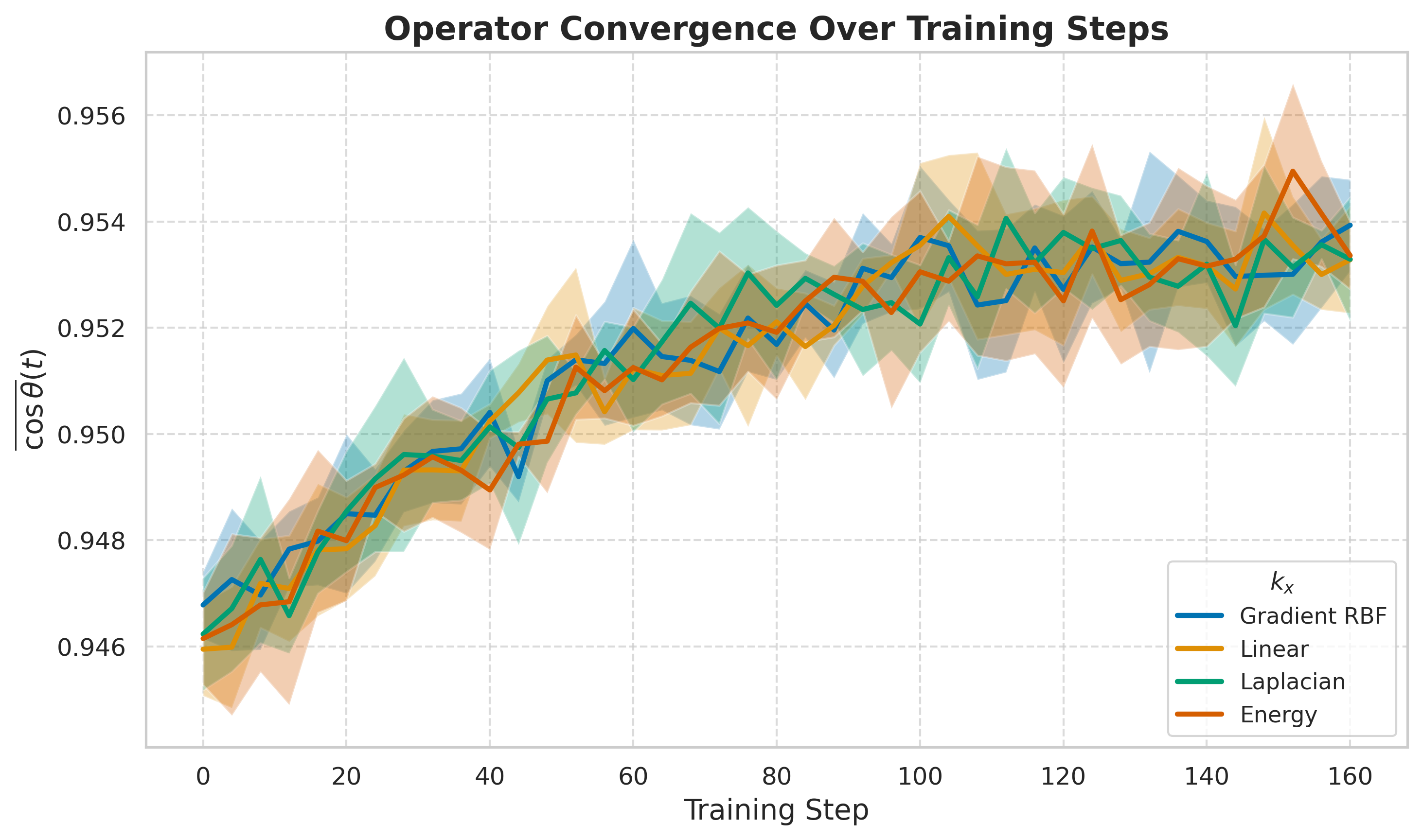}
  \caption{Pairwise convergence of the key-key, key-query, and query-query operators of the continuum transformer in Hilbert-Schmidt cosine similarity across different kernels $k_x$ over training steps. Curves show the mean $\pm 1$ standard deviation from 5 i.i.d. trials of training procedure.}
  \label{fig:optimization}
\end{figure}

We now seek to empirically validate \Cref{thm:cheng_opt_1}, namely that $\mathcal{W}_{k,\ell} = b_{\ell} \Sigma^{-1/2}$ and $\mathcal{W}_{q,\ell} = c_{\ell} \Sigma^{-1/2}$ are fixed points of the optimization landscape. Direct verification of this statement, however, is not feasible, since do not have an explicit form of the $\Sigma^{-1/2}$ operator. Nonetheless, we can verify that, for any $\ell_1,\ell_2$ and $i,j\in\{q,k\}$ (indicating whether we are comparing a key-key, query-query, or key-query operator pair), $\langle \overline{\mathcal{W}}_{i,\ell_{1}}, \overline{\mathcal{W}}_{j,\ell_{2}} \rangle_{\mathrm{HS}}\to 1$, where $\overline{\mathcal{W}} := \mathcal{W} / || \mathcal{W} ||_{\mathrm{HS}}$ denotes the normalized operator. Since we are working over the space $\mathcal{X} = \mathcal{L}^{2}(\mathbb{T}^{2})$ and considering Fourier kernel parameterization for the operators, this Hilbert-Schmidt inner product can be computed over the kernels, i.e. $\langle \overline{R}_{i,\ell_{1}}, \overline{R}_{j,\ell_{2}}\rangle_{\mathrm{F}}$, where $\langle\cdot, \cdot\rangle_{\mathrm{F}}$ is the Frobenius inner product. The final metric is then
\begin{equation}\label{eqn:opt_exp_metric}
    \overline{\cos\theta}(t) := \frac{1}{4m^2} \sum_{i,j\in\{k,q\}}\sum_{\ell_1,\ell_2\in\{1,...,m\}} \frac{\langle R_{i,\ell_{1}}, R_{j,\ell_{2}}\rangle_{\mathrm{F}}}{|| R^{(t)}_{i,\ell_{1}} ||_{F} || R^{(t)}_{j,\ell_{2}} ||_{F}},
\end{equation}
representing the \textit{average} pairwise cosine similarity at step $t$ of the optimization between the learned operators. Since we again consider a 250-layer continuum transformer, naively computing \Cref{eqn:opt_exp_metric} is computationally expensive; we, therefore, report this metric over a random sampling of $m' = 10$ layers of the network. We repeat this optimization procedure over 5 independent trials, where randomization occurs in the sampling of the datasets across trials and network initializations. 

As required by \Cref{thm:cheng_opt_1}, we constrain the optimization of $\mathcal{W}_{v,\ell}$ to be over operators of the form $\begin{bmatrix}
    0 &0\\ 0&r_{\ell}
\end{bmatrix}$. This procedure is repeated across each of the $k_x$ kernels considered in the previous section with $k_y$ fixed to be the Gaussian kernel; we demonstrate that the Linear $k_x$ kernel satisfies \Cref{assump4} in \Cref{section:kernel_assump}. Notably, the other choices of $k_{x}$ do not satisfy this assumption, yet we find that the convergence result holds robustly to this violation. Specific hyperparameter choices of the training are presented in \Cref{section:opt_exp_details}. The results are presented in \Cref{fig:optimization}. As mentioned, we see that the operators all converge pairwise on average, validating the characterization of the fixed points given by \Cref{thm:cheng_opt_1}. As in the previous experiment, we find results to consistently replicate across training runs, pointing to this finding being robust to initializations.




\section{Discussion}
In this paper, we provided a theoretical characterization of the ICL phenomenon exhibited by continuum transformers and further validated such claims empirically. Unlike in the language learning context, this insight suggests a practical direction for improving ICL for meta-learning in PDEs, namely by estimating $\kappa$ for specific PDE meta-learning tasks and using this directly to parameterize $\widetilde{H}$ in the continuum transformer architecture. 
Such RKHSs are often not explicitly defined but rather induced by distributions on parameters of the PDE and its parametric form. Additionally, the results of \Cref{section:optimization_experiments} suggest that a stronger version of \Cref{thm:cheng_opt_1} should hold, in which the convergence result is independent of $\ell$; proving such a generalization would be of great interest.

\newpage
\bibliographystyle{unsrt}
\bibliography{references}

\appendix
\newpage
\section{Continuum Transformer Token Details}\label{section:appendix_continuum_details}
In \cite{calvello2024continuum}, the setting of interest was in extending the standard vision transformers that typically act on finite-dimensional images $\mathbb{R}^{H\times W}$ to instead act on a continuum, i.e. over a function mapping $\Omega\to\mathbb{R}$ for $\Omega\subset\mathbb{R}^2$. In doing so, they had to generalize the typical notions of patching that are introduced in vision transformers, by considering patches that decompose the domain, i.e. $\\{\Omega_i\\}$ such that $\cup_i \Omega_i = \Omega$ and $\Omega_i \cap \Omega_j = \emptyset$. From here, each patch becomes a separate function $f_i : \Omega_i\to\mathbb{R}$, which necessitated dealing with infinite-dimensional tokens in their architecture, as we assumed directly for our setting.

\section{Time Rollout Meta Learning}\label{section:appendix_time_rollout}
As discussed in the main text, the pairs $(f^{(j)}, u^{(j)})$ are generally time rollout pairs $(f^{(j)}, u^{(j)}) = (u^{((j-1)\Delta t)}, u^{(j\Delta t)})$. We elaborate on this common setting below. In the context of doing time-rollouts, a pre-training dataset of the form $\mathcal{D} := \{(U_{i}^{(0:T-1)}, U_{i}^{(T)})\}$ is available, where $U_{i}^{(0:T)} := [u_{i}^{(0)}, u_{i}^{(\Delta T)},...,u_{i}^{((T-1)\Delta T)}]$, with $u_{i}^{(t)}\in\mathcal{L}^{2}(\Omega)$. Notably, such a setup is equivalent to having $n = T / \Delta T$ training pairs $\{(u_{i}^{((t-1)\Delta T)}, u_{i}^{t\Delta T})\}_ {t=1}^{n}$. It is further assumed that, for each sample $i$, there is a true, deterministic solution operator $\mathcal{G}_ {i}\in\mathcal{O}$, where $\mathcal{O}$ is a space of operators, that maps from the spatial field at some time $t$ to its state at some later $t + \Delta T$. The in-context learning goal, thus, is, given a new sequence of $\widetilde{U}^{(0:T-1)}$ generated by some unseen operator $\widetilde{\mathcal{G}}$, predict $\widetilde{U}^{(T)}$, i.e.

$$
Z_0 = 
\begin{pmatrix}
\tilde{u}^{(0)} & \tilde{u}^{(\Delta t)} & \ldots & \tilde{u}^{((n-1) \Delta t)} & \tilde{u}^{(n \Delta t)} \\\\ 
\tilde{u}^{(\Delta t)} & \tilde{u}^{(2 \Delta t)} & \ldots & \tilde{u}^{(n \Delta t)} & 0
\end{pmatrix}
$$

\section{RKHS Functional Gradient Descent Theorems}\label{section:appendix_icl_bg_full}
We provide here the precise statements of the relevant results from \cite{cheng2023transformers}.

\begin{proposition}
    (Proposition 1 from \cite{cheng2023transformers}) Let $\mathcal{K}$ be an arbitrary kernel. Let $\mathcal{H}$ denote the Reproducing Kernel Hilbert space induced by $\mathcal{K}$. Let $\mathbf{z}^{(i)} = (x^{(i)}, y^{(i)})$ for $i = 1, \dots, n$ be an arbitrary set of in-context examples. Denote the empirical loss functional by 
    \begin{equation}
        L(f) := \sum_{i=1}^n \left( f(x^{(i)}) - y^{(i)} \right)^2.
    \end{equation}
    Let $f_0 = 0$ and let $f_\ell$ denote the gradient descent sequence of $L$ with respect to $\|\cdot\|_{\mathcal{H}}$, as defined in (3.1). Then there exist scalars stepsizes $r_0, \dots, r_m$ such that the following holds:
    
    Let $\widetilde{H}$ be the function defined as 
    \begin{equation}
        \widetilde{H}(U, W)_{i,j} := \mathcal{K}(U^{(i)}, W^{(j)}),
    \end{equation}
    where $U^{(i)}$ and $W^{(j)}$ denote the $i$th column of $U$ and $W$ respectively. Let 
    \begin{equation}
        V_\ell = \begin{bmatrix} 0 & 0 \\ 0 & -r_\ell \end{bmatrix}, \quad B_\ell = I_{d \times d}, \quad C_\ell = I_{d \times d}.
    \end{equation}
    Then for any $x := x^{(n+1)}$, the Transformer's prediction for $y^{(n+1)}$ at each layer $\ell$ matches the prediction of the functional gradient sequence (3.1) at step $\ell$, i.e., for all $\ell = 0, \dots, k$,
    \begin{equation}
        \mathcal{T}_\ell(x; (V, B, C) | z^{(1)}, \dots, z^{(n)}) = -f_\ell(x).
    \end{equation}
\end{proposition}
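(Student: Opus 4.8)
The plan is to first reduce the RKHS gradient descent recursion to an explicit, evaluation-only form, and then verify by induction on the layer index $\ell$ that the transformer forward pass reproduces that recursion coordinate by coordinate. Because $\mathcal{H}$ is the RKHS of $\mathcal{K}$, each point evaluation $f \mapsto f(x^{(i)})$ is a bounded linear functional on $\mathcal{H}$ represented by $\mathcal{K}(x^{(i)},\cdot)$, so the Fréchet gradient of $L(f) = \sum_{i=1}^n (f(x^{(i)}) - y^{(i)})^2$ is $\nabla_{\mathcal{H}} L(f) = 2\sum_{i=1}^n (f(x^{(i)}) - y^{(i)})\,\mathcal{K}(x^{(i)},\cdot)$. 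Writing $\eta_\ell$ for the stepsize in the recursion $f_{\ell+1} = f_\ell - \eta_\ell \nabla_{\mathcal{H}} L(f_\ell)$ and setting $r_\ell := 2\eta_\ell$, evaluation at an arbitrary point $z$ gives $f_{\ell+1}(z) = f_\ell(z) - r_\ell \sum_{i=1}^n (f_\ell(x^{(i)}) - y^{(i)})\,\mathcal{K}(x^{(i)},z)$, a recursion closed in the finitely many numbers $\{f_\ell(x^{(i)})\}_{i=1}^n$ together with the test evaluation $f_\ell(x)$. This is the only analytic step; everything after it is bookkeeping.

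Next I would track the transformer on the input $Z_0$ whose first (block) row is $[x^{(1)},\ldots,x^{(n+1)}]$ and whose second row is $[y^{(1)},\ldots,y^{(n)},0]$, carrying the inductive claim: after $\ell$ layers the first row of $Z_\ell$ is unchanged, and the second row equals $[\,y^{(1)} - f_\ell(x^{(1)}),\ \ldots,\ y^{(n)} - f_\ell(x^{(n)}),\ -f_\ell(x^{(n+1)})\,]$. The base case is immediate since $f_0 = 0$. For the inductive step, the choice $V_\ell = \begin{bmatrix}0&0\\0&-r_\ell\end{bmatrix}$ zeroes the first block of the value array, so the residual-connection increment never perturbs the $x$-block; the choice $B_\ell = C_\ell = I$ makes the attention-weight matrix exactly $[\widetilde{H}]_{i,j} = \mathcal{K}(x^{(i)},x^{(j)})$ (using that the $x$-block is still $[x^{(1)},\ldots,x^{(n+1)}]$ by induction); and the mask $M = \begin{bmatrix}I_{n\times n}&0\\0&0\end{bmatrix}$ restricts the aggregated values to the $n$ demonstration columns. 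Carrying out this masked aggregation, the increment to the second coordinate of column $j$ is $-r_\ell \sum_{i=1}^n \mathcal{K}(x^{(i)},x^{(j)})\,[Z_\ell]_{2,i} = -r_\ell \sum_{i=1}^n \mathcal{K}(x^{(i)},x^{(j)})\,(y^{(i)} - f_\ell(x^{(i)}))$, where I use the induction hypothesis and the symmetry of $\mathcal{K}$ to align with whichever transpose/mask convention is in force. Adding this to $[Z_\ell]_{2,j}$ and comparing with the evaluated recursion above — taken at $z = x^{(j)}$ for $j \le n$ and at $z = x^{(n+1)} = x$ for $j = n+1$ — yields precisely $y^{(j)} - f_{\ell+1}(x^{(j)})$ and $-f_{\ell+1}(x)$ respectively, closing the induction. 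Reading off the $(2,n+1)$ entry then gives $\mathcal{T}_\ell(x; (V,B,C)\mid z^{(1)},\ldots,z^{(n)}) = [Z_\ell]_{2,n+1} = -f_\ell(x)$ for every $\ell$.

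I expect the main obstacle to be purely notational rather than conceptual: one must fix the transpose and masking conventions of the attention update, be careful about which of the $d+1$ stacked coordinates the value matrix acts on, and keep the signs consistent — in particular, the sign flip that makes the transformer emit the \emph{negated} prediction $-f_\ell(x)$ is exactly what forces the $-r_\ell$ entry in $V_\ell$ and the residual form $y^{(i)} - f_\ell(x^{(i)})$ stored in the second row. It is also worth stating cleanly that no representer theorem is invoked at this scalar level: although $f_\ell$ lies in $\mathrm{span}\{\mathcal{K}(x^{(i)},\cdot)\}$, the proof only ever manipulates evaluations of $f_\ell$, and the closed recursion for those evaluations is exactly what a single masked attention layer with kernel nonlinearity computes.
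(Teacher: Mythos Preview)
Your proposal is correct and follows essentially the same inductive strategy as the paper. The paper does not prove this scalar statement directly (it is quoted from \cite{cheng2023transformers}), but its proof of the operator-valued generalization in Appendix~\ref{section:icl_proof} uses the identical invariant you state: the first row of $Z_\ell$ is fixed, and the second row stores the residuals $u^{(j)}_\ell = u^{(j)} + \mathcal{T}_\ell(f^{(j)})$, which in the scalar case is exactly your $y^{(j)} - f_\ell(x^{(j)})$; the layer-by-layer bookkeeping then matches the evaluated gradient step just as you describe.

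The one noteworthy difference is how the gradient step itself is obtained. You compute $\nabla_{\mathcal{H}} L(f) = 2\sum_i (f(x^{(i)}) - y^{(i)})\,\mathcal{K}(x^{(i)},\cdot)$ directly from the reproducing property and remark that no representer theorem is needed. The paper, working in the operator-valued setting, instead invokes a generalized representer theorem (Theorem~\ref{thm:representer}) and a Lagrange-multiplier argument to show the steepest descent direction has the form $G^*(\cdot) = c\sum_i \kappa(f^{(i)},\cdot)(u^{(i)} - Of^{(i)})$. For the scalar RKHS case your direct computation is simpler and entirely sufficient; the representer-theorem route is what the paper needs to handle the operator RKHS, where the gradient is not as immediately read off. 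Beyond this, the two arguments are the same.
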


\begin{proposition}
    (Proposition 2 from \cite{cheng2023transformers}) Let 
    \begin{equation}
        X = \begin{bmatrix} x^{(1)}, & \dots ,& x^{(n+1)} \end{bmatrix}, \quad Y = \begin{bmatrix} y^{(1)}, & \dots, & y^{(n+1)} \end{bmatrix}.
    \end{equation}
    Let $\mathcal{K} : \mathbb{R}^d \times \mathbb{R}^d \to \mathbb{R}$ be a kernel. Assume that $Y|X$ is drawn from the $\mathcal{K}$ Gaussian process. Let the attention activation 
    \begin{equation}
        \widetilde{H}(U, W)_{ij} := \mathcal{K}(U^{(i)}, W^{(j)}),
    \end{equation}
    and consider the functional gradient descent construction in Proposition 1. Then, as the number of layers $L \to \infty$, the Transformer's prediction for $y^{(n+1)}$ at layer $\ell$ (2.4) approaches the Bayes (optimal) estimator that minimizes the in-context loss (2.5).
\end{proposition}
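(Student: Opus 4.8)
The plan is to reduce the infinite-depth limit to a Landweber-iteration convergence statement in the operator RKHS and then match the resulting operator against the kriging Best Linear Unbiased Predictor (BLUP). By \Cref{thm:op_icl}, under the stated parameter choices the layer-$m$ block output is $[Z_{m+1}]_{2,n+1}=-O_m f^{(n+1)}$, where $O_m$ is the $m$-th operator-gradient-descent iterate of the in-context risk $L(O)=\sum_{i=1}^n\|u^{(i)}-Of^{(i)}\|_{\mathcal X}^2$ over $\mathcal O$ started from $O_0=\mathbf 0$; hence the (negated) in-context prediction is $O_m f^{(n+1)}$. The first step is to show $O_m$ converges as $m\to\infty$ to the minimum-$\mathcal O$-norm minimizer $O_\infty$ of $L$ and to obtain a closed form for $O_\infty f^{(n+1)}$. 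Introduce the evaluation operator $\Phi:\mathcal O\to\mathcal X^{n}$, $\Phi O=(Of^{(1)},\dots,Of^{(n)})$, so that $L(O)=\|\Phi O-U_{1:n}\|_{\mathcal X^{n}}^2$ with $U_{1:n}=(u^{(1)},\dots,u^{(n)})$. The operator reproducing property gives $\Phi^{\ast}w=\sum_{i=1}^n\kappa(\cdot,f^{(i)})w_i$ and $\Phi\Phi^{\ast}=\mathbf K(F_{1:n})$, the block Gram operator with $[\mathbf K]_{ij}=\kappa(f^{(i)},f^{(j)})$. Since $L$ is quadratic, operator gradient descent is a Landweber iteration $O_{m+1}=O_m-\omega_m\Phi^{\ast}(\Phi O_m-U_{1:n})$; inductively $O_m\in\operatorname{range}(\Phi^{\ast})$, so $O_m=\Phi^{\ast}c_m$ with $c_m$ the Landweber iterates for $\mathbf K c=U_{1:n}$.

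With step sizes taken (as \Cref{thm:op_icl} permits) constant and small enough, classical Landweber theory yields $c_m\to \mathbf K(F_{1:n})^{-1}U_{1:n}$, hence $O_m f^{(n+1)}\to O_\infty f^{(n+1)}=\mathbf k_{n+1}^{\top}\mathbf K(F_{1:n})^{-1}U_{1:n}$, where $\mathbf k_{n+1}:=\bigl(\kappa(f^{(n+1)},f^{(i)})\bigr)_{i=1}^n$ and the contraction is row-of-operators against column-of-functions; the generalized representer theorem used for \Cref{thm:op_icl} confirms $O_\infty$ has exactly this finite expansion. I would then compute the BLUP directly. By \Cref{defn:gaussian_hilbert}, the cross-covariance operators of the context satisfy $\operatorname{Cov}(u^{(i)},u^{(j)})=\kappa(f^{(i)},f^{(j)})$. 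The BLUP $\widehat u^{(n+1)}=\sum_{i=1}^n A_i u^{(i)}$, $A_i\in\mathcal L(\mathcal X)$, minimizing $\mathbb E\bigl[\|u^{(n+1)}-\sum_i A_i u^{(i)}\|_{\mathcal X}^2\mid F\bigr]$ is characterized by the Hilbert-space normal equations $\operatorname{Cov}\bigl(u^{(n+1)}-\sum_i A_i u^{(i)},\,u^{(j)}\bigr)=0$ for all $j\le n$ (obtained by Fréchet differentiation in each $A_i$, equivalently the projection theorem in $L^2(\mathcal X)$), i.e. $[A_1,\dots,A_n]\,\mathbf K(F_{1:n})=\mathbf k_{n+1}^{\top}$, whence $\widehat u^{(n+1)}=\mathbf k_{n+1}^{\top}\mathbf K(F_{1:n})^{-1}U_{1:n}$. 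Expanding both sides in coordinates shows $\widehat u^{(n+1)}=O_\infty f^{(n+1)}$ even accounting for the non-commutativity of the operator blocks, since $[\mathbf K^{-1}]_{ji}$ enters both expressions in the same sandwiched position.

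This gives the claim: the in-context prediction converges to the BLUP. To close, I would note that with these parameters the in-context loss \Cref{eqn:in-context-loss} equals $\mathbb E\|u^{(n+1)}-O_m f^{(n+1)}\|_{\mathcal X}^2$, so its $m\to\infty$ limit is the mean-square error of the BLUP, which is minimal among linear predictors of $u^{(n+1)}$ from the context; and when $(u^{(1)},\dots,u^{(n+1)})$ are jointly Gaussian the BLUP coincides with the conditional mean, so by \Cref{thm:blup_is_optimal} the limiting in-context predictor is also the Bayes-optimal predictor.

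\textbf{Main obstacle.} The delicate step is the convergence $O_m\to O_\infty$ in an infinite-dimensional operator RKHS. In the scalar setting the Gram object is an $n\times n$ matrix whose invertibility is generic, but here each block of $\mathbf K(F_{1:n})$ is an operator on the infinite-dimensional $\mathcal X$; for the Hilbert–Schmidt integral-operator kernels of \Cref{eqn:int_op_kernel} the Gram operator is compact with non-closed range, so Landweber iteration only semiconverges and $O_m$ need not converge in $\mathcal O$-norm (indeed, a draw of $U_{1:n}$ from the associated Gaussian lies almost surely outside $\operatorname{range}(\mathbf K)$). The resolution is to either impose bounded invertibility of $\mathbf K(F_{1:n})$ — the natural operator-valued nondegeneracy hypothesis, automatically satisfied in the discretized regime where $\mathcal X$ is finite-dimensional — or to track only the smoothed quantity $O_m f^{(n+1)}$ and invoke a source condition placing the representer of $f^{(n+1)}$ in $\operatorname{range}(\mathbf K)$; I expect the argument to take the former route. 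A secondary bookkeeping point, which I would address carefully, is keeping the operator-matrix algebra of $\mathbf K(F_{1:n})^{-1}$ consistent across the gradient-descent and kriging computations; this is harmless because both the representer coefficients and the kriging coefficients solve the single operator-matrix system $\mathbf K(F_{1:n})\,c=U_{1:n}$.
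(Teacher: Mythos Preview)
Your proposal is correct and takes essentially the same approach as the paper's proof of \Cref{prop:blup}: both reduce the transformer iterates to a Neumann/Landweber recursion whose partial sums converge to $\mathbf{k}_{n+1}^{\top}\hat{\mathbb{K}}^{-1}\hat U$, and both identify this expression with the Hilbert-space kriging BLUP via the cited results of Menafoglio and Luschgy. The only cosmetic difference is that the paper tracks the transformer residuals $\hat U_{\ell+1}=(I-\delta\hat{\mathbb{K}})^{\ell}\hat U$ directly rather than passing through your evaluation-operator formulation $O_m=\Phi^{\ast}c_m$, and it handles your ``main obstacle'' exactly as you anticipate, by assuming $\hat{\mathbb{K}}$ is injective and choosing $\delta$ so that $\|I-\delta\hat{\mathbb{K}}\|<1$.
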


\section{Gradient Descent in Operator Space}\label{section:op_gradient}



We start by some defining notation that we will use in the next sections. We denote by $\mathcal{X} = \{x : D_X \to \mathbb{R}\}$ and $\mathcal{Y} = \{y : D_Y \to \mathbb{R}\}$ the separable Hilbert spaces in which our input and output functions lie in respectively. We denote by
$\mathcal{C}(\mathcal{X}, \mathcal{Y})$ the space of continuous operators from $\mathcal{X}$ to $\mathcal{Y}$. Let $\mathcal{L}(\mathcal{Y})$ denote the set of bounded linear operators from $\mathcal{Y}$ to $\mathcal{Y}$.

We begin by defining gradient descent in operator space. Let $\mathcal{O}$ denote a space of bounded operators from $\mathcal{X}$ to $\mathcal{Y}$ equipped with the operator norm $\|\cdot \|_{\mathcal{O}}$. Let $L: \mathcal{O} \to \mathbb{R}$ denote a loss function. The gradient descent of $L$ is defined as the sequence
\begin{equation} \label{eqn:grad_des}
O_{\ell+1} = O_{\ell} - r_{\ell} \nabla L(O_{\ell})
\end{equation}
where $$\nabla L(O) = \argmin_{G \in \mathcal{O}, \|G\|_{\mathcal{O}}=1} \frac{d}{dt} L(O+tG) \bigg|_{t=0}.$$
Suppose we have $n$ input-output function pairs as $f^{(1)}, \ldots, f^{(n)} \in \mathcal{X}$ and $u^{(1)}, \ldots, u^{(n)} \in \mathcal{Y}$ and we define $L$ as the weighted empirical least-squares loss\begin{equation*}
L(O) =  \sum_{i=1}^n \|u^{(i)} - Of^{(i)}\|^2_{\mathcal{Y}}.
\end{equation*}
Then $\nabla L(O)$ takes the form
\begin{equation} \label{eqn:nablaL}
    \nabla L(O) = \argmin_{G \in \mathcal{O}, \|G\|_{\mathcal{O}} = 1} \frac{d}{\,dt} \sum_{i=1}^n \|u^{(i)} - (O + tG)f^{(i)}\|^2_{\mathcal{Y}} \bigg|_{t=0}.  
\end{equation}

For simplification, suppose that we denote by $G^*$ the steepest descent direction. Then the method of Lagrange multipliers states that there exists some $\lambda$ for which the problem in \Cref{eqn:nablaL} is equivalent to
\begin{align} \label{eqn:gstar}
    G^* &= \argmin_{G \in \mathcal{O}} \frac{d}{\,dt} \sum_{i=1}^n \|u^{(i)} - (O + tG)f^{(i)}\|^2_{\mathcal{Y}} \bigg|_{t=0} +\lambda \|G\|^2_{\mathcal{B}(\mathcal{X}, \mathcal{Y})} \\
    &= \argmin_{G \in \mathcal{O}}  \sum_{i=1}^n 2 \langle u^{(i)} - Of^{(i)}, Gf^{(i)} \rangle_{\mathcal{Y}} +\lambda \|G\|^2_{\mathcal{B}(\mathcal{X}, \mathcal{Y})}. 
\end{align}
The second line can be calculated by thinking of the loss function as a composition of functions $L = L_2 \circ L_1$, $L_1 : \mathbb{R} \rightarrow \mathcal{Y}$ which takes 
\begin{equation*}
L_1 (t) = u^{(i)} - (O + tG)f^{(i)}
\end{equation*}
and $L_2: \mathcal{Y} \rightarrow \mathbb{R}$ where
\begin{equation*}
L_2 (y) = \langle y,y \rangle.
\end{equation*}
Then $L_1'(t)(s) = G u^{(i)}$ and $L_2'(y)(h) = 2\langle y,h \rangle$. We have
\begin{align*}
    (L_2 \circ L_1 (t))'(s) &= L_2'(L_1(t)) \circ L_1'(t)(s) \\
    &= L_2' (u^{(i)} - (O + tG)f^{(i)}) (Gf^{(i)}) \\
    &= 2\langle u^{(i)} - (O + tG)f^{(i)}, Gf^{(i)}\rangle_{\mathcal{Y}}.
\end{align*}
Evaluating the derivative at $t=0$ gives the desired expression

\subsection{Gradient Descent in Operator RKHS} \label{sec:graddes}
  
We now introduce the RKHS framework on our space of operators by using an operator-valued kernel. The following definitions were posed in \cite{kadri2016oprkhs} (Section 4, Definitions 3 and 5).

\begin{definition}
    \textbf{Operator-valued Kernel} An operator-valued kernel is a function $\kappa: \mathcal{X} \times \mathcal{X} \to \mathcal{L}(\mathcal{Y})$ such that
    \begin{enumerate} [label = (\roman*)]
        \item $\kappa$ is Hermitian, that is, for all $f_1,f_2 \in \mathcal{X}, \kappa(f_1, f_2) = \kappa(f_2, f_1)^*$ where $^*$ denotes the adjoint operator.
        \item $\kappa$ is positive definite on $\mathcal{X}$ if it is Hermitian and for every $n \in \mathbb{N}$ and all $(f_i,u_i) \in \mathcal{X} \times \mathcal{Y} ~\forall~ i=1,2,\ldots,n$, the matrix with $(i,j)$-th entry $\langle \kappa(f_i, f_j)u_i, u_j\rangle$ is a positive definite matrix.
    \end{enumerate}
\end{definition}

\begin{definition}
\textbf{Operator RKHS} Let $\mathcal{O}$ be a Hilbert space of operators $O : \mathcal{X} \to \mathcal{Y}$, equipped with an inner product $\langle \cdot, \cdot \rangle_{\mathcal{O}}$. We call $\mathcal{O}$ an operator RKHS if there exists an operator-valued kernel $\kappa:\mathcal{X} \times \mathcal{X} \to \mathcal{L}(\mathcal{Y})$ such that
\begin{enumerate} [label = (\roman*)]
    \item The function $g \to \kappa(f,g)u$ for $\mathcal{X}$ belongs to the space $\mathcal{O}$  for all $f\in \mathcal{X}, u \in \mathcal{Y}$.
    \item $\kappa$ satisfies the reproducing kernel property:
    $$\langle O, \kappa(f,\cdot)u \rangle_{\mathcal{O}} = \langle Of, u\rangle_{\mathcal{Y}}$$
    for all $O \in \mathcal{O}, f\in \mathcal{X}, u \in \mathcal{Y}$.
\end{enumerate}
    
\end{definition}

We now state the Representer Theorem for operator RKHS's, as stated in Theorem 11 of \cite{step2023linpderkhs}. Assume that $\mathcal{O}$  can be decomposed orthogonally into $\mathcal{O} = \mathcal{O}_0 \oplus \mathcal{O}_1$
where $\mathcal{O}_0$ is a finite-dimensional Hilbert space spanned by the operators $\{E_k\}^r_{k=1}$ and $\mathcal{O}_1$ is its
orthogonal complement under the inner product $\langle \cdot, \cdot \rangle_{\mathcal{O}}$. We denote the inner product $\langle \cdot ,\cdot \rangle_{\mathcal{O}}$ restricted
to $\mathcal{O}_0, \mathcal{O}_1$ as $\langle \cdot, \cdot \rangle_{\mathcal{O}_0}, \langle \cdot ,\cdot \rangle_{\mathcal{O}_1}$ respectively.

\begin{theorem} \label{thm:representer}
    Let $\psi: \mathbb{R} \rightarrow \mathbb{R}$ be a strictly increasing real-valued function and let $\mathcal{L}(\mathcal{X} \times \mathcal{Y} \times \mathcal{Y}) \rightarrow \mathbb{R}$ be an arbitrary loss function. Then
    \begin{equation*}
\widehat{O} = \argmin_{O \in \mathcal{O}} \mathcal{L}\left(\{ f^{(i)}, u^{(i)}, Of^{(i)}\}_{i=1}^n\right) + \psi(\|\mathrm{proj}_{\mathcal{O}_1}~O\|_{\mathcal{O}_1})
\end{equation*}

    has the form
    \begin{equation*}
\widehat{O} = \sum_{k=1}^rd_kE_k + \sum_{i=1}^n  \kappa(f^{(i)},\cdot)\alpha_i
\end{equation*}

    for some $d \in \mathbb{R}^r$, $E_k \in \mathcal{O}_0$ for all $k \in \{1, \ldots, r\}$ and $\alpha_i \in \mathcal{Y}$ for all $i \in \{1, \ldots, n\}$.
\end{theorem}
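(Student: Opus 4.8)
The plan is to run the standard orthogonal-decomposition argument that underlies every representer theorem, taking care only with the interaction between the data-span subspace and the prescribed splitting $\mathcal{O} = \mathcal{O}_0 \oplus \mathcal{O}_1$. First I would introduce the closed subspace
\[
M := \overline{\operatorname{span}}\Big(\{E_k\}_{k=1}^r \cup \{\kappa(f^{(i)},\cdot)\,\alpha : i=1,\dots,n,\ \alpha\in\mathcal{Y}\}\Big) \subseteq \mathcal{O},
\]
which is exactly the set of operators of the claimed form $\sum_{k} d_k E_k + \sum_i \kappa(f^{(i)},\cdot)\alpha_i$ (modulo the closure issue flagged below). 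Since $\mathcal{O}_0 = \operatorname{span}\{E_k\}_{k=1}^r \subseteq M$, the projection $P_0$ onto $\mathcal{O}_0$ maps $M$ into itself, so $M$ splits compatibly with the given decomposition: $M = \mathcal{O}_0 \oplus (M\cap\mathcal{O}_1)$. Given an arbitrary $O\in\mathcal{O}$, I write $P_1 O = v + w$ as an orthogonal decomposition inside $\mathcal{O}_1$ with $v\in M\cap\mathcal{O}_1$ and $w\in\mathcal{O}_1 \ominus (M\cap\mathcal{O}_1)$, and set $O_M := P_0 O + v \in M$, so that $w = O - O_M$ is orthogonal to all of $M$.

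The two key claims are then immediate. For the loss term: by the reproducing property (read as $\langle O,\kappa(f,\cdot)\alpha\rangle_{\mathcal{O}} = \langle Of,\alpha\rangle_{\mathcal{Y}}$), for each $i$ and every $\alpha\in\mathcal{Y}$ we have $\langle w f^{(i)},\alpha\rangle_{\mathcal{Y}} = \langle w,\kappa(f^{(i)},\cdot)\alpha\rangle_{\mathcal{O}} = 0$ because $\kappa(f^{(i)},\cdot)\alpha\in M$; hence $w f^{(i)} = 0$, so $O f^{(i)} = O_M f^{(i)}$ for all $i$ and the data-fit term $\mathcal{L}(\{f^{(i)},u^{(i)},Of^{(i)}\})$ equals $\mathcal{L}(\{f^{(i)},u^{(i)},O_M f^{(i)}\})$. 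For the regularizer: $\|\operatorname{proj}_{\mathcal{O}_1}O\|_{\mathcal{O}_1}^2 = \|v\|^2 + \|w\|^2 \ge \|v\|^2 = \|\operatorname{proj}_{\mathcal{O}_1}O_M\|_{\mathcal{O}_1}^2$, with equality iff $w = 0$. Since $\psi$ is strictly increasing, the objective at $O$ is at least that at $O_M$, and strictly larger whenever $w\ne 0$; therefore any minimizer $\widehat O$ must have $w=0$, i.e. $\widehat O\in M$, which is precisely the asserted form, with the $E_k$-coefficients giving $d\in\mathbb{R}^r$ and the feature coefficients giving the $\alpha_i\in\mathcal{Y}$.

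The main obstacle I anticipate is functional-analytic bookkeeping rather than anything conceptual: making the identification ``$M$ equals operators of the form $\sum_k d_k E_k + \sum_i \kappa(f^{(i)},\cdot)\alpha_i$'' rigorous. For fixed $i$ the map $\alpha \mapsto \kappa(f^{(i)},\cdot)\alpha$ is bounded linear, so each summand sweeps out a subspace of $\mathcal{O}$, and a finite sum of such subspaces together with the finite-dimensional $\mathcal{O}_0$ need not be closed in general, so the algebraic span and its closure $M$ could differ. One resolves this either by invoking that the relevant ranges are already closed in the settings of interest (e.g. when $\mathcal{Y}$ or the range of each section operator is finite-dimensional), or by phrasing the conclusion up to an approximating limit. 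This is exactly the technical point handled in Theorem 11 of \cite{step2023linpderkhs}, which I would cite for the closure step; the remainder — compatibility of $M$ with $\mathcal{O}_0\oplus\mathcal{O}_1$, the vanishing $w f^{(i)} = 0$ from the reproducing property, and strict monotonicity of $\psi$ forcing $w=0$ — is the classical representer-theorem skeleton transported verbatim to the operator-valued kernel setting.
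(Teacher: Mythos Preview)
Your argument is the standard orthogonal-decomposition proof of the representer theorem, correctly adapted to the operator-valued kernel setting, and the technical caveat you flag about closure of the algebraic span is the right one to worry about. However, you should be aware that the paper does not prove this statement at all: it merely quotes it as Theorem 11 of \cite{step2023linpderkhs} and uses it as a black box in the subsequent lemma. So there is no ``paper's own proof'' to compare against; your proposal in fact supplies more than the paper does. If anything, your write-up is self-contained up to the closure step, whereas the paper defers the entire result to the cited reference.
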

We use this theorem to simplify the expression for  gradient descent in operator space. 
\begin{lemma}
    Given any $O \in \mathcal{O}$, let $G^*$ denote the steepest descent direction of the weighted least-squares loss with respect to $\|\cdot \|_{\mathcal{O}}$ as given in equation \ref{eqn:gstar}. Suppose $\mathcal{O}$ is an RKHS with kernel $\kappa$. Then 
\begin{equation} \label{eqn:c}
    G^*(\cdot) = c \sum_{i=1}^n \kappa(f^{(i)}, \cdot)(u^{(i)} - Of^{(i)})
\end{equation}
   
    for some scalar $c \in \mathbb{R}^{+}$.
\end{lemma}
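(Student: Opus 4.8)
## Proof Plan

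\textbf{Setup and goal.} The lemma asserts that the steepest-descent direction $G^*$ of the weighted least-squares loss $L(O) = \sum_{i=1}^n \|u^{(i)} - Of^{(i)}\|^2_{\mathcal{Y}}$, measured with respect to the operator RKHS norm $\|\cdot\|_{\mathcal{O}}$, has the closed form $G^*(\cdot) = c\sum_{i=1}^n \kappa(f^{(i)},\cdot)(u^{(i)} - Of^{(i)})$ for some positive scalar $c$. The plan is to start from the Lagrangian characterization of $G^*$ already derived in \Cref{eqn:gstar}, namely that $G^*$ minimizes $\sum_{i=1}^n 2\langle u^{(i)} - Of^{(i)}, Gf^{(i)}\rangle_{\mathcal{Y}} + \lambda \|G\|^2_{\mathcal{O}}$ over $G \in \mathcal{O}$, and then apply the operator Representer Theorem (\Cref{thm:representer}) to constrain the form of the minimizer.

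\textbf{Applying the Representer Theorem.} First I would rewrite the linear term $\langle u^{(i)} - Of^{(i)}, Gf^{(i)}\rangle_{\mathcal{Y}}$ so that it becomes a loss functional depending on $G$ only through the evaluations $\{Gf^{(i)}\}_{i=1}^n$ — which it already does, since $u^{(i)} - Of^{(i)}$ is a fixed element of $\mathcal{Y}$ independent of $G$. Thus the objective in \Cref{eqn:gstar} is precisely of the form $\mathcal{L}(\{f^{(i)}, u^{(i)}, Gf^{(i)}\}_{i=1}^n) + \psi(\|\mathrm{proj}_{\mathcal{O}_1} G\|_{\mathcal{O}_1})$ with $\psi(t) = \lambda t^2$ strictly increasing (taking $\mathcal{O}_0 = \{0\}$, so $\mathcal{O}_1 = \mathcal{O}$ and the finite-dimensional sum vanishes). \Cref{thm:representer} then forces $G^* = \sum_{i=1}^n \alpha_i \kappa(f^{(i)},\cdot)$ for some coefficients $\alpha_i \in \mathcal{Y}$.

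\textbf{Identifying the coefficients.} Next I would substitute this ansatz back into the Lagrangian objective and optimize over the $\alpha_i$. Using the reproducing property $\langle O\kappa(f,\cdot)u\rangle_{\mathcal{O}} = \langle Of, u\rangle_{\mathcal{Y}}$, the cross term $\langle G^* f^{(i)}, h\rangle_{\mathcal{Y}}$ and the squared norm $\|G^*\|^2_{\mathcal{O}} = \sum_{i,j}\langle \kappa(f^{(i)},f^{(j)})\alpha_j, \alpha_i\rangle_{\mathcal{Y}}$ both become explicit in the $\alpha$'s and the Gram operators $\kappa(f^{(i)},f^{(j)})$. Setting the Fréchet derivative with respect to each $\alpha_i$ to zero yields a linear system whose solution is $\alpha_i = c\,(u^{(i)} - Of^{(i)})$ for a common scalar $c = 1/\lambda > 0$ (the positivity coming from $\lambda > 0$, which is forced because the unconstrained linear functional is unbounded below without the regularizer). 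Plugging this back gives exactly the claimed expression.

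\textbf{Main obstacle.} The delicate point is the justification that the Representer Theorem applies to the \emph{Lagrangian-relaxed} problem rather than the original norm-constrained steepest-descent problem, and that the scalar $c$ emerging from the relaxation is genuinely positive and independent of $i$ — one must argue that the Lagrange multiplier $\lambda$ is strictly positive and that the first-order stationarity conditions decouple across $i$ in a way that produces a single shared constant. A secondary subtlety is verifying that the linear term in \Cref{eqn:gstar}, being unbounded below over all of $\mathcal{O}$, still yields a well-posed minimization once the regularizer is present, so that \Cref{thm:representer}'s hypotheses (an arbitrary loss plus a strictly increasing penalty on the $\mathcal{O}_1$-norm) are legitimately met; this is where I would spend the most care.
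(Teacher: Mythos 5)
Your proposal follows the paper's argument essentially step for step: invoke the generalized Representer Theorem (\Cref{thm:representer}) with $\mathcal{O}_0 = \{0\}$ and $\psi$ quadratic to obtain $G^* = \sum_i \kappa(f^{(i)},\cdot)\alpha_i$, expand the Lagrangian in terms of the $\alpha_i$ and the Gram operators $\kappa(f^{(i)},f^{(j)})$ using the reproducing property, and read off $\alpha_i \propto u^{(i)} - Of^{(i)}$ from first-order stationarity. The only cosmetic difference is how the positivity of $c$ is justified — you argue $\lambda > 0$ by noting the linear term is unbounded below without the regularizer, while the paper computes the normalizing constant explicitly from $\|G^*\|_{\mathcal{O}} = 1$ and deduces positivity from the positive definiteness of the kernel Gram quadratic form; both are sound.
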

\begin{proof}
    We apply theorem \ref{thm:representer} to equation \Cref{eqn:gstar} with $\mathcal{O}_0$ the trivial subspace and $\psi(s) = \frac{\lambda}{2}s^2$. Then our solution has the form
 \begin{equation} \label{eqn:gstarrkhs}
     G^*(\cdot) = \sum_{i=1}^n \kappa(f^{(i)}, \cdot)\alpha_i.
 \end{equation}
   We also know that
 \begin{equation*}
\|G^*\|_{\mathcal{O}}^2 = \sum_{i,j=1}^n \langle \kappa(f^{(i)}, \cdot)\alpha_i, \kappa(f^{(j)}, \cdot)\alpha_j  \rangle_{\mathcal{O}} = \sum_{i,j=1}^n \langle \kappa(f^{(i)},f^{(j)})\alpha_i, \alpha_j\rangle_{\mathcal{Y}} 
\end{equation*}

 where the last equality follows from the RKHS property. We observe that
 \begin{equation*}
\sum_{i,j=1}^n \langle \kappa(f^{(i)}, \cdot)\alpha_i, \kappa(f^{(j)}, \cdot)\alpha_j  \rangle_{\mathcal{O}} = \sum_{i,j=1}^n \langle \alpha_i, \kappa(f^{(i)},f^{(j)}) \alpha_j\rangle_{\mathcal{Y}}
\end{equation*}

 by the same RKHS property.
 Note that $\kappa(f^{(i)},f^{(j)}) \in \mathcal{L}(\mathcal{Y})$, that is, is a linear operator from $\mathcal{Y}$ to $\mathcal{Y}$.
 Let $U \in \mathcal{X}^n, F \in \mathcal{Y}^n$ be such that $U_i = u^{(i)}$ and $F_i = Of^{(i)}$. Then
 \begin{align*}
     \alpha^* &= \argmin_{\alpha \in \mathcal{Y}^n} \sum_{i,j=1}^n 2 \langle u^{(i)} - Of^{(i)},  \kappa(f^{(i)}, f^{(j)}) \alpha_j \rangle_{\mathcal{Y}} + \lambda  \langle \alpha_i, \kappa(f^{(i)},f^{(j)}) \alpha_j\rangle_{\mathcal{Y}} \\
     &= \argmin_{\alpha \in \mathcal{Y}^n} \sum_{i,j=1}^n  \langle 2(u^{(i)} - Of^{(i)} + \lambda \alpha_i), \kappa(f^{(i)},f^{(j)}) \alpha_j\rangle_{\mathcal{Y}}
 \end{align*}
Taking the gradient of $\alpha$ as zero, that is, $\nabla_{\alpha} = 0$ gives us $\alpha \propto U -OF$ (here we are looking at $\alpha$ as an element of $\mathcal{Y}^n$). We also note that since $\|G^*\|_{\mathcal{O}} = 1$, 
\begin{equation*}
\sum_{i,j=1}^n \langle \alpha_i, \kappa(f^{(i)},f^{(j)}) \alpha_j\rangle_{\mathcal{Y}} = 1.
\end{equation*}
 
It follows that
\begin{equation*}
\alpha^* = \frac{1}{\sum_{i,j=1}^n \langle u^{(i)}-Of^{(i)}, \kappa(f^{(i)},f^{(j)}) (u^{(j)}-Of^{(j)})\rangle_{\mathcal{Y}}} (U-OF).
\end{equation*}
 Therefore
\begin{equation*}
G^*( \cdot) = \frac{1}{\sum_{i,j=1}^n \langle u^{(i)}-Of^{(i)}, \kappa(f^{(i)},f^{(j)}) (u^{(j)}-Of^{(j)})\rangle_{\mathcal{Y}}} \sum_{i=1}^n \kappa(f^{(i)}, \cdot) (u^{(i)} - Of^{(i)}).
\end{equation*}

This gives us an exact form of $c$ as stated in equation \Cref{eqn:c}. 
\end{proof}

\section{In-Context Learning via Gradient Descent Proof}\label{section:icl_proof}
We first recall some notation from section \ref{section:icl}. We are given $n$ demonstrations $(f^{(i)},u^{(i)}) \in \mathcal{X} \times \mathcal{X}$ for all $i \in [n]$. We set $\mathcal{Y} = \mathcal{X}$ for our purpose. The goal is to predict the output function for $f^{(n+1)}$. We stack these in a matrix $Z_0$ that serves as the input to our transformer:
$$Z_0 = [z^{(1)}, \ldots, z^{(n)}, z^{(n+1)}] = \begin{pmatrix} f^{(1)} &f^{(2)}& \ldots &f^{(n)} & f^{(n+1)} \\ u^{(1)} &u^{(2)}& \ldots &u^{(n)} & 0
\end{pmatrix}.$$
$Z_{\ell}$ denotes the output of layer $\ell$ of the transformer as given in equation $\ref{eqn:layerl}$.

\begin{theorem}
Let $\kappa:\mathcal{X} \times \mathcal{X} \rightarrow \mathcal{L}(\mathcal{X})$ be an arbitrary operator-valued kernel and $\mathcal{O}$ be the operator RKHS induced by $\kappa$. Let $\{(f^{(i)}, u^{(i)})\}^{n}_{i=1}$ and $L(O) := \sum_{i=1}^n \| u^{(i)} -Of^{(i)}\|^2_{\mathcal{X}}$. Let $O_0 = \mathbf{0} $ and let $O_{\ell}$ denote the operator obtained from the $\ell$-th operator-valued gradient descent sequence of $L$ with respect to $\|\cdot \|_{\mathcal{O}}$ as defined in \Cref{eqn:grad_des}. Then there exist scalar step sizes $r_0', \ldots, r_m'$ such that if, for an $m$-layer continuum transformer $\mathcal{T} := \mathcal{T}_{m}\circ...\circ\mathcal{T}_{0}$, $[\widetilde{H}(U,W)]_{i,j} = \kappa(u^{(i)},w^{(j)})$, $\mathcal{W}_{v,\ell} = \begin{pmatrix} 0&0\\0&-r'_{\ell}I \end{pmatrix}$, $\mathcal{W}_{q,\ell} = I$, and $\mathcal{W}_{k,\ell} = I$ for each $\ell=0,...,m$, then for any $f\in\mathcal{X}$,
\begin{equation*}
\mathcal{T}_{\ell}(f; (\mathcal{W}_v, \mathcal{W}_q, \mathcal{W}_k) |z^{(1)}, \ldots, z^{(n)})= -O_{\ell} f.
\end{equation*}

 
\end{theorem}

\begin{proof}
    From calculations in subsection \Cref{sec:graddes}, we know that the $\ell$-th step of gradient descent has the form
\begin{equation*}
O_{\ell+1} = O_{\ell} + r'_{\ell} \sum_{i=1}^n \kappa(f^{(i)}, \cdot)(u^{(i)} -O_{\ell}f^{(i)}).
\end{equation*}

From the dynamics of the transformer, it can be easily shown by induction that $X_{\ell} \equiv X_0$ for all $\ell$.

We now prove that
\begin{equation} \label{eqn:same_updates}
    u^{(j)}_{\ell} = u^{(j)} + \mathcal{T}_{\ell}(f^{(j)}; (\mathcal{W}_q, \mathcal{W}_v, \mathcal{W}_k)|z^{(1)}, \ldots, z^{(n)})
\end{equation}
for all $j =0, \ldots, n$. In other words, ``$u^{(j)} - u^{(j)}_{\ell}$ is equal to the predicted label for $f$, if $f^{(j)} = f$".

We prove this by induction. Let's explicitly write the dynamics for layer 1.

\begin{align*}
    Z_1 &= Z_{0} + \left(\widetilde{H}(\mathcal{W}_{q,0}X_{0},\mathcal{W}_{k,0}X_{0})\mathcal{M}(\mathcal{W}_{v,0}Z_{0})^T \right)^T\\
    &= Z_0 + \left(\widetilde{H}(X_{0}, X_{0})
    \begin{bmatrix}
        I&0\\0&0
    \end{bmatrix}\left( \begin{bmatrix}
        0&0\\0&-r_{0}'I
    \end{bmatrix}\begin{bmatrix}
        f^{(1)} & \ldots & f^{(n)} & f\\
        u^{(1)} & \ldots & u^{(n)} & u^{(n+1)}
    \end{bmatrix} \right)^T\right)^T \nonumber \\
    &=Z_0 + \left(\widetilde{H}(X_0, X_0) \begin{bmatrix}
        0& \ldots &0&0\\
        -r_{0}'u^{(1)}& \ldots & -r_{0}'u^{(n)}&0
    \end{bmatrix}^T \right)^T \nonumber \\
    &= Z_0 + \left( \begin{bmatrix}
        \kappa(f^{(1)}, f^{(1)}) & \kappa(f^{(1)}, f^{(2)}) & \ldots& \kappa(f^{(1)}, f^{(n)})& \kappa(f^{(1)}, f)\\
        \kappa(f^{(2)}, f^{(1)}) & & \ldots & \kappa(f^{(2)}, f^{(n)})&\kappa(f^{(2)}, f) \\
        \vdots & & \vdots& & \\
        \kappa(f, f^{(1)}) & &\ldots & \kappa(f, f^{(n)}) &\kappa(f, f)
    \end{bmatrix}
    \begin{bmatrix}
        0&-r_{0}'u^{(1)} \\
        \vdots & \vdots \\
        0&-r_{0}'u^{(n)} \\
        0 &0\\
    \end{bmatrix}
    \right)^T \nonumber \\
\end{align*}
\begin{align*}
    &=Z_0  -r_{0}' \begin{bmatrix}
    0& \ldots&0&0\\
        \sum_{i=1}^n \kappa(f^{(1)}, f^{(i)}) u^{(i)} &\ldots& \sum_{i=1}^n \kappa(f^{(n)}, f^{(i)})u^{(i)} &\sum_{i=1}^n \kappa(f, f^{(i)}) u^{(i)}\\
    \end{bmatrix} \\
    &= \begin{bmatrix}
        f^{(1)} & \ldots & f^{(n)} & f \\
        u^{(1)}  -r_{0}' \sum_{i=1}^n \kappa(f^{(1)}, f^{(i)}) u^{(i)} & \ldots & u^{(n)}   -r_{0}' \sum_{i=1}^n \kappa(f^{(n)}, f^{(i)}) u^{(i)} & -r_{0}' \sum_{i=1}^n \kappa(f, f^{(i)}) u^{(i)}
    \end{bmatrix}.
\end{align*}

By definition, $\mathcal{T}_1(f; (\mathcal{W}_q, \mathcal{W}_v, \mathcal{W}_k)|z^{(1)}, \ldots, z^{(n)}) = -r_{0}' \sum_{i=1}^n \kappa(f, f^{(i)}) u^{(i)}$. if we plug in $x = x^{(j)}$ for any column $j$, we recover the $j$-th column value in the bottom right. In other words, for the 1-layer case, we have

$$u^{(j)}_{1} = u^{(j)} + \mathcal{T}_{1}(f^{(j)}; (\mathcal{W}_q, \mathcal{W}_v, \mathcal{W}_k)|z^{(1)}, \ldots, z^{(n)}).$$

 For the rest of the proof, we use $\mathcal{T}_{\ell}(f)$ to denote $\mathcal{T}_{\ell}(f); (\mathcal{W}_q, \mathcal{W}_v, \mathcal{W}_k)|z^{(1)}, \ldots, z^{(n)})$ to simplify notation. We now prove the induction case. Suppose that $u^{(i)}_{\ell} = u^{(i)} + \mathcal{T}_{\ell}(f^{(i)})$. Then, by exactly the same calculation above, for the $\ell+1$ layer:
\begin{align} \label{eqn:dynamics}
    Z_{\ell+1} &= Z_{\ell} + \left(\widetilde{H}(\mathcal{W}_{q,\ell}X_{\ell},\mathcal{W}_{k,\ell}X_{\ell})\mathcal{M}(\mathcal{W}_{v,\ell}Z_{\ell})^T \right)^T \nonumber \\
    &= Z_{\ell} + \left(\widetilde{H}(X_0,X_0)\mathcal{M}(\mathcal{W}_{v,\ell}Z_{\ell})^T \right)^T \nonumber \\
    &= Z_{\ell}   -r_{\ell}' \begin{bmatrix}
    0& \ldots&0&0\\
        \sum_{i=1}^n \kappa(f^{(1)}, f^{(i)}) u^{(i)}_{\ell} &\ldots& \sum_{i=1}^n \kappa(f^{(n)}, f^{(i)})u^{(i)}_{\ell} &\sum_{i=1}^n \kappa(f, f^{(i)}) u^{(i)}_{\ell}\\
    \end{bmatrix} 
\end{align}
where the second equation follows from the fact that $\mathcal{W}_{q,\ell}$ and $\mathcal{W}_{v,\ell}$ are identity operators and $X_{\ell} = X_0$. We now apply the induction hypothesis to the right hand side and get
$$\begin{bmatrix}
    f^{(1)} & \ldots & f^{(n)} & f \\
    u^{(1)} + \mathcal{T}_{\ell}(f^{(1)}) - r_{\ell}'\sum_{i=1}^n \kappa(f^{(1)}, f^{(i)}) (u^{(i)} + \mathcal{T}_{\ell}(f^{(i)}) & \ldots & & \mathcal{T}_{\ell}(f) - r_{\ell}'\sum_{i=1}^n \kappa(f, f^{(i)}) (u^{(i)} + TF_{\ell}(f^{(i)})
\end{bmatrix}. $$
Now $\mathcal{T}_{\ell+1}(f) = \mathcal{T}_{\ell}(f) - r_{\ell}'\sum_{i=1}^n \kappa(f, f^{(i)}) (u^{(i)} + \mathcal{T}_{\ell}(f^{(i)})$. Substituting $f^{(j)}$ in place of $f$ gives us
$$u_{\ell+1}^{(j)} = u^{(j)} + \mathcal{T}_{\ell+1}(f^{(j)}).$$

We now proceed to the proof of the theorem using induction. At step 0, $Z_0 := 0 = O_0$. Now assume $\mathcal{T}_{\ell}(f; (\mathcal{W}_q, \mathcal{W}_v, \mathcal{W}_k)|z^{(1)}, \ldots, z^{(n)}) = -O_{\ell} f$ holds up to some layer $\ell$. For the next layer $\ell+1$,
\begin{align*}
    \mathcal{T}_{\ell+1}(f; (\mathcal{W}_q, \mathcal{W}_v, \mathcal{W}_k)) &= \mathcal{T}_{\ell}(f; (\mathcal{W}_q, \mathcal{W}_v, \mathcal{W}_k)|z^{(1)}, \ldots, z^{(n)}) - r_{\ell}' \sum_{i=1}^n [\widetilde{H}(X_0,X_0)]_{n+1,i} u^{(i)}_{\ell}\\
    &= \mathcal{T}_{\ell}(f; (\mathcal{W}_q, \mathcal{W}_v, \mathcal{W}_k)|z^{(1)}, \ldots, z^{(n)}) - r_{\ell}' \sum_{i=1}^n [\widetilde{H}(X_0,X_0)]_{n+1,i} (u^{(i)} - O_{\ell}f) \\
    &= -O_{\ell}f - r_{\ell}' \sum_{i=1}^n \kappa(f,f^{(i)}) (u^{(i)} - O_{\ell}f) \\
    &= -O_{\ell+1}f.
\end{align*}
Here, the first line follows from plugging in $\mathcal{W}_q, \mathcal{W}_v, \mathcal{W}_k$ in \Cref{eqn:dynamics}. The second line follows from \Cref{eqn:same_updates} and the induction hypothesis. 
\end{proof}

\section{Best Linear Unbiased Predictor}
\subsection{BLUP Coincides with Bayes Optimal}\label{section:blup_is_optimal}
We now provide the formal statement that the Best Linear Unbiased Predictor (BLUP) and Bayes Optimal predictors coincide in this Hilbert space kriging setting of interest.

\begin{theorem}\label{thm:blup_is_optimal}
    (Theorem 4 of \cite{menafoglio}) Let $X_{n+1} \in \mathcal{X}$ and $\mathbf{X} = (X_1, \ldots, X_n) \in \mathcal{X}^n$ be zero-mean jointly Gaussian random variables. Assume that the covariance operator $C_{\mathbf{X}}$ is invertible. Then
    $$\mathbb{E}(X_{n+1}|\mathbf{X}) = L~ \mathbf{X},$$
    where the $L$ is the linear operator given by $L=C_{X_{n+1}\mathbf{X}}C_{\mathbf{X}}^{-1}$. Hence, the conditional expectation is an unbiased predictor and
minimizes the mean squared prediction error 
$$\mathbb{E}[\| X_{n+1} - f(\mathbf{X}) \|^2_{\mathcal{X}}]$$ among all the
measurable functions $f : \mathcal{X}^n \to \mathcal{X}$. The best predictor, in the mean squared norm
sense, coincides with the Best Linear Unbiased Predictor.
\end{theorem}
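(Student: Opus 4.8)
The plan is to prove the statement in two independent pieces and then combine them. First I would establish the general, non-Gaussian fact that the conditional expectation $\mathbb{E}(X_{n+1}\mid\mathbf{X})$ is \emph{the} minimizer of $\mathbb{E}[\|X_{n+1}-f(\mathbf{X})\|_{\mathcal{X}}^2]$ over all measurable $f:\mathcal{X}^n\to\mathcal{X}$ with $f(\mathbf{X})$ square integrable. Second I would show that under joint Gaussianity this conditional expectation is in fact the affine, and since all means vanish the linear, map $\mathbf{X}\mapsto L\mathbf{X}$ with $L=C_{X_{n+1}\mathbf{X}}C_{\mathbf{X}}^{-1}$. Since $L\mathbf{X}$ is linear and unbiased ($\mathbb{E}[L\mathbf{X}]=L\,\mathbb{E}[\mathbf{X}]=0=\mathbb{E}[X_{n+1}]$), it lies in the class over which the BLUP is defined, and being the global MSE optimum by the first piece it is a fortiori the BLUP; this yields the final sentence of the theorem.

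For the first piece I would work in the Hilbert space $L^2(\Omega;\mathcal{X})$ of square-integrable $\mathcal{X}$-valued random elements with inner product $\langle Y,Z\rangle:=\mathbb{E}\langle Y,Z\rangle_{\mathcal{X}}$. The $\sigma(\mathbf{X})$-measurable square-integrable elements form a closed subspace, and $\mathbb{E}(\cdot\mid\mathbf{X})$ is exactly the orthogonal projection onto it; the Pythagorean identity $\mathbb{E}\|X_{n+1}-f(\mathbf{X})\|_{\mathcal{X}}^2=\mathbb{E}\|X_{n+1}-\mathbb{E}(X_{n+1}\mid\mathbf{X})\|_{\mathcal{X}}^2+\mathbb{E}\|\mathbb{E}(X_{n+1}\mid\mathbf{X})-f(\mathbf{X})\|_{\mathcal{X}}^2$ then yields optimality, with equality iff $f(\mathbf{X})=\mathbb{E}(X_{n+1}\mid\mathbf{X})$ almost surely.

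For the second piece I would set $R:=X_{n+1}-L\mathbf{X}$ and compute the cross-covariance operator $C_{R\mathbf{X}}=C_{X_{n+1}\mathbf{X}}-L\,C_{\mathbf{X}}=C_{X_{n+1}\mathbf{X}}-C_{X_{n+1}\mathbf{X}}C_{\mathbf{X}}^{-1}C_{\mathbf{X}}=0$. Because $(X_{n+1},\mathbf{X})$ is jointly Gaussian and $L$ is bounded, every real linear functional of $(R,\mathbf{X})$ is a linear functional of $(X_{n+1},\mathbf{X})$ and hence Gaussian, so $(R,\mathbf{X})$ is jointly Gaussian in the sense of \Cref{defn:gaussian_hilbert}. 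For jointly Gaussian Hilbert-space-valued elements, vanishing cross-covariance forces independence: this reduces to the scalar statement applied to $\langle\varphi,R\rangle_{\mathcal{X}}$ and $\langle\psi,\mathbf{X}\rangle$ via bilinearity of the covariance, together with the factorization of the characteristic functional $\mathbb{E}\exp(i\langle\varphi,R\rangle_{\mathcal{X}}+i\langle\psi,\mathbf{X}\rangle)$. Independence and $\mathbb{E}[R]=0$ then give $\mathbb{E}(R\mid\mathbf{X})=0$, i.e. $\mathbb{E}(X_{n+1}\mid\mathbf{X})=L\mathbf{X}$, completing the second piece.

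The main obstacle is not the projection bookkeeping but the infinite-dimensional technicalities hidden in the hypothesis. Covariance operators on the still infinite-dimensional space $\mathcal{X}^n$ are trace class, hence compact and never boundedly invertible, so ``$C_{\mathbf{X}}$ invertible'' and the composition $C_{X_{n+1}\mathbf{X}}C_{\mathbf{X}}^{-1}$ must be read in the Cameron--Martin / operator-pseudoinverse sense, and one must check that $L$ is a genuine bounded operator (or restrict to the regime in which the conditional mean is realized in $\mathcal{X}$) --- precisely the care taken in the Hilbert-space kriging references. The second delicate point is rigorously transferring ``uncorrelated $\Rightarrow$ independent'' from the scalar Gaussian case to Hilbert-space-valued jointly Gaussian elements through the projection-based definition and the characteristic functional. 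Granting these, the cleanest write-up invokes the source \cite{menafoglio} for the first two sentences of the theorem and reproduces only the short argument that a linear, unbiased, globally MSE-optimal predictor must coincide with the BLUP.
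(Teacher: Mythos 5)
The paper does not prove this statement; it is quoted verbatim as Theorem 4 of \cite{menafoglio} and used as a black box, so there is no internal proof to compare against. Your outline --- first show $\mathbb{E}(X_{n+1}\mid\mathbf{X})$ is the unrestricted MSE minimizer via orthogonal projection in $L^2(\Omega;\mathcal{X})$, then show for the Gaussian residual $R=X_{n+1}-L\mathbf{X}$ that $C_{R\mathbf{X}}=C_{X_{n+1}\mathbf{X}}-C_{X_{n+1}\mathbf{X}}C_{\mathbf{X}}^{-1}C_{\mathbf{X}}=0$ forces independence and hence $\mathbb{E}(R\mid\mathbf{X})=0$, then observe that a linear unbiased global MSE optimum is a fortiori the BLUP --- is the standard route and is essentially the one taken in the cited source. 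Your flagged caveat is the genuinely delicate point and is correct: on infinite-dimensional $\mathcal{X}$, $C_{\mathbf{X}}$ is trace class and therefore compact, so it has no bounded inverse; ``invertible'' must be read as injective, with $C_{X_{n+1}\mathbf{X}}C_{\mathbf{X}}^{-1}$ interpreted on the Cameron--Martin space (or via pseudoinverse) and one must verify it extends to a bounded operator. Notably, the present paper's own later use of this result (in the proof of \Cref{prop:blup}) assumes only injectivity of $\hat{\mathbb{K}}$ and cites Theorem 2 of \cite{luschgy} for exactly this regularization, which confirms your reading. In short: your proposal is correct in outline, matches the approach of the referenced kriging literature, and honestly identifies where the hard analysis lives; a fully self-contained proof would need to supply the boundedness of $L$ and the characteristic-functional argument for ``uncorrelated $\Rightarrow$ independent'' in the Hilbert-space Gaussian setting, which you reasonably delegate to \cite{menafoglio} and \cite{luschgy}.
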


\subsection{In-Context Learning BLUP Proof}\label{section:blup_proof}
\begin{proposition}
    Let $F = [f^{(1)}, \ldots, f^{(n+1)}], U = [u^{(1)}, \ldots, u^{(n+1)}]$. Let $\kappa:\mathcal{X} \times \mathcal{X} \to \mathcal{L}(\mathcal{X})$ be an operator-valued kernel. Assume that $U|F$ is a $\kappa$ Gaussian random variable per \Cref{defn:gaussian_hilbert}. Let the activation function $\tilde{H}$ of the attention layer be defined as $[\tilde{H}(U,W)]_{i,j} := \kappa(u^{(i)}, w^{(j)})$. Consider the operator gradient descent in Theorem \ref{thm:op_icl}. Then as the number of layers $m \to \infty$, the continuum transformer's prediction at layer $m$ approaches the Best Linear Unbiased Predictor that minimizes the in-context loss in \Cref{eqn:in-context-loss}.
\end{proposition}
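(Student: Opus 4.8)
The plan is to leverage Theorem~\ref{thm:op_icl} to reduce the statement to a purely analytical question about the limit of an operator-valued gradient descent recursion, and then to identify that limit with the BLUP known from the Hilbert space kriging literature (Theorem~\ref{thm:blup_is_optimal}). First I would recall from Theorem~\ref{thm:op_icl} that, with the stated choice of transformer parameters and $[\tilde H(U,W)]_{i,j}=\kappa(u^{(i)},w^{(j)})$, the layer-$\ell$ in-context prediction for a query $f$ equals $-O_\ell f$, where $O_0 = \mathbf 0$ and
\begin{equation*}
O_{\ell+1} = O_\ell + r'_\ell \sum_{i=1}^n \kappa(f^{(i)},\cdot)\,(u^{(i)} - O_\ell f^{(i)}).
\end{equation*}
So it suffices to show that $O_\ell f \to O_\infty f$ for an operator $O_\infty$ which, when applied to the test input $f^{(n+1)}$, coincides with the BLUP $C_{u^{(n+1)}F}C_F^{-1}$ applied to the context outputs $U_{1:n} = (u^{(1)},\dots,u^{(n)})$ — equivalently, with $\mathbb E[u^{(n+1)}\mid u^{(1)},\dots,u^{(n)}]$ under the $\kappa$ Gaussian model of Definition~\ref{defn:gaussian_hilbert}.

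Next I would make the recursion finite-dimensional. By the representer-theorem argument already used in Section~\ref{section:op_gradient}, each $O_\ell$ stays in the span of $\{\kappa(f^{(i)},\cdot)\}_{i=1}^n$, so writing $O_\ell = \sum_i \kappa(f^{(i)},\cdot)\alpha_i^{(\ell)}$ with $\alpha^{(\ell)} \in \mathcal X^n$, and stacking $O_\ell f^{(j)} = \sum_i \kappa(f^{(j)},f^{(i)})\alpha_i^{(\ell)}$, the update on the coefficient vector becomes the linear recursion $\alpha^{(\ell+1)} = \alpha^{(\ell)} + r'_\ell\,(U_{1:n} - \mathbf K\,\alpha^{(\ell)})$, where $\mathbf K = [\kappa(f^{(i)},f^{(j)})]_{i,j}$ is the $n\times n$ block Gram operator acting on $\mathcal X^n$. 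This is exactly (block-)Richardson iteration / gradient descent for the normal equations $\mathbf K \alpha = U_{1:n}$. With an appropriate constant step size $r'_\ell \equiv r$ chosen so that $\|I - r\mathbf K\| < 1$ on the range of $\mathbf K$ (and since $\mathbf K$ is positive semidefinite self-adjoint by the operator-kernel axioms, its spectrum is bounded and bounded below on its range), the residual contracts and $\alpha^{(\ell)} \to \mathbf K^{\dagger} U_{1:n}$. Hence $O_\infty f = \sum_i \kappa(f,f^{(i)}) (\mathbf K^\dagger U_{1:n})_i$; in particular, setting $f = f^{(n+1)}$ and writing $\mathbf k_{n+1} := [\kappa(f^{(n+1)},f^{(1)}),\dots,\kappa(f^{(n+1)},f^{(n)})]$ gives $O_\infty f^{(n+1)} = \mathbf k_{n+1}\,\mathbf K^\dagger\,U_{1:n}$.

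Finally I would identify $\mathbf k_{n+1}\mathbf K^{-1}U_{1:n}$ with the BLUP. Under Definition~\ref{defn:gaussian_hilbert}, the joint law of $(u^{(1)},\dots,u^{(n+1)})$ is zero-mean Gaussian in $\mathcal X^{n+1}$ with cross-covariance structure $\mathbb E[\langle v_1,u^{(i)}\rangle\langle v_2,u^{(j)}\rangle] = \langle v_1,\kappa(f^{(i)},f^{(j)})v_2\rangle$; that is, $C_{u^{(i)}u^{(j)}} = \kappa(f^{(i)},f^{(j)})$, so $C_F = \mathbf K$ and $C_{u^{(n+1)}F} = \mathbf k_{n+1}$. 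Theorem~\ref{thm:blup_is_optimal} then says the Bayes-optimal predictor of $u^{(n+1)}$ given the context outputs is $C_{u^{(n+1)}F}C_F^{-1}U_{1:n} = \mathbf k_{n+1}\mathbf K^{-1}U_{1:n}$, which is exactly $O_\infty f^{(n+1)}$, and this is precisely the (negated) continuum transformer prediction in the limit $m\to\infty$; since by Theorem~\ref{thm:blup_is_optimal} this predictor minimizes $\mathbb E\|u^{(n+1)} - g(U_{1:n})\|_{\mathcal X}^2$, it minimizes the in-context loss of \Cref{eqn:in-context-loss}. I would present the step sizes either as a fixed $r$ with the contraction argument above, or cite the exact $r'_\ell$ furnished by Theorem~\ref{thm:op_icl} and argue the induced residual polynomial $\prod_\ell(I - r'_\ell\mathbf K)$ tends to the projection onto $\ker\mathbf K$.

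The main obstacle I anticipate is the convergence/invertibility bookkeeping in infinite dimensions: $\mathbf K$ need not be boundedly invertible on all of $\mathcal X^n$ (its range may not be closed, and $C_F^{-1}$ in Theorem~\ref{thm:blup_is_optimal} is an assumption, not automatic), so the argument must either assume $C_F = \mathbf K$ is boundedly invertible (matching the hypothesis of Theorem~\ref{thm:blup_is_optimal}) or carefully restrict attention to the closure of the range of $\mathbf K$ and argue that $U_{1:n}$ lies there almost surely and that the test cross-covariance block $\mathbf k_{n+1}$ respects that decomposition. Establishing that the infinite composition $\mathcal T_\infty$ is a well-defined (bounded) operator — i.e. that the residual polynomial converges in the appropriate operator topology and uniformly enough in the query $f$ — is the delicate technical point, and is exactly the "compositional convergence" flagged in the discussion preceding the proposition.
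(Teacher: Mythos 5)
Your proposal is correct and follows essentially the same route as the paper: apply Theorem~\ref{thm:op_icl} to reduce the claim to the operator gradient descent recursion, show the iterates generate a Neumann series in $I-\delta\hat{\mathbb{K}}$ (with $\hat{\mathbb{K}}$ the $n\times n$ Gram block of the kernel), and identify the limit with the Hilbert-space kriging BLUP from \cite{menafoglio}. The one cosmetic difference is that you track the representer-theorem coefficients $\alpha^{(\ell)}$ (Richardson iteration for $\mathbf{K}\alpha = \hat{U}$), whereas the paper tracks the residuals $\hat{U}_\ell$ stored in the transformer and derives $\hat{U}_{\ell+1} = (I-\delta\hat{\mathbb{K}})\hat{U}_\ell$ directly; these are equivalent via $\hat{U}_\ell = \hat{U} - \mathbf{K}\alpha^{(\ell)}$. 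Your cautionary remark on invertibility is well-placed: the paper assumes $\hat{\mathbb{K}}$ is injective and relies on the step-size contraction $\|I-\delta\hat{\mathbb{K}}\|<1$, which implicitly requires $\hat{\mathbb{K}}$ to be bounded below (hence boundedly invertible), so the pseudo-inverse and closed-range caveats you raise are exactly what one would need to address to relax that hypothesis.
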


\begin{proof}
    The notion of Best Linear Unbiased Predictor (BLUP) for a Gaussian random variable when all but one of the variables is observed, has been studied extensively in the literature on `kriging'. This problem was solved for random variables in a Banach space in \cite{luschgy}. Hilbert-space valued random variables would be a special case of this. This was dealt with in \cite{menafoglio}. We use these results to find the BLUP for $u^{(n+1)}$ conditional on $u^{(1)}, \ldots, u^{(n)}$. First we partition the covariance operator $\mathbb{K}(F)$ into $\hat{\mathbb{K}}$, which represents the top-left $n \times n$ block. Let $\nu \in \mathcal{L}(\mathcal{X})^n$ denote the vector given by $\nu_i = \mathbb{K}_{i,n+1}$-
    $$\mathbb{K} = \begin{bmatrix}
        \hat{\mathbb{K}} & \nu \\
        \nu^T & \mathbb{K}_{n+1, n+1}
    \end{bmatrix}.$$
    We note that $\hat{\mathbb{K}}$ is the covariance operator of the random variable $\hat{U} = [u^{(1)}, \ldots, u^{(n)}] \in \mathcal{X}^n$. The vector $\nu$ is the cross-covariance operator between the random variable $u^{(n+1)} \in \mathcal{X}$ and $\hat{U}$. Following Theorem 2 in \cite{luschgy}, we assume that $\hat{\mathbb{K}}$ is injective. Then Theorem 3 in \cite{menafoglio} (which is a Hilbert-space version of Theorem 2 in \cite{luschgy}) states that the Best Linear Unbiased Predictor with respect to the mean squared norm error
    $$E(\| u^{(n+1)} - f(\hat{U})  \|_{\mathcal{X}}^2)$$
    among all measurable functions $f:\mathcal{X}^n \to \mathcal{X}$, is given by
    $$\nu^T \hat{\mathbb{K}}^{-1}\hat{U} = \sum_{i,j=1}^n \kappa(f^{(n+1)}, f^{(i)}) [\hat{\mathbb{K}}^{-1}]_{ij} u^{(j)}.$$

    From the premise of \ref{thm:op_icl}, we know that $\mathcal{W}_{q,\ell} = I, \mathcal{W}_{k,\ell} = I, \mathcal{W}_{v,\ell} = \begin{bmatrix}
        0&0\\0&-r'_{\ell}I
    \end{bmatrix}$. Set $r_{\ell} = \delta I$ for all $\ell$, where $\delta$ is a positive constant satisfying $\|I-\delta \hat{\mathbb{K}}\|<1$, where the norm is the operator norm.
    From previous calculations done in \Cref{eqn:dynamics}, we have:
    $$u^{(i)}_{\ell+1} = u^{(i)}_{\ell} - \delta \sum_{j=1}^n \kappa(f^{(i)}, f^{(j)}) u^{(j)}_{\ell}.$$
    We define the vector $\hat{U}_{\ell}:= [u_{\ell}^{(1)}, \ldots, u_{\ell}^{(n)}]$. Then in vector notation,
    $$\hat{U}_{\ell+1} = (I - \delta \hat{\mathbb{K}}) \hat{U}_{\ell}. $$
    Using induction on $\ell$ gives us:
    $$\hat{U}_{\ell+1} = (I - \delta \hat{\mathbb{K}})^{\ell} \hat{U}.$$
    Again, using \Cref{eqn:dynamics}, we have:
    $$u^{(n+1)}_{\ell+1} = u^{(n+1)}_{\ell} - \delta \sum_{j=1}^n \kappa(f^{(n+1)}, f^{(j)}) u^{(j)}_{\ell}.$$ In vector notation, this gives us
    $$u^{(n+1)}_{\ell+1} = u^{(n+1)}_{\ell} - \delta \nu^T \hat{U}_{\ell} = -\delta \nu^T \sum_{k=0}^{\ell} \hat{U}_{\ell} = - \delta \nu^T \sum_{k=0}^{\ell}(I-\delta \hat{\mathbb{K}})^k \hat{U}.$$
    
    Since $\delta$ was chosen such that $\|I-\delta \hat{\mathbb{K}}\|<1$, $\hat{\mathbb{K}}^{-1} = \delta \sum_{k=0}^{\infty} (I-\delta \hat{\mathbb{K}})^{k}$. Hence, as $L \to \infty$, $u^{(n+1)}_{\ell+1} \to \nu^T\hat{\mathbb{K}}^{-1}\hat{U}$, which is the BLUP of $u^{(n+1)}$.
\end{proof}

\section{Optimization Convergence Proof}\label{section:optimization_theory_proof}
We begin by reviewing some basic notions of differentiation in Hilbert spaces, for which we follow \cite{leiden}. 
\begin{definition}
    Let $L:\mathcal{X} \to \mathbb{R}$ be a functional from a Hilbert space $\mathcal{X} $ to real numbers. We say that $L$ is Fréchet-differentiable at $f_0 \in \mathcal{X}$ if there exists a bounded linear
operator $A : \mathcal{X} \to \mathbb{R}$ and a function $\phi : \mathbb{R} \to \mathbb{R}$ with $\lim_{s \to 0} \frac{\phi(s)}{s} = 0$ such that
$$L (f_0 + h) - L (f_0 ) = Ah + \phi( \|h\|_{\mathcal{X}} ).$$
In this case we define $D L (f_0 ) = A$.
\end{definition}

\begin{definition}
    If $L$ is Fréchet-differentiable at $f_0 \in \mathcal{X}$ with $D L (f_0 ) \in \mathcal{X}$ , then the $\mathcal{X}$-gradient
$\nabla_{\mathcal{X}} L ( f_0 ) \in \mathcal{X}$ is defined by
$$\langle \nabla_{\mathcal{X}} L ( f_0 ), f \rangle = DL(f_0)(f)$$
for all $f \in \mathcal{X}$.
\end{definition}

We now state the assumptions required to prove the results of the optimization landscape. 
\begin{assumption} \label{assump3}
    (Rotational symmetry of distributions) Let $P_F$ denote the distribution of $F = [f^{(1)}, \ldots, f^{(n+1)}]$ and  $\mathbb{K}(F) =\mathbb{E}_{U|F}[U \otimes U]$. We assume that there exists a self-adjoint, invertible operator $\Sigma: \mathcal{X} \to \mathcal{X}$ such that for any unitary operator $\mathcal{M}$, $\Sigma^{1/2}\mathcal{M}\Sigma^{-1/2} F\stackrel{d}{=} F$ and $\mathbb{K}(F) = \mathbb{K}(\Sigma^{1/2}\mathcal{M}\Sigma^{-1/2}F)$.
\end{assumption}

\begin{assumption} \label{assump4}
    For any $F_1, F_2 \in \mathcal{X}^{n+1}$ and any operator $S: \mathcal{X} \to \mathcal{X}$ with an inverse $S^{-1}$, the function $\tilde{H}$ satisfies $\tilde{H}(F_1, F_2) = \tilde{H}(S^* F_1, S^{-1}F_2)$.
\end{assumption}



\begin{theorem} 
Suppose \Cref{assump3} and \Cref{assump4} hold. Let $f(r,\mathcal{W}_{q},\mathcal{W}_{k}) := L \left( \mathcal{W}_{v,\ell} = \left\{\begin{bmatrix}
    0 &0\\ 0&r_{\ell}
\end{bmatrix}\right\}_{\ell=0,\ldots,m}, \mathcal{W}_{q,\ell}, \mathcal{W}_{k,\ell}\right)$, where $L$ is as defined in  \Cref{eqn:in-context-loss}. Let $\mathcal{S} \subset \mathcal{O}^{m+1} \times \mathcal{O}^{m+1}$ denote the set of $(\mathcal{W}_q, \mathcal{W}_k)$ operators with the property that $(\mathcal{W}_q, \mathcal{W}_v) \in \mathcal{S}$ if and only if for all $\ell \in \{0, \ldots, m\}$, there exist scalars $b_{\ell}, c_{\ell} \in \mathbb{R}$ such that $\mathcal{W}_{q,\ell} = b_{\ell}\Sigma^{-1/2}$ and $\mathcal{W}_{k,\ell} = c_{\ell}\Sigma^{-1/2}$. Then
\begin{equation}
    \inf_{(r, (\mathcal{W}_q, \mathcal{W}_k))\in \mathbb{R}^{m+1} \times \mathcal{S}} \sum_{\ell=0}^m \left[(\partial_{r_\ell} f)^2 + \|\nabla_{\mathcal{W}_{q,\ell}} f\|_{\mathrm{HS}}^2 + \|\nabla_{\mathcal{W}_{k,\ell}} f\|_{\mathrm{HS}}^2\right] = 0.
\end{equation}
Here $\nabla_{\mathcal{W}_{q,\ell}}$ and $\nabla_{\mathcal{W}_{k,\ell}}$ denote derivatives with respect to the Hilbert-Schmidt norm $\|\cdot\|_{\mathrm{HS}}$. 
    
\end{theorem}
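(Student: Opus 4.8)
The plan is to exhibit an explicit point in $\mathbb{R}^{m+1}\times\mathcal{S}$ at which all three gradient terms vanish, thereby driving the infimum to $0$ (it is manifestly $\geq 0$ as a sum of squares). The natural candidate is $b_\ell = c_\ell = 0$ for every $\ell$, i.e.\ $\mathcal{W}_{q,\ell} = \mathcal{W}_{k,\ell} = \mathbf{0}$, together with some choice of the scalars $r_\ell$. When the query and key operators are zero, the attention weights $\tilde H(\mathcal{W}_{q,\ell}X_\ell,\mathcal{W}_{k,\ell}X_\ell) = \tilde H(\mathbf{0},\mathbf{0})$ are a \emph{constant} matrix of operators, independent of the data $F$; consequently the map $Z_0\mapsto Z_{m+1}$ is an affine function of the $u^{(i)}$'s whose coefficients depend only on $(r_0,\dots,r_m)$. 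First I would write out $L$ at this configuration as a function of $r$ alone, using \Cref{eqn:loss-as-trace} (the trace/expectation rewriting referenced in the excerpt) so that $L$ becomes a smooth finite-dimensional function $g(r)$ on $\mathbb{R}^{m+1}$; since $L\geq 0$ and $g$ is coercive or at least attains its infimum (the $r=0$ point already gives the trivial predictor, and small perturbations stay bounded), $g$ has a global minimizer $r^\star$, at which $\partial_{r_\ell} g(r^\star) = 0$ for all $\ell$.

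The crux is then to show that at $(r^\star,\mathbf{0},\mathbf{0})$ the Fréchet derivatives $\nabla_{\mathcal{W}_{q,\ell}}f$ and $\nabla_{\mathcal{W}_{k,\ell}}f$ (as Hilbert--Schmidt operators) also vanish. Here is where \Cref{assump3} and \Cref{assump4} enter. I would argue by a symmetrization/invariance argument: by \Cref{assump4} with $S = \Sigma^{1/2}\mathcal{M}\Sigma^{-1/2}$ (so $S^* = S$ by self-adjointness of $\Sigma$, modulo care with $\mathcal{M}$ unitary), the attention weights are invariant under replacing $X_\ell$ by $S X_\ell$ with a compensating transformation of $(\mathcal{W}_{q,\ell},\mathcal{W}_{k,\ell})$; combined with \Cref{assump3}, which says this same rotation fixes the law of $F$ and the conditional covariance $\mathbb{K}(F)$, the functional $f$ is invariant under the group action $(\mathcal{W}_{q,\ell},\mathcal{W}_{k,\ell})\mapsto(\mathcal{W}_{q,\ell}\,\Sigma^{1/2}\mathcal{M}\Sigma^{-1/2},\ \mathcal{W}_{k,\ell}\,\Sigma^{1/2}\mathcal{M}^{-*}\Sigma^{-1/2})$ for every unitary $\mathcal{M}$. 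Differentiating this invariance at $\mathcal{M} = I$ produces a linear constraint on $\nabla_{\mathcal{W}_{q,\ell}}f$ and $\nabla_{\mathcal{W}_{k,\ell}}f$ for each skew-adjoint generator; at the fixed point $\mathcal{W}_{q,\ell}=\mathcal{W}_{k,\ell}=\mathbf{0}$ the orbit is a single point, so the first-order variation in every such direction is zero. To convert "zero along all orbit directions'' into "zero gradient'' I would additionally use that, at $\mathcal{W}_{q,\ell}=\mathbf{0}$, the derivative of $\tilde H(\mathcal{W}_{q,\ell}X_\ell,\mathcal{W}_{k,\ell}X_\ell)$ with respect to $\mathcal{W}_{q,\ell}$ factors through $\mathcal{W}_{k,\ell}X_\ell = \mathbf{0}$ (and symmetrically), so the chain rule already kills the $\mathcal{W}_{q,\ell}$- and $\mathcal{W}_{k,\ell}$-gradients outright — the symmetry argument is really a sanity check, and the vanishing is forced by the product structure $\tilde H(\mathcal{W}_{q,\ell}X_\ell,\mathcal{W}_{k,\ell}X_\ell)$ evaluated where both arguments are the zero vector.

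I would therefore structure the write-up as: (i) reduce $L$ at $(\,\cdot\,,\mathbf{0},\mathbf{0})$ to a smooth function $g(r)$ via \Cref{eqn:loss-as-trace}; (ii) show $g$ attains its infimum at some $r^\star$, so the $r$-partials vanish there; (iii) compute $\nabla_{\mathcal{W}_{q,\ell}}f$ and $\nabla_{\mathcal{W}_{k,\ell}}f$ at $\mathcal{W}_{q,\ell}=\mathcal{W}_{k,\ell}=\mathbf{0}$ via the chain rule through $\tilde H$, observing that each term carries a factor of the \emph{other} operator applied to $X_\ell$, hence is zero; (iv) conclude the summand in \Cref{eqn:constrained_opt} is $0$ at $(r^\star,\mathbf{0},\mathbf{0})\in\mathbb{R}^{m+1}\times\mathcal{S}$, and since the objective is a sum of squares the infimum equals $0$. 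The main obstacle I anticipate is step (iii): making the Fréchet-differentiation of the composite layer map $Z_\ell\mapsto Z_{\ell+1}$ with respect to the operator-valued parameters fully rigorous in the Hilbert--Schmidt topology — in particular verifying that $\nabla_{\mathcal{W}_{q,\ell}}f$ genuinely lands in the Hilbert--Schmidt class and that the chain rule through the nonlinearity $\tilde H$ is valid — and, secondarily, confirming that $g$ in step (ii) actually attains its minimum rather than only approaching an infimum (which may require a mild coercivity observation, e.g.\ that the prediction degenerates to $0$ for large $|r_\ell|$ under the mask structure, or restricting to a compact sublevel set).
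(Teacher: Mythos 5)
Your route is genuinely different from the paper's. The paper's proof (\Cref{section:optimization_theory_proof}) does not exhibit a single (near-)stationary point; it defines a projected gradient flow $(r(t),\mathcal{W}_q(t),\mathcal{W}_k(t))$ that remains in $\mathbb{R}^{m+1}\times\mathcal{S}$ by construction, then uses \Cref{prop:4} (the key lemma, proved via the rotational invariances in \Cref{assump3,assump4} and the trace reformulation in \Cref{lem:loss-as-trace}) to show that the $\mathcal{S}$-projected flow decreases $L$ at least as fast as the unconstrained gradient; since $L\ge 0$, the squared gradient norms cannot stay bounded away from zero along the flow. Your proposal instead tries to certify the infimum at the degenerate corner $b_\ell=c_\ell=0$. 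That is a legitimate alternative strategy in principle, and it would give a shorter proof — at the cost of telling you nothing about the flow on $\mathcal{S}$ itself, which is arguably the substantive content the paper is after.

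There is, however, a concrete gap in your step (iii), and it is not the Fréchet-regularity bookkeeping you flag as the main obstacle. You claim that at $\mathcal{W}_{q,\ell}=\mathcal{W}_{k,\ell}=\mathbf{0}$ the chain rule ``kills'' $\nabla_{\mathcal{W}_{q,\ell}}f$ because the derivative of $\tilde H(\mathcal{W}_{q,\ell}X_0,\mathcal{W}_{k,\ell}X_0)$ with respect to $\mathcal{W}_{q,\ell}$ ``factors through $\mathcal{W}_{k,\ell}X_0=\mathbf{0}$.'' That factorization is not a consequence of the chain rule; it would require $\tilde H$ to be bilinear (or have some multiplicative structure), which is not assumed. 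For a general $\tilde H$, the directional derivative is $D_1\tilde H(\mathbf{0},\mathbf{0})[RX_0]$, and there is no reason for this to vanish. The vanishing you need is forced by \Cref{assump4}, not by the chain rule, and your dismissal of the symmetry argument as ``really a sanity check'' is exactly backwards: take $S=cI$ with $c\neq 0$ in \Cref{assump4} to get $\tilde H(F_1,0)=\tilde H(cF_1,0)$ for every $c\neq 0$; combined with continuity of $\tilde H$ (which is implicit in asking for Fréchet derivatives), this shows $\tilde H(\cdot,0)\equiv\tilde H(0,0)$ is constant, so $f(r,\mathcal{W}_q,\mathbf{0})$ does not depend on $\mathcal{W}_q$ at all and $\nabla_{\mathcal{W}_{q,\ell}}f$ vanishes identically on that slice (symmetrically for $\mathcal{W}_k$). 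This is the correct repair of step (iii), and it needs \Cref{assump4} in an essential way. Your step (ii) concern can be handled as you suggest, or by noting that the theorem only requires the infimum to be zero: $g(r):=f(r,\mathbf{0},\mathbf{0})$ is smooth and bounded below on $\mathbb{R}^{m+1}$, so one can always produce a sequence $r^{(k)}$ with $\|\nabla g(r^{(k)})\|\to 0$ (finite-dimensional gradient flow or Ekeland), without claiming attainment. With those two repairs your argument closes; as written, the product-structure justification for (iii) does not.
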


The key insight to generalizing the proof in the functional gradient descent case in \cite{cheng2023transformers} is that the Hilbert space there is the space of all matrices, $\mathbb{R}^{d \times d}$, equipped with the Frobenius norm. We are now working with an arbitrary Hilbert space. We refer the reader to \cite{leiden} for discussion on the existence and uniqueness of gradient flows in Hilbert spaces.

\begin{proof}
    We define $\mathcal{S}$-gradient flows as
    \begin{align*}
        & \frac{d}{dt} r_{\ell}(t) = \partial_{r_{\ell}} L(r(t), \mathcal{W}_q(t), \mathcal{W}_k(t)) \\
        & \frac{d}{dt} \mathcal{W}_{q,\ell}(t) = \tilde{B}_{\ell}(t) \\
        & \frac{d}{dt} \mathcal{W}_{k,\ell}(t) = \tilde{C}_{\ell}(t)
        \end{align*}
where 
\begin{align*}
    & b_{\ell}(t) := \langle \nabla_{\mathcal{W}_{q,\ell}}L(r(t), \mathcal{W}_q(t), \mathcal{W}_k(t)), \Sigma^{1/2}\rangle ~~\tilde{B}_{\ell}(t):= b_{\ell}(t) \Sigma^{-1/2} \\
    & c_{\ell}(t) := \langle \nabla_{\mathcal{W}_{k,\ell}}L(r(t), \mathcal{W}_q(t), \mathcal{W}_k(t)), \Sigma^{1/2}\rangle ~~\tilde{C}_{\ell}(t):= c_{\ell}(t) \Sigma^{-1/2}.
\end{align*}
We observe that the definition of the functions $B$ and $C$ ensures that $\mathcal{W}_{q,\ell}(t),\mathcal{W}_{k,\ell}(t) \in \mathcal{S}$ for all $t$.

\begin{align}
    &\frac{d}{dt} L(r(t), \mathcal{W}_q(t), \mathcal{W}_k(t)) \nonumber \\
    &= \sum_{\ell = 0}^k \partial_{r_{\ell}} L(r(t), \mathcal{W}_q(t), \mathcal{W}_k(t)) \cdot (-\partial_{r_{\ell}} L(r(t), \mathcal{W}_q(t), \mathcal{W}_k(t)) \label{eqn:B4}\\
    &+ \sum_{\ell = 0}^k \langle \nabla_{\mathcal{W}_{q,\ell}}L(r(t), \mathcal{W}_q(t), \mathcal{W}_k(t)), \tilde{B}_{\ell}(t)\rangle_{HS}\label{eqn:B5} \\
    &+ \sum_{\ell = 0}^k \langle \nabla_{\mathcal{W}_{k,\ell}}L(r(t), \mathcal{W}_q(t), \mathcal{W}_k(t)), \tilde{C}_{\ell}(t)\rangle_{HS}\label{eqn:B6}.
\end{align}
Clearly, \cref{eqn:B4} $= - \sum_{\ell =0}^k (\partial_{r_{\ell}} L(r(t), \mathcal{W}_q(t), \mathcal{W}_k(t)))^2$. From \cref{prop:4}, we note that
\begin{align*}
    (\ref{eqn:B5}) &\leq \sum_{\ell=0}^k \langle \nabla_{\mathcal{W}_{q,\ell}}L(r(t), \mathcal{W}_q(t), \mathcal{W}_k(t)), - \nabla_{\mathcal{W}_{q,\ell}}L(r(t), \mathcal{W}_q(t), \mathcal{W}_k(t))\rangle \\
    &= -\sum_{\ell=0}^k  \|\nabla_{\mathcal{W}_{q,\ell}}L(r(t), \mathcal{W}_q(t), \mathcal{W}_k(t)) \|^2_{HS}.
\end{align*}
Similarly, 
$$(\ref{eqn:B6}) \leq - \sum_{\ell=0}^k  \|\nabla_{\mathcal{W}_{k,\ell}}L(r(t), \mathcal{W}_q(t), \mathcal{W}_k(t)) \|^2_{HS}.$$

We have shown that at any time $t$,
\begin{multline*}
    \frac{d}{dt} L(r(t), \mathcal{W}_q(t), \mathcal{W}_k(t)) \leq - \sum_{\ell =0}^k (\partial_{r_{\ell}} L(r(t), \mathcal{W}_q(t), \mathcal{W}_k(t)))^2 \\
     -\sum_{\ell=0}^k  \|\nabla_{\mathcal{W}_{q,\ell}}L(r(t), \mathcal{W}_q(t), \mathcal{W}_k(t)) \|^2_{HS}  -\sum_{\ell=0}^k  \|\nabla_{\mathcal{W}_{k,\ell}}L(r(t), \mathcal{W}_q(t), \mathcal{W}_k(t)) \|^2_{HS}
\end{multline*}

Suppose \cref{eqn:constrained_opt} does not hold. Then  there exists a positive constant $c>0$ such that for all $t$, 
\begin{multline*}
\sum_{\ell =0}^k (\partial_{r_{\ell}} L(r(t), \mathcal{W}_q(t), \mathcal{W}_k(t)))^2 +\sum_{\ell=0}^k  \|\nabla_{\mathcal{W}_{q,\ell}}L(r(t), \mathcal{W}_q(t), \mathcal{W}_k(t)) \|^2_{HS}\\
      +\sum_{\ell=0}^k  \|\nabla_{\mathcal{W}_{k,\ell}}L(r(t), \mathcal{W}_q(t), \mathcal{W}_k(t)) \|^2_{HS} \geq c.\end{multline*}
    This implies that
    $\frac{d}{dt} L(r(t), \mathcal{W}_q(t), \mathcal{W}_k(t)) \leq -c$ for all $t$, which is a contradiction since $L(\cdot)$ is bounded below by zero. Hence, we have proved that \cref{eqn:constrained_opt} holds.

\end{proof}

\begin{proposition} \label{prop:4}
    Suppose $F, U$ satisfy  \cref{assump3} and $\tilde{H}$ satisfies \cref{assump4}. Suppose $\mathcal{W}_q, \mathcal{W}_k \in \mathcal{S}$. Fix a layer $j \in \{0, \ldots, m\}$. For any $R \in \mathcal{L}(\mathcal{X})$, let $\mathcal{W}_q(R, j)$ denote the collection of operators where $[\mathcal{W}_{q}(R,j)]_j:= \mathcal{W}_{q,j} + R$ and $[\mathcal{W}_{q}(R,j)]_{\ell} = \mathcal{W}_{q,\ell}$ for all $\ell \neq j$. Take an arbitrary  $R \in \mathcal{L}(\mathcal{X})$. Let
    $$\tilde{R} = \frac{1}{d} Tr( R \Sigma^{1/2})\Sigma^{-1/2},$$
    where $R\Sigma^{1/2}$ denotes composition of operators and $Tr$ is the trace of an operator as defined in \Cref{defn:trace}.
    Then for any $j \in \{0, \ldots, m\}$,
    \begin{equation} \label{eqn:prop4-query}
        \frac{d}{dt} L(r, \mathcal{W}_q (tR, j), \mathcal{W}_k) \Big|_{t=0} \leq \frac{d}{dt} L(r, \mathcal{W}_{q}(t\tilde{R},j),\mathcal{W}_k) \Big|_{t=0}
    \end{equation} and
\begin{equation} \label{eqn:prop4-key}
    \frac{d}{dt} L(r, \mathcal{W}_q , \mathcal{W}_k (tR, j)) \Big|_{t=0} \leq \frac{d}{dt} L(r, \mathcal{W}_{q},\mathcal{W}_k(t\tilde{R},j)) \Big|_{t=0}.
\end{equation}
\end{proposition}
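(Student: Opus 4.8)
The plan is to prove the (stronger, and in fact sharp) statement that the two displayed inequalities hold with \emph{equality}, by showing that $\nabla_{\mathcal{W}_{q,j}}L$ (resp.\ $\nabla_{\mathcal{W}_{k,j}}L$), evaluated at any point of $\mathcal{S}$, is a scalar multiple of $\Sigma^{1/2}$; the claim then drops out of a one-line computation with $\tilde{R}$. The argument has three moves: (i) reduce $L$ to a functional of the attention-weight matrices only; (ii) exhibit a symmetry of $L$ in $(\mathcal{W}_q,\mathcal{W}_k)$-space, built from \Cref{assump3} and \Cref{assump4}, that fixes $\mathcal{S}$; (iii) differentiate the symmetry and invoke Schur's lemma.

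\textbf{Step 1 (reduce to attention weights).} I would first record that $L(r,\mathcal{W}_q,\mathcal{W}_k)$ depends on the query/key operators only through the sequence $\{\tilde{H}(\mathcal{W}_{q,\ell}X_\ell,\mathcal{W}_{k,\ell}X_\ell)\}_\ell$: the value operator at each layer is fixed to $\begin{bmatrix}0&0\\0&r_\ell\end{bmatrix}$, so it only ever acts on the output row of $Z_\ell$ and never feeds into the input row; by the argument in the proof of \Cref{thm:op_icl}, $X_\ell\equiv X_0=F$ for all $\ell$, and the running output rows — hence the loss in \Cref{eqn:in-context-loss} — are determined by $(F,U)$ only through those attention-weight matrices (cf.\ \Cref{eqn:dynamics}).

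\textbf{Step 2 (parameter-space symmetry).} This is the crux. Fix a unitary $\mathcal{M}$ on $\mathcal{X}$ and set $\rho:=\Sigma^{1/2}\mathcal{M}\Sigma^{-1/2}$, which by \Cref{assump3} satisfies $\rho F\stackrel{d}{=}F$ and $\mathbb{K}(\rho F)=\mathbb{K}(F)$ (and likewise $\rho^{-1}=\Sigma^{1/2}\mathcal{M}^*\Sigma^{-1/2}$). Applying \Cref{assump4} with $S=\mathcal{M}$ (so $S^*=S^{-1}=\mathcal{M}^*$) gives, at every layer,
\begin{equation*}
\tilde{H}\big(\mathcal{W}_{q,\ell}\rho F,\ \mathcal{W}_{k,\ell}\rho F\big)=\tilde{H}\big(\mathcal{M}^*\mathcal{W}_{q,\ell}\rho F,\ \mathcal{M}^*\mathcal{W}_{k,\ell}\rho F\big)=\tilde{H}\big(T_{q,\ell}F,\ T_{k,\ell}F\big),
\end{equation*}
where $T_{q,\ell}:=\mathcal{M}^*\mathcal{W}_{q,\ell}\Sigma^{1/2}\mathcal{M}\Sigma^{-1/2}$ and $T_{k,\ell}:=\mathcal{M}^*\mathcal{W}_{k,\ell}\Sigma^{1/2}\mathcal{M}\Sigma^{-1/2}$. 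Thus the attention weights produced by $(T_q,T_k)$ on data $F$ equal those produced by $(\mathcal{W}_q,\mathcal{W}_k)$ on data $\rho F$; performing the change of variables $F\mapsto\rho F$ inside the expectation defining $L$ — which preserves the joint law of $(F,U)$ since $\rho F\stackrel{d}{=}F$ and the conditional covariance $\mathbb{K}$ is $\rho$-invariant (\Cref{assump3}), and the value block never mixes in $F$ (Step 1) — yields $L(r,T_q,T_k)=L(r,\mathcal{W}_q,\mathcal{W}_k)$. Finally, on $\mathcal{S}$ one has $\mathcal{W}_{q,\ell}=b_\ell\Sigma^{-1/2}$, $\mathcal{W}_{k,\ell}=c_\ell\Sigma^{-1/2}$, whence $T_{q,\ell}=b_\ell\Sigma^{-1/2}$, $T_{k,\ell}=c_\ell\Sigma^{-1/2}$: the transformation fixes $\mathcal{S}$ pointwise.

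\textbf{Step 3 (differentiate and conclude).} Take a point of $\mathcal{S}$ and perturb only $\mathcal{W}_{q,j}\mapsto\mathcal{W}_{q,j}+tR$. The symmetry of Step 2 leaves every layer $\ell\neq j$ unchanged (they stay in $\mathcal{S}$) and sends the perturbation $R$ to $\mathcal{M}^*R\Sigma^{1/2}\mathcal{M}\Sigma^{-1/2}$, so $L(r,\mathcal{W}_q(tR,j),\mathcal{W}_k)=L(r,\mathcal{W}_q(t\,\mathcal{M}^*R\Sigma^{1/2}\mathcal{M}\Sigma^{-1/2},j),\mathcal{W}_k)$ for all $t$. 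Differentiating at $t=0$ and writing the directional derivative as $\langle\nabla_{\mathcal{W}_{q,j}}L,\cdot\rangle_{\mathrm{HS}}$ gives $\langle\nabla_{\mathcal{W}_{q,j}}L,R\rangle_{\mathrm{HS}}=\langle\nabla_{\mathcal{W}_{q,j}}L,\mathcal{M}^*R\Sigma^{1/2}\mathcal{M}\Sigma^{-1/2}\rangle_{\mathrm{HS}}$ for every $R$ and every unitary $\mathcal{M}$; taking the Hilbert--Schmidt adjoint of the linear map $R\mapsto\mathcal{M}^*R\Sigma^{1/2}\mathcal{M}\Sigma^{-1/2}$ shows that $(\nabla_{\mathcal{W}_{q,j}}L)\Sigma^{-1/2}$ commutes with every unitary, hence equals some $\beta I$, i.e.\ $\nabla_{\mathcal{W}_{q,j}}L=\beta\,\Sigma^{1/2}$. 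Then, using $\langle\Sigma^{1/2},\Sigma^{-1/2}\rangle_{\mathrm{HS}}=\mathrm{Tr}(I)=d$ and cyclicity of the trace,
\begin{equation*}
\frac{d}{dt}L\big(r,\mathcal{W}_q(t\tilde{R},j),\mathcal{W}_k\big)\Big|_{t=0}=\big\langle\beta\Sigma^{1/2},\tilde{R}\big\rangle_{\mathrm{HS}}=\beta\,\mathrm{Tr}\!\big(R\Sigma^{1/2}\big)=\big\langle\beta\Sigma^{1/2},R\big\rangle_{\mathrm{HS}}=\frac{d}{dt}L\big(r,\mathcal{W}_q(tR,j),\mathcal{W}_k\big)\Big|_{t=0},
\end{equation*}
so \Cref{eqn:prop4-query} holds (with equality); \Cref{eqn:prop4-key} is obtained identically by perturbing $\mathcal{W}_{k,j}$ instead.

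The main obstacle I anticipate is Step 2: identifying the precise parameter transformation that is \emph{simultaneously} a symmetry of $L$ through the interplay of \Cref{assump3} and \Cref{assump4} and fixes $\mathcal{S}$, and carefully justifying the change of variables at the level of the joint distribution of $(F,U)$ (one genuinely needs $\mathbb{K}$, not merely $P_F$, to be rotation-invariant, and the value block not to touch $F$). A minor point worth flagging is that the final display uses $\mathrm{Tr}(I)=d$, which presupposes $\mathcal{X}$ is $d$-dimensional (or that all operators in play are supported on a fixed $d$-dimensional subspace), in keeping with the trace convention of \Cref{defn:trace}.
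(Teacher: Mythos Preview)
Your proposal is correct and follows essentially the same route as the paper: both exploit \Cref{assump3} and \Cref{assump4} to show that the directional derivative of $L$ at a point of $\mathcal{S}$ in direction $R$ equals the directional derivative in direction $\mathcal{M}^*R\Sigma^{1/2}\mathcal{M}\Sigma^{-1/2}$ for every unitary $\mathcal{M}$, and both in fact establish \emph{equality} in \Cref{eqn:prop4-query}--\Cref{eqn:prop4-key}. The only packaging difference is in the last step: the paper introduces a Haar-random unitary $\mathcal{U}$, pushes the expectation through (using linearity of the derivative in its direction), and uses $\mathbb{E}_{\mathcal{U}}[\mathcal{U}^*R\Sigma^{1/2}\mathcal{U}]\Sigma^{-1/2}=\tfrac{1}{d}\mathrm{Tr}(R\Sigma^{1/2})\Sigma^{-1/2}=\tilde{R}$, whereas you take the adjoint and invoke Schur's lemma to conclude $\nabla_{\mathcal{W}_{q,j}}L=\beta\Sigma^{1/2}$; these are standard equivalent ways to extract the scalar from a unitarily invariant object. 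One small imprecision: in Step 2 you say the change of variables ``preserves the joint law of $(F,U)$,'' but \Cref{assump3} only gives $\rho F\stackrel{d}{=}F$ together with $\mathbb{K}(\rho F)=\mathbb{K}(F)$; what actually makes the argument go through is that $L$ is \emph{quadratic} in $U$ (so only $\mathbb{K}$ matters), which is exactly what the paper's trace reformulation in \Cref{lem:loss-as-trace} makes explicit.
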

Since the proofs of \Cref{eqn:prop4-query} and \Cref{eqn:prop4-key} are similar, we only present the proof of \Cref{eqn:prop4-query}.

Suppose $\mathcal{W}_q(R)$ be the collection of operators where $[\mathcal{W}_q(R)]_{\ell} = \mathcal{W}_{q,\ell}+R$ for all $\ell \in \{0,\ldots,m\}$.
Since \Cref{prop:4} holds for any $j$, and 
$$\frac{d}{dt} L(r,\mathcal{W}_q(tR), \mathcal{W}_k) = \sum_{j=0}^m \frac{d}{dt} L(r,\mathcal{W}_q(tR,j), \mathcal{W}_k),$$
an immediate consequence of \Cref{prop:4} is that
$$\frac{d}{dt} L(r, \mathcal{W}_q (tR), \mathcal{W}_k) \Big|_{t=0} \leq \frac{d}{dt} L(r, \mathcal{W}_{q}(t\tilde{R}),\mathcal{W}_k) \Big|_{t=0}.$$

The rest of this section is dedicated to proving this proposition. As a first step, we re-write the in-context loss using an inner product.

\begin{lemma} \label{lem:loss-as-trace}
(\textbf{Re-writing the In-Context Loss}) 
Let $\bar{Z}_0$ be the input of the transformer where the last entry of $Y$ has not been masked (or zeroed out):
$$\bar{Z}_0 = \begin{bmatrix}
    f^{(1)} & \ldots & f^{(n)} & f^{(n+1)}\\
    u^{(1)} & \ldots & u^{(n)} & u^{(n+1)}
\end{bmatrix}.$$
Let $\bar{Z}_{\ell}$ be the output of the $(\ell-1)^{th}$ layer of the transformer initialized at $\bar{Z}_0$. Let $\bar{F}_{\ell}$ and $\bar{U}_{\ell}$ denote the first and second row of $\bar{Z}_{\ell}$. Then the in-context loss defined in \Cref{eqn:in-context-loss} has the equivalent form
$$L(\mathcal{W}_v, \mathcal{W}_q, \mathcal{W}_k) = E_{\bar{Z}_0} [\langle (I-M)\bar{U}_{m+1}^T, (I-M)\bar{U}_{m+1}^T \rangle].$$
 The Hilbert space corresponds to the direct sum of Hilbert spaces $\bigoplus_{i=1}^{n+1} \mathcal{X}$, where the inner product is given by 
 $$\langle (v_1, \ldots,v_{n+1})^T, (w_1, \ldots,w_{n+1})^T\rangle = \sum_{i=1}^{n+1}\langle v_i, w_i \rangle_{\mathcal{X}}.$$
\end{lemma}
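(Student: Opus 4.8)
The plan is to run the same $m$-layer transformer on two inputs---the masked $Z_0$, whose last column is $(f^{(n+1)},0)$, and the unmasked $\bar Z_0$, whose last column is $(f^{(n+1)},u^{(n+1)})$---and to show that the two trajectories agree in every entry except the bottom-right one, where at every layer they differ by exactly the constant $u^{(n+1)}$. Since the loss in \Cref{eqn:in-context-loss} is an expectation of $\|[Z_{m+1}]_{2,n+1}+u^{(n+1)}\|_{\mathcal X}^2$, once we know that $[\bar Z_{m+1}]_{2,n+1}=[Z_{m+1}]_{2,n+1}+u^{(n+1)}$ (with the same layer-indexing convention, $\bar Z_{m+1}$ being the output after all $m{+}1$ layers, matching $Z_{m+1}$), the right-hand side of the lemma follows after noting that $(I-M)\bar U_{m+1}^{T}$ has only its last slot nonzero.

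The mechanism is a ``masking decouples the query token'' observation. Unwinding one layer of \Cref{eqn:layerl}, the increment added to the $i$-th token of $Z_\ell$ equals $\sum_{j=1}^{n}[\widetilde H(\mathcal W_{q,\ell}X_\ell,\mathcal W_{k,\ell}X_\ell)]_{i,j}\,(\mathcal W_{v,\ell}Z_\ell)_{:,j}$, where $(\mathcal W_{v,\ell}Z_\ell)_{:,j}$ denotes the $j$-th column (an element of $\mathcal Z$): the sum runs only over $j\le n$ because $\mathcal M=\mathrm{diag}(I_{n\times n},0)$ kills the $(n{+}1)$-st column of $(\mathcal W_{v,\ell}Z_\ell)^{T}$ before the attention weights are applied; $\widetilde H$ sees $Z_\ell$ only through its first row $X_\ell=(f^{(1)}_\ell,\dots,f^{(n+1)}_\ell)$; and, $\mathcal W_{v,\ell}$ being applied tokenwise, $(\mathcal W_{v,\ell}Z_\ell)_{:,j}$ depends only on the $j$-th token $(f^{(j)}_\ell,u^{(j)}_\ell)$. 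Introduce the reduced state $S_\ell:=(f^{(1)}_\ell,\dots,f^{(n+1)}_\ell,u^{(1)}_\ell,\dots,u^{(n)}_\ell)$, which omits the last label $u^{(n+1)}_\ell$. By the displayed increment, the entire new first row and the first $n$ entries of the new second row are deterministic functions of $S_\ell$, so $S_{\ell+1}$ is a function of $S_\ell$; as $S_0$ is literally the same object for $Z_0$ and for $\bar Z_0$, induction gives $S_\ell^{Z}=S_\ell^{\bar Z}$ for all $\ell$. Moreover the increment to $[Z_\ell]_{2,n+1}$, namely $\sum_{j=1}^{n}[\widetilde H(\mathcal W_{q,\ell}X_\ell,\mathcal W_{k,\ell}X_\ell)]_{n+1,j}$ applied to the columns $(\mathcal W_{v,\ell}Z_\ell)_{:,j}$ and read in the second coordinate, is again a function of $S_\ell$, hence equal along the two runs; therefore $[\bar Z_\ell]_{2,n+1}-[Z_\ell]_{2,n+1}$ is constant in $\ell$ and equals its value at $\ell=0$, which is $u^{(n+1)}-0=u^{(n+1)}$. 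Taking $\ell=m+1$, and using $M=\mathcal M=\mathrm{diag}(I_{n\times n},0)$ so that $\langle(I-M)\bar U_{m+1}^{T},(I-M)\bar U_{m+1}^{T}\rangle=\|[\bar Z_{m+1}]_{2,n+1}\|_{\mathcal X}^2=\|[Z_{m+1}]_{2,n+1}+u^{(n+1)}\|_{\mathcal X}^2$, and finally that the law of $\bar Z_0$ is exactly the joint law of $(Z_0,u^{(n+1)})$ arising from a single draw of $\{(f^{(i)},u^{(i)})\}_{i=1}^{n+1}$, taking expectations yields the claimed identity.

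The only genuinely fiddly step is the first one: pinning down the identity ``increment to token $i=\sum_{j\le n}[\widetilde H]_{i,j}(\mathcal W_{v,\ell}Z_\ell)_{:,j}$'' through the double-transpose and block-operator conventions of \Cref{eqn:layerl}, and checking that the query and key operators truly depend on $Z_\ell$ only via $X_\ell$. There are no analytic obstacles---no limits, measurability, or regularity issues---because the lemma is a purely algebraic comparison of two finite forward passes; it holds for arbitrary $\widetilde H$ and for any $(\mathcal W_v,\mathcal W_q,\mathcal W_k)$ in which $\mathcal W_{v,\ell}$ acts tokenwise, in particular for the block form used in \Cref{thm:op_icl} (which zeros the first row and scales the second by $r_\ell$), under which one moreover has $X_\ell\equiv X_0$.
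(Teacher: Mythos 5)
Your proof is correct and takes essentially the same approach as the paper: both rely on the single observation that the mask $\mathcal M=\mathrm{diag}(I_{n\times n},0)$ zeroes the $(n+1)$-th column of $(\mathcal W_{v,\ell}Z_\ell)^T$ before the attention weights act, so the forward passes started at $Z_0$ and at $\bar Z_0$ agree in every slot at every layer except the bottom-right one, where they differ by exactly the constant $u^{(n+1)}$. Your packaging through the reduced state $S_\ell$ is a minor presentational variant of the paper's direct induction on the difference $\bar U_\ell - U_\ell$ (which uses $\bar U_\ell M = U_\ell M$), and your concluding identification of $\|[\bar Z_{m+1}]_{2,n+1}\|_{\mathcal X}^2$ with $\langle (I-M)\bar U_{m+1}^T,(I-M)\bar U_{m+1}^T\rangle$ and the change of expectation variable match the paper's argument.
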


\begin{proof}
We deviate from the mathematical convention where an operator that acts on a function is always written to the left of the function. We adopt the convention that $fO$, where $O$ is an operator and $f$ is a function, also means that the operator acts on the function. This is how the equations henceforth should be interpreted.

    When $\mathcal{W}_{v,\ell} = \begin{bmatrix}
    A_{\ell} & 0 \\ 0 & r_{\ell}I
\end{bmatrix}$, the output at each layer, given in \cref{eqn:layerl} can be simplified as follows:

\begin{align} \label{eqn:x-mult}
    \begin{bmatrix}
        F_{\ell+1} \\ U_{\ell+1}
    \end{bmatrix} &= \begin{bmatrix}
        F_{\ell} \\ U_{\ell}
    \end{bmatrix} + \left(\tilde{H}(\mathcal{W}_{q,\ell}F_{\ell},\mathcal{W}_{k,\ell}F_{\ell}) M\left(\begin{bmatrix}
        A_{\ell} & 0 \\ 0& r_{\ell}I
    \end{bmatrix}\begin{bmatrix}
        F_{\ell} & f^{(n+1)}_{\ell} \\U_{\ell} & 0
    \end{bmatrix}
     \right)^T \right)^T \nonumber \\
    &= \begin{bmatrix}
        F_{\ell} \\ U_{\ell}
    \end{bmatrix} + \left(\tilde{H}(\mathcal{W}_{q,\ell}F_{\ell},\mathcal{W}_{k,\ell}F_{\ell}) \begin{bmatrix}
        I & 0\\0&0
    \end{bmatrix}\left(\begin{bmatrix}
        A_{\ell}F_{\ell} & A_{\ell}f_{\ell}^{(n+1)} \\ r_{\ell}U_{\ell} & 0
    \end{bmatrix}
     \right)^T \right)^T \nonumber \\
    &= \begin{bmatrix}
        F_{\ell} \\ U_{\ell}
    \end{bmatrix} + \left( \begin{bmatrix}
        \kappa(f^{(1)}_{\ell}, f^{(1)}_{\ell}) & \kappa(f^{(1)}_{\ell}, f^{(2)}_{\ell}) & \ldots& \kappa(f^{(1)}_{\ell}, f^{(n)}_{\ell})& \kappa(f^{(1)}_{\ell}, f_{\ell})\\
        \kappa(f^{(2)}_{\ell}, f^{(1)}_{\ell}) & & \ldots & \kappa(f^{(2)}_{\ell}, f^{(n)}_{\ell})&\kappa(f^{(2)}_{\ell}, f_{\ell}) \\
        \vdots & & \vdots& & \\
        \kappa(f_{\ell}, f^{(1)}_{\ell}) & &\ldots & \kappa(f_{\ell}, f^{(n)}_{\ell}) &\kappa(f_{\ell}, f_{\ell})
    \end{bmatrix}
    \begin{bmatrix}
       (A_{\ell}F_{\ell})^T & (r_{\ell}U_{\ell})^T \\
       0 & 0
    \end{bmatrix}
    \right)^T \nonumber \\
    &=  \begin{bmatrix}
        F_{\ell} \\ U_{\ell}
    \end{bmatrix} + \begin{bmatrix}
        A_{\ell}F_{\ell} M\tilde{H}(\mathcal{W}_{q,\ell}F_{\ell},\mathcal{W}_{k,\ell}F_{\ell}) \\
        r_{\ell}U_{\ell}M\tilde{H}(\mathcal{W}_{q,\ell}F_{\ell},\mathcal{W}_{k,\ell}F_{\ell})
    \end{bmatrix}.
\end{align}

Suppose $\bar{Z}_0$ is equal to $Z_0$ everywhere, except the $(2,n+1)^{th}$ entry, where  it is $c$ instead of $0$. We compare the dynamics of $Z_{\ell}$ and $\bar{Z}_{\ell}$. Since $F_{0} = \bar{F}_0$, we see that $F_{\ell} = \bar{F}_{\ell}$. We claim that $\bar{U}_{\ell} - U_{\ell} = [0,  \ldots, 0, c]$. We prove this by induction. This is trivially true for $\ell = 0$. Suppose this holds for step $\ell$, then at step $\ell+1$:
\begin{align*}
    \bar{U}_{\ell+1} &= \bar{U}_{\ell}+r_{\ell}\bar{U}_{\ell}M\tilde{H}(\mathcal{W}_{q,\ell}\bar{F}_{\ell},\mathcal{W}_{k,\ell}\bar{F}_{\ell}) \\
    &= \bar{U}_{\ell}+r_{\ell}U_{\ell}M\tilde{H}(\mathcal{W}_{q,\ell}F_{\ell},\mathcal{W}_{k,\ell}F_{\ell}) \\
    &= U_{\ell} + [0, \ldots, 0, c] + r_{\ell}U_{\ell}M\tilde{H}(\mathcal{W}_{q,\ell}F_{\ell},\mathcal{W}_{k,\ell}F_{\ell}) \\
    &= U_{\ell+1} + [0, \ldots, 0, c],
\end{align*}
where the second equality follows from the fact that $\bar{U}_{\ell}M$ has its $(n+1)^{th}$ entry zeroed out, so by the inductive hypothesis, $\bar{U}_{\ell}M = U_{\ell}M$. The third equality follows from the inductive hypothesis again.

Replacing $c$ by $u^{(n+1)}$ tells us that $[\bar{Z}_{m+1}]_{2,n+1} = [Z_{m+1}]_{2,n+1} + u^{(n+1)}$. Hence, the loss in \Cref{eqn:in-context-loss} can be re-written as
\begin{align} \label{eqn:loss-inner-prod}
    L(\mathcal{W}_v, \mathcal{W}_q, \mathcal{W}_k) &= \mathbb{E}_{\bar{Z}_0} \left[\left\| [\bar{Z}_{m+1}]_{2,n+1}\right\|^2 \right] \nonumber \\
    &= \mathbb{E}_{\bar{Z}_0} \left[ \left\|\bar{U}_{m+1}(I-M) \right\|^2\right]\nonumber \\
    &= \mathbb{E}_{\bar{Z}_0} \left[ \langle(I-M) \bar{U}_{m+1}^T, (I-M) \bar{U}_{m+1}^T \rangle\right],
\end{align}
where the inner product in the last line is the inner product on column vectors in $\bigoplus_{i=1}^{n+1} \mathcal{X}$.
\end{proof}

Since $A_{\ell} = 0$, an immediate corollary of \Cref{eqn:x-mult} is as follows.
\begin{corollary}
    When $A_{\ell} = 0$ for all $\ell \in \{0, \ldots, m\}$, 
$$F_{\ell+1} \equiv F_0.$$
Moreover,
\begin{align*}
    U_{\ell+1} &= U_{\ell} + r_{\ell}U_{\ell}M \tilde{H}(\mathcal{W}_{q,\ell}F_0, \mathcal{W}_{k,\ell}F_0) \\
    &= U_0 \prod_{i = 0}^{\ell}(I+r_iM\tilde{H}(\mathcal{W}_{q,i}F_0, \mathcal{W}_{k,i}F_0)).
\end{align*}

\end{corollary}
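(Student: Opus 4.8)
The plan is to read both claims straight off the layerwise dynamics already computed in \Cref{eqn:x-mult}. First I would specialize that identity to $A_\ell = 0$: the update to the top row then vanishes, so the recursion collapses to
\[
\begin{bmatrix} F_{\ell+1} \\ U_{\ell+1} \end{bmatrix} = \begin{bmatrix} F_\ell \\ U_\ell + r_\ell U_\ell M\tilde{H}(\mathcal{W}_{q,\ell} F_\ell, \mathcal{W}_{k,\ell} F_\ell) \end{bmatrix}.
\]
Reading off the first coordinate gives $F_{\ell+1} = F_\ell$, so a one-line induction on $\ell$ yields $F_\ell \equiv F_0$ for all $\ell$. This is the first assertion, and it also freezes the arguments of $\tilde{H}$ at every layer, so that the layer-$\ell$ attention weights are exactly $\tilde{H}(\mathcal{W}_{q,\ell} F_0, \mathcal{W}_{k,\ell} F_0)$.

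Next I would substitute this back into the second coordinate to obtain the one-step recursion $U_{\ell+1} = U_\ell\bigl(I + r_\ell M\tilde{H}(\mathcal{W}_{q,\ell} F_0, \mathcal{W}_{k,\ell} F_0)\bigr)$, where the multiplication is read in the right-acting $fO$ convention fixed in \Cref{lem:loss-as-trace}. Unrolling this recursion from step $\ell$ down to step $0$ — a second, equally short induction — produces $U_{\ell+1} = U_0\prod_{i=0}^{\ell}\bigl(I + r_i M\tilde{H}(\mathcal{W}_{q,i} F_0, \mathcal{W}_{k,i} F_0)\bigr)$, with the factors composed in increasing order of $i$, which is the ``moreover'' claim.

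I do not expect a genuine obstacle here: the whole content is contained in \Cref{eqn:x-mult}, and the only thing requiring care is bookkeeping — keeping the nonstandard right-multiplication convention consistent between the one-step recursion and the unrolled product, and ordering the factors of $\prod_{i=0}^{\ell}$ as $i = 0, 1, \dots, \ell$ rather than reversed. That is precisely why the statement can be flagged as an immediate corollary of \Cref{eqn:x-mult}.
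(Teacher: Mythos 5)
Your proposal is correct and is exactly the argument the paper intends; the paper simply states this as an immediate corollary of \Cref{eqn:x-mult} without spelling out the two inductions. Your note about the factor ordering $i = 0, \dots, \ell$ under the right-acting convention is the one bookkeeping point worth making explicit, and you handle it correctly.
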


We now define the trace operator and the tensor product, which allows us to use the covariance operator.
\begin{definition}
    Let $x_1, x_2$ be elements of the Hilbert spaces $\mathcal{X}_1$ and $\mathcal{X}_2$. The tensor product operator $(x_1 \otimes x_2): \mathcal{X}_1 \to \mathcal{X}_2$ is defined by:
    $$(x_1 \otimes x_2)x = \langle x_1,x\rangle x_2$$
    for any $x \in \mathcal{X}_1$.
\end{definition}

\begin{definition} \label{defn:trace}
    Let $\mathcal{X}$ be a separable Hilbert space and let $\{e_i\}_{i=1}^{\infty}$ be a complete orthonormal system (CONS) in $\mathcal{X}$. If $T \in L_1(\mathcal{X},\mathcal{X})$ be a nuclear operator. Then we define 
    $$Tr ~T = \sum_{i=1}^{\infty} \langle Te_i, e_i \rangle.$$
\end{definition}
Henceforth, we assume that our Hilbert space is separable.

\begin{lemma} \label{lem:outer-inner-prod}
    Suppose $x_1, x_2 \in \mathcal{X}$, be used to construct the tensor product $x_1 \otimes x_2$. Then
    $$Tr~ x_1 \otimes x_2 = \langle x_1,x_2 \rangle.$$
\end{lemma}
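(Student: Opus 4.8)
The plan is to compute the trace directly against an arbitrary complete orthonormal system, using only the defining property of the tensor product operator, and then recognize the resulting sum as a Parseval expansion of $\langle x_1, x_2\rangle_{\mathcal{X}}$.

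First I would check that $x_1 \otimes x_2$ is an admissible argument for $\mathrm{Tr}$. By construction its range lies in $\mathrm{span}\{x_2\}$, so it is a rank-one (hence finite-rank) operator on $\mathcal{X}$; its unique nonzero singular value is $\|x_1\|_{\mathcal{X}}\|x_2\|_{\mathcal{X}}$, so $x_1\otimes x_2 \in L_1(\mathcal{X},\mathcal{X})$ is nuclear and $\mathrm{Tr}(x_1\otimes x_2)$ is well-defined via \Cref{defn:trace}. I would also record (either as the standard fact that the trace of a nuclear operator is independent of the chosen CONS, or by the absolute-convergence bound given below) that the particular CONS used does not matter.

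Next, fix a CONS $\{e_i\}_{i=1}^{\infty}$ of $\mathcal{X}$. Applying the definition $(x_1\otimes x_2)x = \langle x_1, x\rangle_{\mathcal{X}}\,x_2$ with $x = e_i$ and the linearity of the inner product in its first slot,
$$
\mathrm{Tr}(x_1\otimes x_2) = \sum_{i=1}^{\infty}\langle (x_1\otimes x_2)e_i, e_i\rangle_{\mathcal{X}} = \sum_{i=1}^{\infty}\langle \langle x_1, e_i\rangle_{\mathcal{X}}\,x_2,\ e_i\rangle_{\mathcal{X}} = \sum_{i=1}^{\infty}\langle x_1, e_i\rangle_{\mathcal{X}}\,\langle x_2, e_i\rangle_{\mathcal{X}}.
$$

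Finally I would invoke Parseval's identity for the CONS $\{e_i\}$: the series above converges absolutely since, by Cauchy--Schwarz on $\ell^2$ together with Bessel/Parseval, $\sum_i |\langle x_1, e_i\rangle_{\mathcal{X}}\langle x_2, e_i\rangle_{\mathcal{X}}| \le \|x_1\|_{\mathcal{X}}\|x_2\|_{\mathcal{X}} < \infty$, and its value equals $\langle x_1, x_2\rangle_{\mathcal{X}} = \sum_i \langle x_1, e_i\rangle_{\mathcal{X}}\langle e_i, x_2\rangle_{\mathcal{X}}$ (the latter step using that the space is real, so $\langle e_i, x_2\rangle_{\mathcal{X}} = \langle x_2, e_i\rangle_{\mathcal{X}}$; over $\mathbb{C}$ one inserts the appropriate conjugations, with the tensor product defined using $\langle x_1,\cdot\rangle_{\mathcal{X}}$ in the first argument so that the identity still holds as stated). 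Combining the two displays gives $\mathrm{Tr}(x_1\otimes x_2) = \langle x_1, x_2\rangle_{\mathcal{X}}$. There is essentially no obstacle here; the only point requiring a line of justification is the verification that $x_1\otimes x_2$ is trace-class so that the computation is legitimate and CONS-independent.
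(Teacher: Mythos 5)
Your proof is correct and follows essentially the same route as the paper's: expand both $\mathrm{Tr}(x_1\otimes x_2)$ and $\langle x_1, x_2\rangle$ over a CONS and observe that they yield the identical series $\sum_i \langle x_1, e_i\rangle\langle x_2, e_i\rangle$. The only additions you make — verifying that $x_1\otimes x_2$ is rank-one and hence nuclear so the trace and its CONS-independence are justified, noting absolute convergence, and flagging the real/complex distinction — are tightening steps the paper passes over silently, not a change of strategy.
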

\begin{proof}
    Let $\{e_i\}_{i=1}^{\infty}$ be a CONS for the Hilbert space $\mathcal{X}$. The inner product can be written as
    \begin{align*}
        \langle x_1, x_2 \rangle &= \left\langle \sum_{i=1}^{\infty} \langle x_1, e_i\rangle e_i, \sum_{i=1}^{\infty} \langle x_2, e_i\rangle e_i,  \right\rangle \\
        &= \sum_{i=1}^{\infty} \langle x_1, e_i\rangle \langle x_2, e_i\rangle.
    \end{align*}
Similarly, the trace of the tensor product is given by
\begin{align*}
    Tr~x_1 \otimes x_2 &= \sum_{i=1}^{\infty} \langle (x_1 \otimes x_2)e_i, e_i \rangle \\
    &= \sum_{i=1}^{\infty} \left\langle \langle x_1, e_i\rangle x_2, e_i \right\rangle \\
    &= \sum_{i=1}^{\infty} \langle x_1, e_i\rangle \langle x_2, e_i\rangle.
\end{align*}
This completes the proof of the lemma.
\end{proof}

We also state some properties of these operators without proof:
\begin{enumerate}
    \item Let $A$ be a linear operator from $\mathcal{X} \to \mathcal{X}$ and $x_1 \in \mathcal{X}$. Then $$ (Ax_1 \otimes Ax_1) = A(x_1 \otimes x_1)A^{*},$$
    where $A^{*}$ is the adjoint of $A$. This can be verified by writing out how each of the above operators acts on an arbitrary element $x \in \mathcal{X}$.
    \item The cyclic property of trace, that is, $Tr(AB) = Tr(BA)$ also holds in infinite-dimensional spaces if $A, B$ are both Hilbert-Schmidt operators. This can be verified using a CONS.
\end{enumerate}

The rest of the proof for \Cref{prop:4} has the following outline:
\begin{enumerate}
    \item We reformulate the in-context loss as the expectation of a trace operator.
    \item We introduce a function $\phi: \mathcal{X}^{n+1} \times \mathcal{O} \to \mathcal{O}^{(n+1) \times (n+1)}$ which is used to simplify the loss equation.
    \item We understand the dynamics of $\phi$ over time.
    \item We use all the identities we have proved to complete the proof.
\end{enumerate}

\textit{\underline{In-Context Loss as the Expectation of a Trace-}}\\
Using \Cref{lem:outer-inner-prod}, we can write the in-context loss in \Cref{eqn:loss-inner-prod} as
\begin{align*}
    L(\mathcal{W}_v, \mathcal{W}_q, \mathcal{W}_k) &= \mathbb{E}_{\bar{Z}_0} \left[ Tr[((I-M)\bar{U}_{m+1}^T) \otimes ((I-M)\bar{U}_{m+1}^T)]\right] 
\end{align*}
Since $(I-M)$ is a self-adjoint operator,
$$((I-M)\bar{U}_{m+1}^T) \otimes ((I-M)\bar{U}_{m+1}^T) = (I-M) (\bar{U}^T_{m+1} \otimes \bar{U}^T_{m+1}) (I-M).$$
 Then
$$L(\mathcal{W}_v, \mathcal{W}_q, \mathcal{W}_k) = \mathbb{E}_{\bar{Z}_0} \left[ Tr[(I-M) (\bar{U}_{m+1}^T\otimes \bar{U}^T_{m+1})(I-M)]\right].$$

\textit{\underline{Simplifying the Loss using the Function $\phi$-}}\\
 We drop the bar to simplify notation and denote $F_0$ by $F$. We also fix a $j \in \{0, \ldots,m\}$. We define the functional $\phi^j: \mathcal{X}^{n+1} \times \mathcal{O} \to \mathcal{O}^{(n+1)\times(n+1)}$ as 
$$\phi^j(F,S) = \prod_{\ell = 0}^m (I + r_{\ell}M\tilde{H}(\mathcal{W}_{q,\ell}(S,j) F, \mathcal{W}_{k,\ell}(S) F )).$$
Again, for the purpose of simplifying notation, we suppress the index $j$ since the proof follows through for any index. We use $\phi$ to denote $\phi^j$ and $\mathcal{W}_{q,\ell}(S)$ to denote $\mathcal{W}_{q,\ell}(S,j)$.

The loss can be reformulated as
\begin{align*}
L(\mathcal{W}_v, \mathcal{W}_q, \mathcal{W}_k) &= \mathbb{E}_{Z_0} \left[ Tr[(I-M) ((U_{0}\phi(F,S))^T\otimes (U_{0}\phi(F,S))^T)(I-M)]\right]\\
     &= \mathbb{E}_{Z_0} \left[ Tr[(I-M)  \phi^{*}(F,S)(U_{0}^T\otimes U_0^T)\phi(F,S)(I-M)]\right]\\
    &= \mathbb{E}_{F_0} \left[ Tr[(I-M)  \phi^{*}(F,S) \mathbb{K}(F_0)\phi(F,S)(I-M)]\right].
\end{align*}
where the last equality follows from \Cref{assump3} and the linearity and cyclic property of trace.

Let $\mathcal{U}$ be a uniformly randomly sampled unitary operator. Let $\mathcal{U}_{\Sigma} = \Sigma^{1/2}\mathcal{U}\Sigma^{-1/2}$. Using \Cref{assump3}, $\Sigma^{1/2}\mathcal{M}\Sigma^{-1/2} F\stackrel{d}{=} F $, we see that
\begin{align} \label{eqn:loss-as-trace}
    \frac{d}{dt} L(\mathcal{W}_v, \mathcal{W}_q(tR), \mathcal{W}_k) \bigg|_{t=0} &= \frac{d}{dt} \mathbb{E}_{F_0} \left[ Tr[(I-M)  \phi^{*}(F_0,tR) \mathbb{K}(F_0)\phi(F_0,tR)(I-M)]\right] \bigg|_{t=0} \nonumber \\
    &= 2 \mathbb{E}_{F_0} \left[ Tr[(I-M)  \phi^{*}(F_0,tR) \mathbb{K}(F_0) \frac{d}{dt} \phi(F_0,0)\bigg|_{t=0}(I-M)]\right] \nonumber \\
    &= 2 \mathbb{E}_{F_0, \mathcal{U}} \left[ Tr[(I-M)  \phi^{*}(\mathcal{U}_{\Sigma}F_0,0) \mathbb{K}(F_0) \frac{d}{dt} \phi(\mathcal{U}_{\Sigma}F_0,tR)\bigg|_{t=0}(I-M)]\right]
\end{align}
where the last equality uses the assumption $\mathbb{K}(\mathcal{U}_{\Sigma}F_0) = \mathbb{K}(F_0)$ from \Cref{assump3}.

\textit{\underline{Dynamics of $\phi$ over Time-}}\\
We now prove the following identities-
\begin{equation} \label{eqn:phi-identity}
    \phi(\mathcal{U}_{\Sigma}F_0,0) = \phi(F_0,0)
\end{equation}
         and 
         \begin{equation} \label{eqn:phi-dynamics}
             \frac{d}{dt}\phi(\mathcal{U}_{\Sigma}F_0, tR)\bigg|_{t=0} = \frac{d}{dt} \phi(F_0, t \mathcal{U}_{\Sigma}^TR \mathcal{U}_{\Sigma})\bigg|_{t=0}.
         \end{equation}
We also recall that
\begin{equation} \label{eqn:query-id}
    \mathcal{W}_{q,\ell} \mathcal{U}_{\Sigma} = b_{\ell} \Sigma^{-1/2} \Sigma^{1/2} \mathcal{U} \Sigma^{-1/2} = \mathcal{U} \mathcal{W}_{q,\ell}.
\end{equation}
Similarly, 
\begin{equation} \label{eqn:key-id}
    \mathcal{W}_{k,\ell} \mathcal{U}_{\Sigma} = \mathcal{U}_{\Sigma} \mathcal{W}_{k,\ell}.
\end{equation}

We now verify \Cref{eqn:phi-identity}.
\begin{align*}
    \phi(\mathcal{U}_{\Sigma}F_0,0) &= \prod_{\ell = 0}^m (I + r_{\ell}M\tilde{H}(\mathcal{W}_{q,\ell} \mathcal{U}_{\Sigma}(S) F_0, \mathcal{W}_{k,\ell}\mathcal{U}_{\Sigma}(S) F_0 )) \\
    &= \prod_{\ell = 0}^m (I + r_{\ell}M\tilde{H}(\mathcal{U}_{\Sigma}\mathcal{W}_{q,\ell}(S) F_0, \mathcal{U}_{\Sigma}\mathcal{W}_{k,\ell}(S) F_0 )) \\
    &= \prod_{\ell = 0}^m (I + r_{\ell}M\tilde{H}(\mathcal{W}_{q,\ell}(S) F_0, \mathcal{W}_{k,\ell}(S) F_0 )) = \phi(F_0,0),
\end{align*}
where the last equality follows from the rotational invariance of $\tilde{H}$ from \Cref{assump4}.

We verify the following identity that will be used later on-
\begin{align} \label{eqn:h-dynamics}
    \frac{d}{dt}\tilde{H}((\mathcal{W}_{q,\ell} + tS)\mathcal{U}_{\Sigma}F_0, \mathcal{W}_{k,\ell}\mathcal{U}_{\Sigma}F_0) \bigg|_{t=0} &= \frac{d}{dt}\tilde{H}(\mathcal{U} \mathcal{W}_{q,\ell}F_0 + tS\mathcal{U}_{\Sigma}F_0, \mathcal{U} \mathcal{W}_{k,\ell}F_0) \bigg|_{t=0} \nonumber \\
    &= \frac{d}{dt}\tilde{H}( \mathcal{W}_{q,\ell}F_0 + t\mathcal{U}^TS\mathcal{U}_{\Sigma}F_0,  \mathcal{W}_{k,\ell}F_0) \bigg|_{t=0}, 
\end{align}
where the first equality follows from \Cref{eqn:query-id} and \Cref{eqn:key-id} and the second equality follows from \Cref{assump4}.

Using the chain rule, we get
\begin{align*}
    &\frac{d}{dt}\phi(\mathcal{U}_{\Sigma}F_0, tR) \bigg|_{t=0} \\
    &= \prod_{j=0}^m \left( \prod_{\ell=0}^{j-1}(I+M\tilde{H}(\mathcal{W}_{q,\ell}\mathcal{U}_{\Sigma}F_0, \mathcal{W}_{k,\ell}\mathcal{U}_{\Sigma}F_0))\right) M \frac{d}{dt} \tilde{H}((\mathcal{W}_{q,\ell} + tR)\mathcal{U}_{\Sigma}F_0, \mathcal{W}_{k,\ell}\mathcal{U}_{\Sigma}F_0) \bigg|_{t=0} \\ 
    &\qquad \qquad \qquad \qquad \qquad \qquad \qquad \qquad \qquad \left(\prod_{\ell=j+1}^{m}(I+M\tilde{H}(\mathcal{W}_{q,\ell}\mathcal{U}_{\Sigma}F_0, \mathcal{W}_{k,\ell}\mathcal{U}_{\Sigma}F_0)) \right) \\
    &= \prod_{j=0}^m \left( \prod_{\ell=0}^{j-1}(I+M\tilde{H}(\mathcal{W}_{q,\ell}F_0, \mathcal{W}_{k,\ell}F_0))\right) M \frac{d}{dt}\tilde{H}( \mathcal{W}_{q,\ell}F_0 + t\mathcal{U}^TR\mathcal{U}_{\Sigma}F_0,  \mathcal{W}_{k,\ell}F_0) \bigg|_{t=0} \\
    &\qquad \qquad \qquad \qquad \qquad \qquad \qquad \qquad \qquad \qquad \left(\prod_{\ell=j+1}^{m}(I+M\tilde{H}(\mathcal{W}_{q,\ell}F_0, \mathcal{W}_{k,\ell}F_0)) \right) \\
    &= \frac{d}{dt}\phi(\mathcal{U}_{\Sigma}F_0, t\mathcal{U}^TR\mathcal{U}_{\Sigma}R) \bigg|_{t=0},
\end{align*}
where the second equality follows from \Cref{eqn:query-id}, \Cref{assump4} and \Cref{eqn:h-dynamics}.

\textit{\underline{Putting it Together-}}\\
Continuing from \Cref{eqn:loss-as-trace}, we get
\begin{align*}
    \frac{d}{dt} L(\mathcal{W}_v, \mathcal{W}_q(tR), \mathcal{W}_k) \bigg|_{t=0} &= 2 \mathbb{E}_{F_0, \mathcal{U}} \left[ Tr[(I-M)  \phi^{*}(\mathcal{U}_{\Sigma}F_0,0) \mathbb{K}(F_0) \frac{d}{dt} \phi(\mathcal{U}_{\Sigma}F_0,t\mathcal{U}^TR\mathcal{U}_{\Sigma})\bigg|_{t=0}(I-M)]\right] \\
    &= 2 \mathbb{E}_{F_0, \mathcal{U}} \left[ Tr[(I-M)  \phi^{*}(F_0,0) \mathbb{K}(F_0) \frac{d}{dt} \phi(F_0,t\mathcal{U}^TR\mathcal{U}_{\Sigma})\bigg|_{t=0}(I-M)]\right] \\
    &= 2 \mathbb{E}_{F_0} \left[ Tr[(I-M)  \phi^{*}(F_0,0) \mathbb{K}(F_0) \frac{d}{dt} \phi(F_0,t \mathbb{E}_{\mathcal{U}}\left[\mathcal{U}^TR\mathcal{U}_{\Sigma} \right])\bigg|_{t=0}(I-M)]\right] \\
    &= 2 \mathbb{E}_{F_0} \left[ Tr[(I-M)  \phi^{*}(F_0,0) \mathbb{K}(F_0) \frac{d}{dt} \phi(F_0,t  \tilde{R})\bigg|_{t=0}(I-M)]\right] \\
    &= \frac{d}{dt} L(\mathcal{W}_v, \mathcal{W}_q(t\tilde{R}), \mathcal{W}_k) \bigg|_{t=0}.
\end{align*}
Here, the first equality follows from \Cref{eqn:phi-dynamics} and the second equality follows from \Cref{eqn:phi-identity}. The third equality uses the linearity of $\frac{d}{dt}\phi(F_0, tS)$ in $S$ and the fact that it is jointly continuously differentiable. This concludes the proof.

\section{Experiment Kernel Definitions}\label{section:appendix_kernel_def}
We provide below the explicit definitions of the kernels studied in the experiments of \Cref{section:experimental_setup}:

\vspace{1em}
\noindent
\begin{minipage}[t]{0.48\textwidth}
\centering
\textbf{Kernels for $k_x(f, f')$}
\vspace{0.5em}

\begin{tabular}{@{}cc@{}}
\toprule
\textbf{Name} & \textbf{Definition} \\
\midrule
Linear &
$ \langle f, f' \rangle_{\mathcal{X}} $ \\
Laplacian &
$ \exp\left( -\frac{\| f - f' \|_{\mathcal{X}}}{\sigma_{x}} \right) $ \\
Gradient RBF &
$ \exp\left( -\frac{ \| \nabla f - \nabla f' \|_{\mathcal{X}}^2 }{2\sigma_{x}^2} \right) $ \\
Energy &
$ \exp\left( -\frac{ ( \| f \|_{\mathcal{X}}^2 - \| f' \|_{\mathcal{X}}^2 )^2 }{2\sigma_{x}^2} \right) $ \\
\bottomrule
\end{tabular}
\end{minipage}%
\hfill
\begin{minipage}[t]{0.48\textwidth}
\centering
\textbf{Kernels for $k_y(x, y)$}
\vspace{0.5em}

\begin{tabular}{@{}cc@{}}
\toprule
\textbf{Name} & \textbf{Definition} \\
\midrule
Laplace &
$ \exp\left(-\frac{\| x - y \|_{2}}{\sigma_{y}}\right) $ \\
Gaussian &
$ \exp\left(-\frac{\| x - y \|_{2}^2}{2\sigma_{y}^2} \right) $ \\
\bottomrule
\end{tabular}
\end{minipage}

\newpage
\section{BLUP Additional Experimental Results}\label{section:addn_exp_results}
We present below the in-context learning predictions for each pair of the true data-generating kernel and choice of $k_{x}$ kernel, in the below visualizations. Paralleling the results seen in \Cref{fig:icl_matching_kernel}, we find that, when $k_{x}$ matches the data-generating kernel, the resulting field prediction structurally matches the true $\mathcal{O} f$. Across these visualizations, we fix $k_{y}$ to be the Gaussian for simplicity of presentation.

\begin{figure}[H]
  \centering 
  \includegraphics[width=0.8\textwidth]{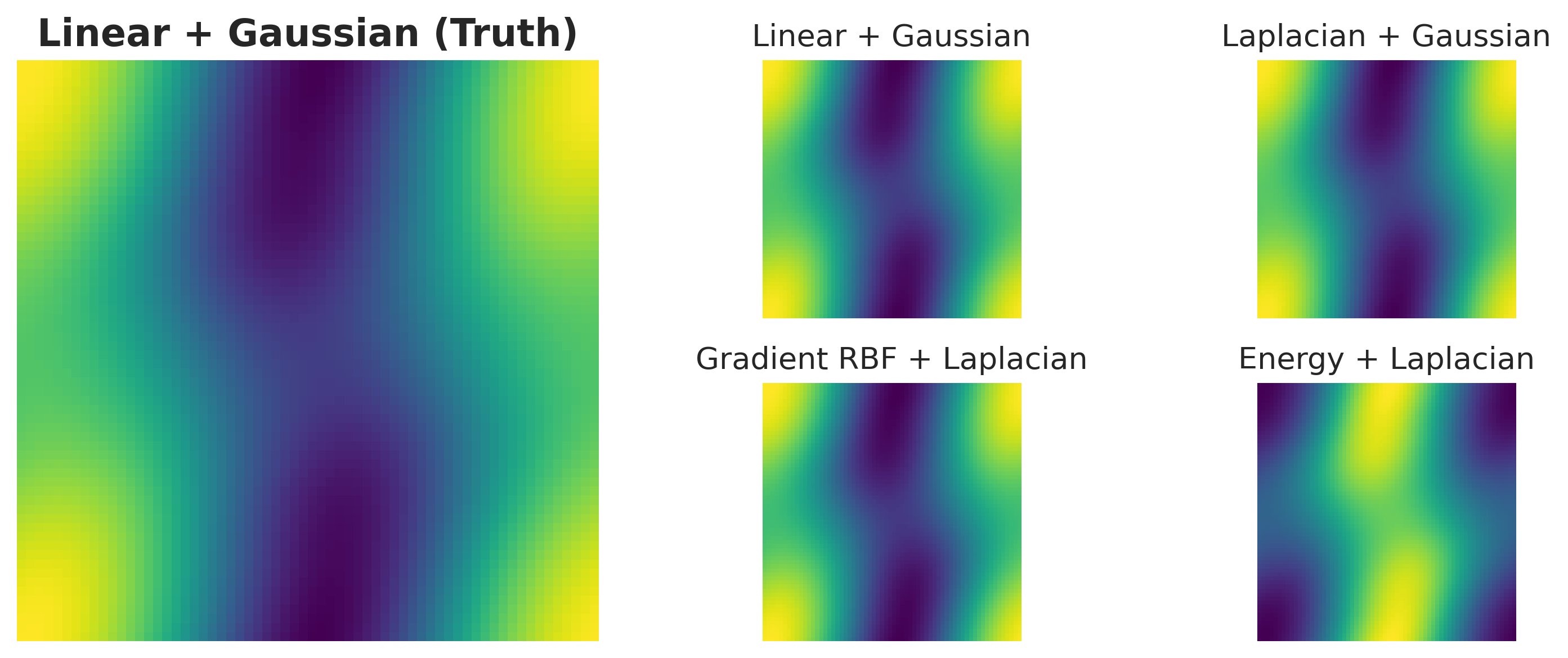}
  \caption{In-context predictions for $(k_{x}, k_{y})$ being (Linear, Gaussian), (Laplacian, Gaussian), (Gradient RBF, Laplace), and (Energy, Laplace) with the data-generating kernel being (Linear, Gaussian).} 
\end{figure}

\begin{figure}[H]
  \centering 
  \includegraphics[width=0.8\textwidth]{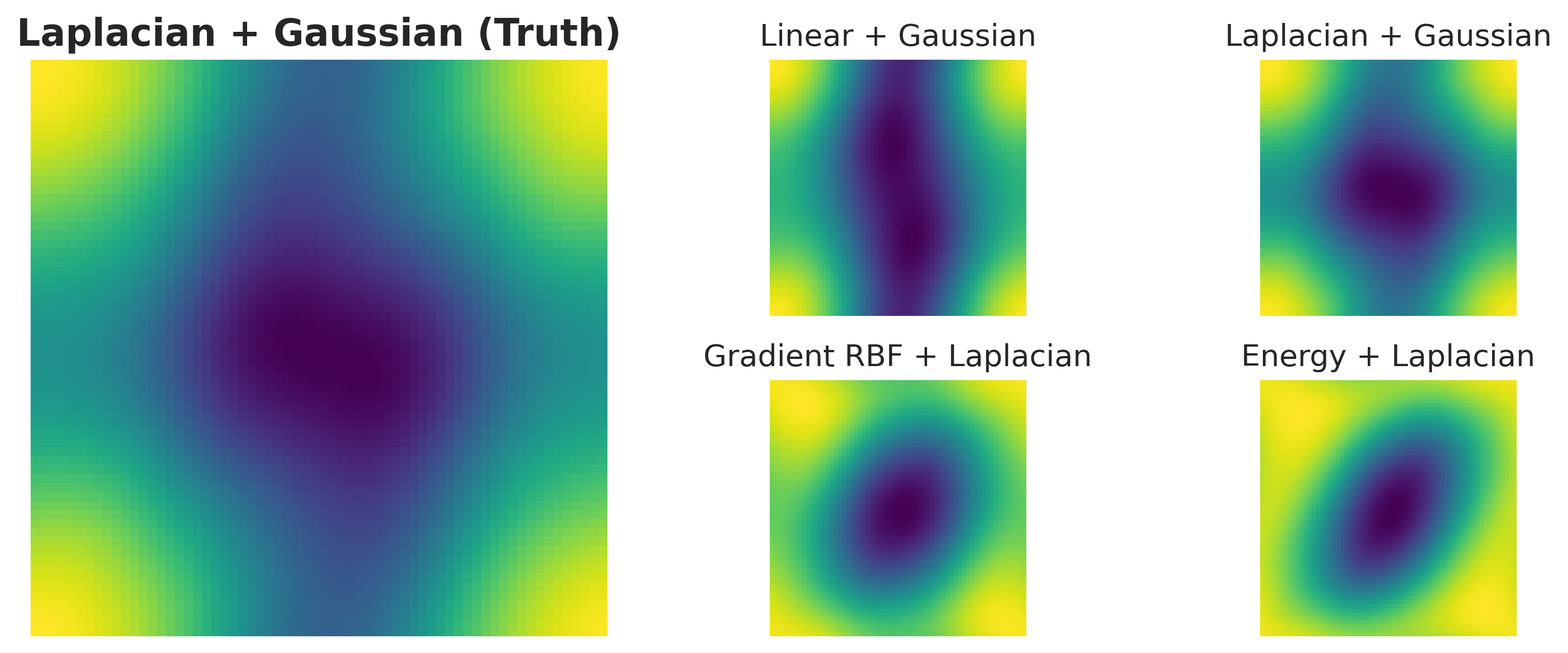}
  \caption{In-context predictions for $(k_{x}, k_{y})$ being (Linear, Gaussian), (Laplacian, Gaussian), (Gradient RBF, Laplace), and (Energy, Laplace) with the data-generating kernel being (Laplacian, Gaussian).} 
\end{figure}

\begin{figure}[H]
  \centering 
  \includegraphics[width=0.8\textwidth]{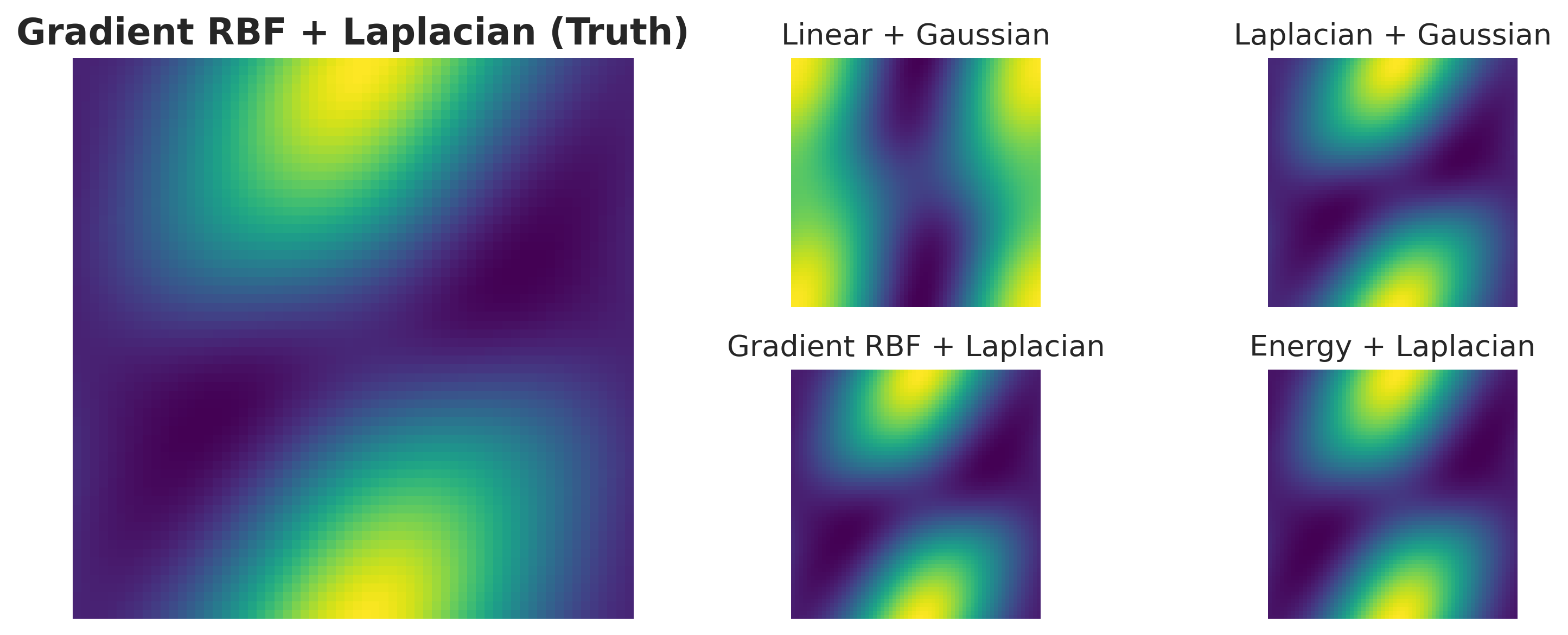}
  \caption{In-context predictions for $(k_{x}, k_{y})$ being (Linear, Gaussian), (Laplacian, Gaussian), (Gradient RBF, Laplace), and (Energy, Laplace) with the data-generating kernel being (Gradient RBF, Laplace).} 
\end{figure}

\begin{figure}[H]
  \centering 
  \includegraphics[width=0.8\textwidth]{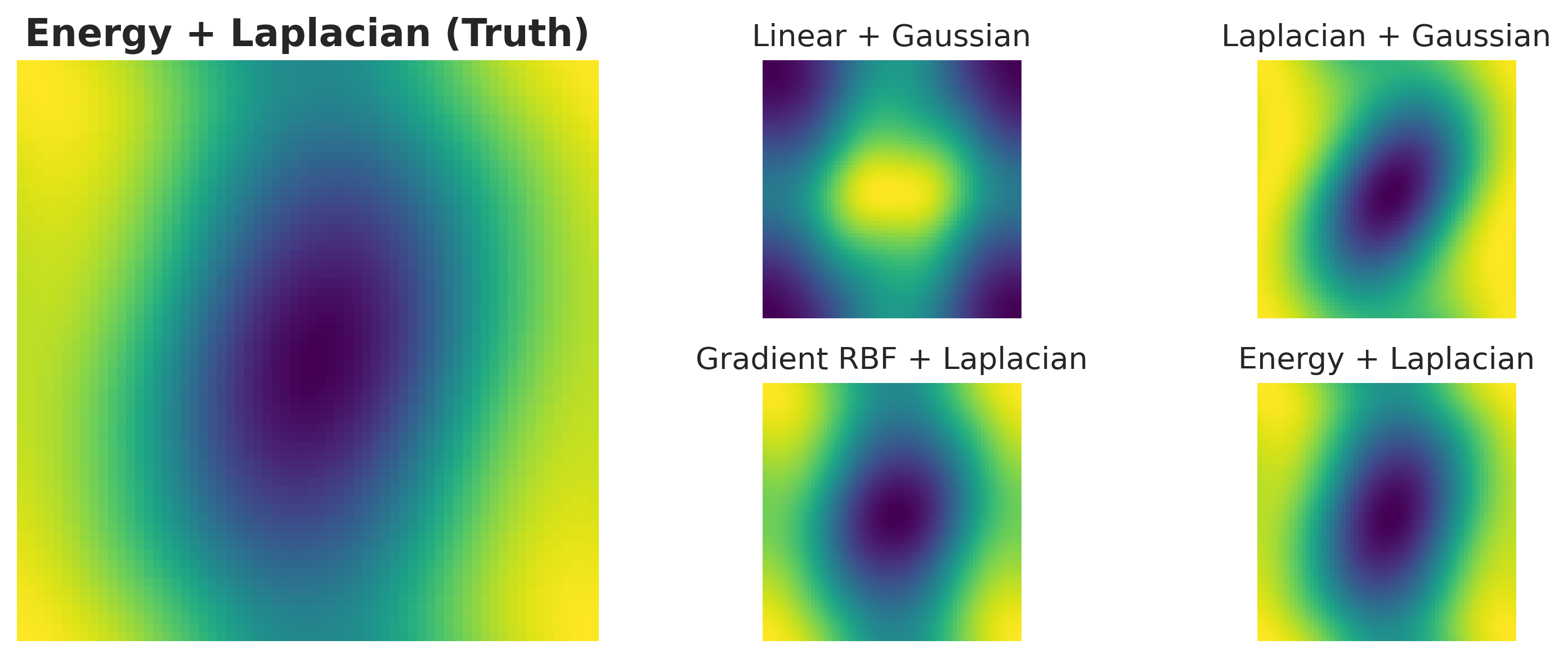}
  \caption{In-context predictions for $(k_{x}, k_{y})$ being (Linear, Gaussian), (Laplacian, Gaussian), (Gradient RBF, Laplace), and (Energy, Laplace) with the data-generating kernel being (Energy, Laplace).} 
\end{figure}

\section{Optimization Experiment Details}\label{section:opt_exp_details}
\subsection{Optimization Kernel Assumptions}\label{section:kernel_assump}
We now provide the proof that \Cref{assump4-main} holds for the chosen kernel under the particular choice of $k_{x}$ being Linear and $k_{y}$ being Gaussian, which follows trivially from properties of the inner product. In particular, for the assumed operator $S: \mathcal{X} \to \mathcal{X}$ with an inverse $S^{-1}$ referenced in \Cref{assump4-main}, notice
\begin{align*}
    [\tilde{H}(f^{(1)}, f^{(2)})u] 
    &:= k_{\mathrm{Linear}}(f^{(1)}, f^{(2)}) (k_{\mathrm{Gauss}} * u)\\
    &:= \langle f^{(1)}, f^{(2)} \rangle (k_{\mathrm{Gauss}} * u)\\
    \implies
    [\tilde{H}(S^* F_1, S^{-1}F_2)u]
    &= \langle S^* f^{(1)}, S^{-1}f^{(2)} \rangle (k_{\mathrm{Gauss}} * u)\\
    &= \langle f^{(1)}, (S^*)^{*} S^{-1}f^{(2)} \rangle (k_{\mathrm{Gauss}} * u) \\
    &= \langle f^{(1)}, S S^{-1}f^{(2)} \rangle (k_{\mathrm{Gauss}} * u) \\
    &= \langle f^{(1)}, f^{(2)} \rangle (k_{\mathrm{Gauss}} * u),
\end{align*}
completing the proof as desired.

\subsection{Optimization Experiment Hyperparameters}
The hyperparameters of the optimization experiment (results presented in \Cref{section:optimization_experiments}) are given in \Cref{table:opt_hyperparams}. The continuum transformer was implemented in PyTorch \cite{paszke2019pytorch}. Training a model required roughly 30 minutes to an hour using an Nvidia RTX 2080 Ti GPUs.

\begin{table}[H]
\centering
\caption{Hyperparameters used in the continuum transformer training experiment.}
\begin{tabular}{ll}
\toprule
\textbf{Category} & \textbf{Hyperparameter (Value)} \\
\midrule
\multirow{4}{*}{Model} 
    & Number of Layers: $250$ \\
    & Image Size: $64 \times 64$ \\
    & $k_x$ $\sigma$: $1.0$ \\
    & $r_{\ell}$ Initialization: $-0.01$ \\
\midrule
\multirow{3}{*}{Dataset} 
    & $k_y$ Kernel: \texttt{Gaussian} \\
    & \# In-Context Prompts: $25$ \\
    & \# Operator Bases: $30$ \\
\midrule
\multirow{5}{*}{Training} 
    & Optimizer: \texttt{Adam} \\
    & Learning Rate: $0.001$ \\
    & Epochs: $10$ \\
    & \# Samples: $128$ \\
    & Momentum: $0.0$ \\
\bottomrule
\end{tabular}
\label{table:opt_hyperparams}
\end{table}

\section{Notation Reference}\label{section:notation}
We provide below a comprehensive presentation of the notation from the paper.

\begin{table}[h]
\centering
\caption{Notation for the data}
\begin{tabularx}{\textwidth}{lX}
\hline
\textbf{Notation} & \textbf{Description} \\
\hline
$\mathcal{X}$ & Hilbert space in which the input and output functions lie. \\
$f^{(i)}, u^{(i)}$ & $i$-th pair of input and output functions. \\
$Z_{\ell}$ & A matrix in $\mathcal{X}^{2 \times (n+1)}$ whose first row consists of input functions $f^{(i)}; i = 1, \ldots, n+1$, and whose second row consists of output functions $u^{(1)}, \ldots, u^{(n)}, 0$. The zero is a placeholder for the predicted output of $f^{(n+1)}$ which will change as $Z_{\ell}$ passes through the transformer. \\
$X_{\ell}$ & The first row of $Z_{\ell}$. \\
$\mathcal{O}$ & Space of operators in which the true operator lies. This is assumed to be an RKHS. \\
$\mathbb{K}(F)$ & Covariance operator of the output functions $U$ conditioned on the input functions $F$. \\
\hline
\end{tabularx}
\end{table}

\begin{table}[h]
\centering
\caption{Notation for the transformer architecture}
\begin{tabularx}{\textwidth}{lX}
\hline
\textbf{Notation} & \textbf{Description} \\
\hline
$\mathcal{T}_{\ell}$ & The $\ell$-th layer of the transformer. \\
$\mathcal{W}_{k,\ell}, \mathcal{W}_{q,\ell}, \mathcal{W}_{v,\ell}$ & Key, query and value operators at the $\ell$-th layer respectively. \\
$\mathcal{M}$ & Mask operator. \\
$\tilde{H}(\cdot,\cdot)$ & Non-linear operator. \\
$r_{\ell}$ & Scalars parametrizing the value operator at layer $\ell$. \\
\hline
\end{tabularx}
\end{table}

\begin{table}[h]
\centering
\caption{Notation for ICL, gradient descent, and the RKHS framework}
\begin{tabularx}{\textwidth}{lX}
\hline
\textbf{Notation} & \textbf{Description} \\
\hline
$L(\cdot)$ & In-context loss. \\
$O_{\ell}$ & Operator at the $\ell$-th gradient descent step. \\
$\kappa$ & Operator-valued kernel which takes an element in $\mathcal{X} \times \mathcal{X}$ to a bounded linear operator on $\mathcal{X}$. \\
\hline
\end{tabularx}
\end{table}


\end{document}